%% file: main.tex
\documentclass[10pt,twocolumn,letterpaper]{article}

\usepackage[pagenumbers]{cvpr} 

\input{preamble}

\definecolor{cvprblue}{rgb}{0.21,0.49,0.74}
\usepackage[pagebackref,breaklinks,colorlinks,allcolors=cvprblue]{hyperref}
\usepackage{multirow}
\usepackage{algorithm}
\RequirePackage{algpseudocode}
\RequirePackage{natbib}
\RequirePackage{mathtools}
\usepackage{cuted}

\def\confName{CVPR}
\def\confYear{2026}

\title{OrthoFuse: Training-free Riemannian Fusion of Orthogonal Style-Concept Adapters for Diffusion Models}

\author{Ali Aliev \footnotemark[1] \textsuperscript{,1,}\footnotemark[2]
\quad Kamil Garifullin \footnotemark[1] \textsuperscript{,1,2,3,}\footnotemark[2] 
\quad Nikolay Yudin \textsuperscript{1} 
\quad  Vera Soboleva \textsuperscript{1,2,3} \\
\quad Alexander Molozhavenko \textsuperscript{1} 
\quad Ivan Oseledets \textsuperscript{3} 
\quad  Aibek Alanov \textsuperscript{1,2,3}
\quad  Maxim Rakhuba \textsuperscript{1} \\
${^1}$HSE University, 
${^2}$FusionBrain Lab, ${^3}$AXXX 
}

\begin{document}

\twocolumn[{
\renewcommand\twocolumn[1][]{#1}
\maketitle
\begin{center}
    \includegraphics[width=0.96\linewidth]{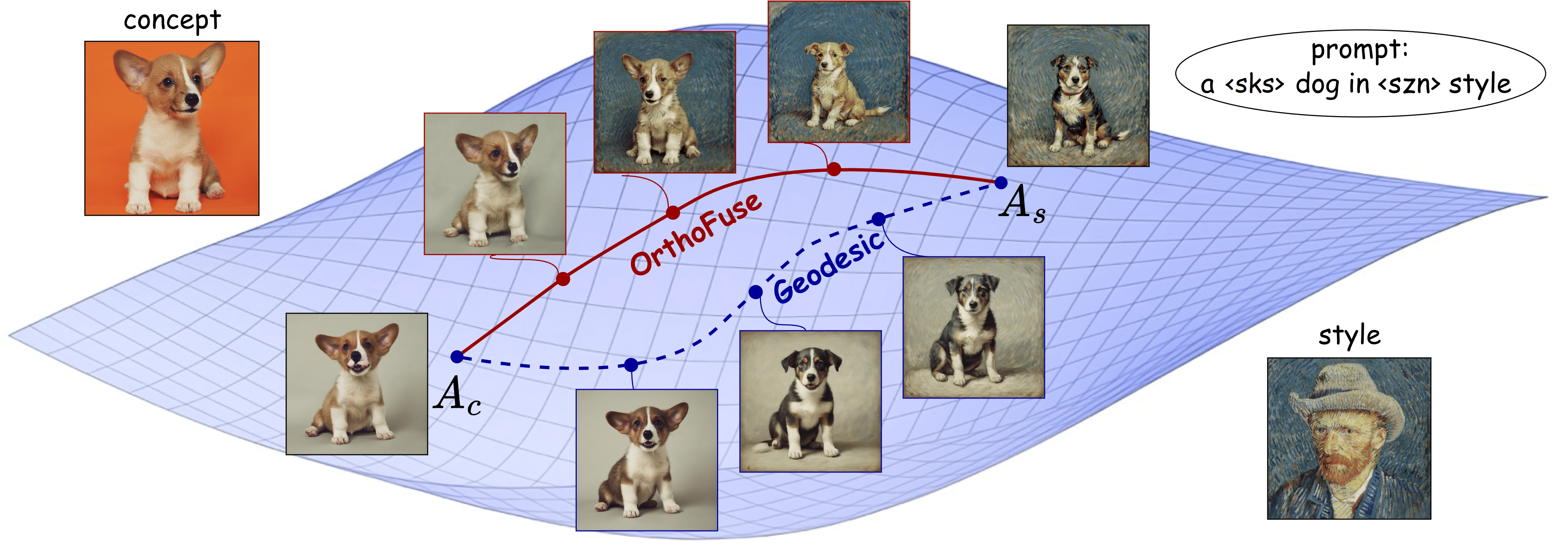}
 
    \captionsetup{type=figure}
    \caption{%
        {Overview of the proposed method \textbf{OrthoFuse}. By considering \lpr~orthogonal adapters as elements of the manifold, we are able to draw curves between them to fuse adapters into one with common features of object and style in particular proportion. To facilitate the generation quality, we analyze the spectrum of the orthogonal blocks inside \lpr ~representation and propose a specific curve on the manifold, aiming to preserve the distribution of blocks' eigenvalues.} 
    }
    \label{fig:visual_abs}
\end{center}
}]
\footnotetext[1]{Equal contribution.}
\footnotetext[2]{Correspondence: alievali0278@gmail.com, garifullin.k@hse.ru}
\input{sec/0_abstract}    
\input{sec/1_intro}
\input{sec/2_related_work}
\input{sec/3_preliminaries}

\input{sec/4_method}
\input{sec/5_experiments}
\input{sec/6_conclusion}
{
    \small
    \bibliographystyle{ieeenat_fullname}
    \bibliography{biblio}
}
\input{sec/X_suppl}

\end{document}

%% file: preamble.tex
\def\lpr{$\mathcal{GS}$}

\def\diag{\operatorname{diag}}

\newtheorem{definition}{Definition}[section]
\newtheorem{theorem}{Theorem}

\newtheorem{remark}{Remark}

\newtheorem{lemma}{Lemma}

\newtheorem{prop}{Proposition}

\usepackage{pgfplots}
\pgfplotsset{compat=1.18}

\usepackage{tikz}
\usetikzlibrary{3d, decorations.markings}

\tikzset{
  decoA/.style={
    decoration={
      markings,
      mark=at position 0.2 with {
        \coordinate (P1) at (0,0);
        \fill (0,0) circle (1pt);
        \node[anchor=north east] at (0,0) {\includegraphics[width=8mm]{imgs/cat1.jpg}};
      },
      mark=at position 0.4 with {
        \coordinate (P2) at (0,0);
        \fill (0,0) circle (1pt);
        \node[anchor=north east] at (0,0) {\includegraphics[width=8mm]{imgs/cat1.jpg}};
      },
      mark=at position 0.6 with {
        \coordinate (P3) at (0,0);
        \fill (0,0) circle (1pt);
        \node[anchor=north east] at (0,0) {\includegraphics[width=8mm]{imgs/cat1.jpg}};
      },
      mark=at position 0.8 with {
        \coordinate (P4) at (0,0);
        \fill (0,0) circle (1pt);
        \node[anchor=north east] at (0,0) {\includegraphics[width=8mm]{imgs/cat1.jpg}};
      }
    },
    postaction={decorate}
  },
  decoB/.style={
    decoration={
        markings,
          mark=at position 0.2 with {
            \coordinate (P5) at (0,0);
            \fill (0,0) circle (1pt);
            \node[anchor=south west] at (0,0) {\includegraphics[width=8mm]{imgs/cat2.jpg}};
          },
          mark=at position 0.4 with {
            \coordinate (P6) at (0,0);
            \fill (0,0) circle (1pt);
            \node[anchor=south west] at (0,0) {\includegraphics[width=8mm]{imgs/cat2.jpg}};
          },
          mark=at position 0.6 with {
            \coordinate (P7) at (0,0);
            \fill (0,0) circle (1pt);
            \node[anchor=south west] at (0,0) {\includegraphics[width=8mm]{imgs/cat2.jpg}};
          },
          mark=at position 0.8 with {
            \coordinate (P8) at (0,0);
            \fill (0,0) circle (1pt);
            \node[anchor=south west] at (0,0) {\includegraphics[width=8mm]{imgs/cat2.jpg}};
          }
        },
        postaction={decorate}
    }
}

\usepackage{tikz-cd}

%% file: sec/0_abstract.tex
\begin{abstract}
In a rapidly growing field of model training there is a constant practical interest in parameter-efficient fine-tuning and various techniques that use a small amount of training data to adapt the model to a narrow task. However, there is an open question: how to combine several adapters tuned for different tasks into one which is able to yield adequate results on both tasks? Specifically, merging subject and style adapters for generative models remains unresolved. In this paper we seek to show that in the case of orthogonal fine-tuning (OFT), we can use structured orthogonal parametrization and its geometric properties to get the formulas for training-free adapter merging. In particular, we derive the structure of the manifold formed by the recently proposed Group-and-Shuffle ($\mathcal{GS}$) orthogonal matrices, and obtain efficient formulas for the geodesics approximation between two points. Additionally, we propose a \emph{spectra restoration} transform that restores spectral properties of the merged adapter for higher-quality fusion. We conduct experiments in subject-driven generation tasks showing that our technique to merge two $\mathcal{GS}$ orthogonal matrices is capable of uniting concept and style features of different adapters. To the best of our knowledge, this is the first training-free method for merging multiplicative orthogonal adapters. Code is available via the link: \url{https://github.com/ControlGenAI/OrthoFuse}.
\end{abstract}

%% file: sec/1_intro.tex
\section{Introduction}
\label{sec:intro}
The impressive generative abilities of diffusion models~\citep{stablediffusion,flux2024} have fueled increasing interest in subject-driven generation~\citep{DB,TI,CD} and stylization tasks~\citep{styledrop}. Typically, such tasks are addressed by fine-tuning pre-trained models using a small dataset corresponding to a specific concept, which can be a particular object, individual, or artistic style. 

While current approaches made it possible to handle subject- and style-oriented generation independently, a critical challenge remains unresolved: generating images combining a user-defined subject with a user-defined artistic style. However, addressing this problem would significantly expand the capabilities of generative models and decrease training costs, offering a new level of control and creative freedom to users.

Despite the promising advancements in merging LoRA~\citep{lora} adapters, current research has yet to explore methods for combining multiplicative orthogonal adapters, recently introduced in \citep{qiu2023controlling}. At the same time, orthogonal adapters have been shown to deliver more stable training dynamics and reduce the risk of overfitting, making them a particularly appealing option for personalization and stylization tasks. 
Another notable benefit of orthogonal fine-tuning is that, compared to LoRA, multiplicative orthogonal adapters are able to preserve the spectral and Frobenius norm of the layer by design, which is hard to achieve in the case of LoRA. This unique property enables the seamless fusion of orthogonal adapters without any concern regarding differences in their magnitudes. However, the absence of studies focusing on the fusion of orthogonal adapters represents a notable gap in this domain, presenting a promising research direction for advancing more robust and effective subject-style synthesis in generative models.

In this work, we introduce a novel training-free approach for merging orthogonal adapters built upon the foundations of \lpr-orthogonal parametrization proposed in~\citep{gorbunov2024group}. By studying the properties of this matrix class, we demonstrate that they form a Riemannian manifold. This structure allows us to approximate geodesics between two orthogonal adapters and explore other meaningful curves on the manifold, ultimately facilitating the construction of an optimally merged adapter.

Additionally, we examine the spectral properties of the resulting merged adapters and establish a strong correlation between the spectral distribution and the merging quality. A naive geodesic approach tends to produce a more ``compressed-to-one'' spectral distribution, which diminishes the expressiveness of the outputs by limiting both concept representation and style preservation. 
To overcome this issue, we propose applying the Cayley transform to the resulting curve, thereby restoring the spectral distribution and achieving a superior, optimized merge and generation quality (Figure~\ref{fig:visual_abs}).
Overall our key contributions are as follows:
\begin{itemize}
    \item We investigate the set of \lpr-orthogonal matrices and show this set forms a Riemannian manifold.
    \item Using these theoretical insights, we propose a method to approximate geodesics between \lpr-orthogonal matrices.
    \item Through extensive experiments, we show that our training-free approach effectively fuses parameter-efficient orthogonal adapters, successfully combining both style and concept patterns. Our method outperforms existing state-of-the-art LoRA-based approaches as well as straightforward joint orthogonal fine-tuning.
    \item To the best of our knowledge, we are the first to propose a training-free merging method specifically designed for orthogonal multiplicative adapters.
\end{itemize}

%% file: sec/2_related_work.tex
\section{Related Work}
\label{sec:related_work}

\paragraph{LoRA vs Orthogonal Fine-tuning.}
LoRA-based methods~\cite{lora,dora,tlora} have become the standard approach for parameter-efficient fine-tuning. However, prior work~\cite{qiu2023controlling} shows that additive low-rank updates may distort neuron relationships, which are important for preserving generative semantics in diffusion models. To address this, orthogonal fine-tuning methods introduce multiplicative updates that preserve weight geometry via orthogonal parametrizations. 

While the original block-diagonal construction~\citep{qiu2023controlling} is efficient, it limits interactions across parameter groups. To improve expressiveness,~\citep{gorbunov2024group} propose combining multiple orthogonal blocks with permutations, enabling richer transformations while retaining computational efficiency.

\paragraph{Adapter Merging.}
Merging independently trained adapters is commonly studied in the context of LoRA. Early approaches rely on simple weighted averaging~\citep{Ryu_lora}, while more recent methods introduce either learnable or structured composition strategies. 

Training-based approaches such as MoLe~\citep{wu2024mixture} and ZipLoRA~\citep{ziplora} learn how to combine multiple LoRA adapters via additional optimization, using layer-wise gating (MoLe) or fine-grained column-wise mixing coefficients (ZipLoRA). B-LoRA~\citep{blora} leverages architectural modularity by selectively training specific components (e.g., attention layers) to better separate and combine content and style. In contrast, training-free methods such as K-LoRA~\citep{klora} select adapters based on weight statistics, avoiding retraining but remaining sensitive to scale inconsistencies between independently trained LoRAs.

\paragraph{Test-time and Representation-level Methods.}
Training-free personalization can also be achieved by modifying the generation process at inference time. For instance, RB-Modulation~\citep{rout2024rb} steers reverse diffusion dynamics using reference-based objectives, without relying on adapter parametrizations or explicit adapter merging. In contrast, our work focuses on combining pre-trained adapters directly in parameter space via closed-form fusion.

Subject--style separation is often approached via feature-level control mechanisms (e.g., StyleAligned~\citep{hertz2024style}, StyleDrop~\citep{sohn2023styledrop}), which guide generation by modulating internal representations. In contrast, OrthoFuse exploits the geometric structure of orthogonal weight manifolds, enabling composition directly in parameter space and avoiding scale inconsistencies of LoRA-based methods.

Our method is also related to optimization on matrix manifolds~\cite{GHL, AbsMahSep2008}; however, applying Riemannian geometry to merge orthogonal adapters for diffusion models remains largely unexplored.

%% file: sec/3_preliminaries.tex
\section{Preliminaries}
\label{sec:preliminaries}
\subsection{Diffusion Model Fine-tuning}
Fine-tuning based personalized image generation is a process of adapting the model's weights to generate user-defined concept. To link a new concept to a unique text token and class name, for example ``sks dog'', the model $\varepsilon_{\theta}$ undergoes fine-tuning on a limited dataset of concept images $\mathrm{C} = \{x\}_{i=1}^N$, optimizing the following objective:
\begin{equation}\label{eq:finetuning} \min_\theta \mathbb{E}_{p, t, z=\mathcal{E}(x), x \in \mathrm{C}, \varepsilon}\left[\left\|\varepsilon - \varepsilon_\theta(t, z_t, p)\right\|_2^2\right], \end{equation} 
where $\mathcal{E}$ denotes the encoder that maps an image $x$ to its latent representation $z=\mathcal{E}(x)$.

In the case of orthogonal fine-tuning~\cite{gorbunov2024group}, we exclusively optimize the multiplicative adapter $A$, which is \lpr-orthogonal matrix, while keeping the original model weights $W$ fixed. As a result, the updated weights are defined as: $W^* = A W$.

\label{sec:fine-tuning}

\subsection{\texorpdfstring{\(\mathcal{G}\mathcal{S}\)}{GS}-orthogonal matrices}
\label{sec:gs}

Let us start with a definition of the \lpr-orthogonal matrix, which we use as the main building block of our orthogonal adapter in our work. We introduce a simplified definition that is used in our experiments, for the more general case see~\citep{gorbunov2024group}. 
Recall also that a square matrix $A$ is called orthogonal if it satisfies $A^\top A = I$.

\begin{definition}[\citep{gorbunov2024group}]\label{def:gs_orthogonal}
An $n \times n$ orthogonal matrix $A$ is called \lpr$(P_L, P, P_R)$-orthogonal matrix with block size $b \times b$ if it can be represented in the following form:
\begin{equation}\label{eq:gs_definition}
    A = P_L L P R P_R,
\end{equation}
where $L = \diag(L_1, L_2, \dots, L_{\frac{n}{b}})$ and $R = \diag(R_1, R_2, \dots, R_{\frac{n}{b}})$ are block-diagonal matrices with $L_i, R_i \in \mathbb{R}^{b\times b}$, and $P_L, P, P_R$ are permutation matrices.
\end{definition}

\cref{th:gs_ort} shows that without the loss of generality we may assume that the diagonal blocks of \( L \) and \( R \) are set to be orthogonal.

\begin{theorem}[\citep{gorbunov2024group}]\label{th:gs_ort}
    Let $A$ be any orthogonal matrix from \lpr$(P_L, P, P_R)$.
    Then, $A$ admits $P_L(LPR)P_R$ representation with the matrices $L, R$ consisting of orthogonal blocks. 
\end{theorem}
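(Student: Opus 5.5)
The plan is to show that the factors $L$ and $R$ can be ``re-balanced'' by a symmetric positive definite matrix that passes through the permutation $P$. The rebalancing will make every diagonal block orthogonal without changing the product $LPR$.

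First, since $P_L,P_R$ are permutation matrices and hence orthogonal, $M:=LPR = P_L^\top A P_R^\top$ is orthogonal. In particular $M$ is invertible, so both $L$ and $R$ are invertible. Hence each block $L_i$ and $R_i$ is invertible. Writing out $M^\top M = I$ gives
\begin{equation*}
R^\top P^\top (L^\top L) P R = I
\quad\Longrightarrow\quad
P^\top G P = (R R^\top)^{-1}, \qquad G := L^\top L = \diag(L_1^\top L_1,\dots,L_{n/b}^\top L_{n/b}).
\end{equation*}
Both $G$ and $(RR^\top)^{-1}$ are symmetric positive definite and block-diagonal with $b\times b$ blocks. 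The identity says that conjugating $G$ by $P$ yields another $b$-block-diagonal SPD matrix.

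Next I would take principal square roots. Since the SPD square root is unique, it commutes with orthogonal conjugation, so $(P^\top G P)^{1/2} = P^\top G^{1/2} P$. Uniqueness also shows that the square root of a block-diagonal SPD matrix is block-diagonal with the blockwise roots. Therefore
\begin{equation*}
G^{1/2} = \diag\bigl((L_i^\top L_i)^{1/2}\bigr), \qquad P^\top G^{1/2} P = (RR^\top)^{-1/2} = \diag\bigl((R_jR_j^\top)^{-1/2}\bigr).
\end{equation*}
This is the step I expect to be the crux. The matrix $P$ mixes blocks, so the point to verify carefully is that $P^\top G^{1/2}P$ inherits the $b$-block-diagonal structure. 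Uniqueness of the SPD root applied to both sides of $P^\top G P=(RR^\top)^{-1}$ settles it.

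Finally define
\begin{equation*}
L' := L G^{-1/2}, \qquad R' := P^\top G^{1/2} P\, R .
\end{equation*}
Both are block-diagonal with $b\times b$ blocks, namely $L_i (L_i^\top L_i)^{-1/2}$ and $(R_jR_j^\top)^{-1/2}R_j$. Each of these blocks is orthogonal, being the orthogonal polar factor of an invertible matrix; a direct check confirms this, e.g. $(L_i (L_i^\top L_i)^{-1/2})^\top L_i (L_i^\top L_i)^{-1/2} = I$. Moreover $L' P R' = L G^{-1/2} P P^\top G^{1/2} P R = LPR$. Hence $A = P_L (L'PR') P_R$ with $L',R'$ consisting of orthogonal blocks, as claimed in \cref{th:gs_ort}.
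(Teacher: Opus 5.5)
There is no in-paper proof to compare against: the paper quotes Theorem~\ref{th:gs_ort} from \citep{gorbunov2024group} and uses it without proof. Judged on its own, your argument is correct and complete. Invertibility of $L$ and $R$ follows from that of $M=LPR$, and the identity $P^\top(L^\top L)P=(RR^\top)^{-1}$ is right. The step you flagged as the crux is handled properly: uniqueness of the SPD square root gives $P^\top (L^\top L)^{1/2}P=(RR^\top)^{-1/2}$, which is block-diagonal. Hence $L'=L(L^\top L)^{-1/2}$ and $R'=(RR^\top)^{-1/2}R$ are block-diagonal with polar-factor, and therefore orthogonal, blocks, and $L'PR'=LPR$.

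Your proof uses very little. It needs only $PP^\top=P^\top P=I$, not that $P$ is a permutation, let alone the perfect shuffle. It also needs only that the blocks are square, with no requirement that $L$ and $R$ share a block size. So the same argument shows that restricting to orthogonal blocks in the more general class $\mathcal{A}^{\text{orth}}_2$ of Appendix~\ref{sec:smoothness_proof} (heterogeneous block sizes, arbitrary orthogonal middle factor) loses no generality. With conjugate transposes in place of transposes, it also covers the unitary variant mentioned in the remark there.

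Your blocks lie in $\mathrm{O}(b)$, not necessarily in $\mathrm{SO}(b)$. That is exactly what the statement claims, and it cannot be improved in general. For $P_L=P^\top$ and $P_R=I$ one has $\det A=\det L\,\det R$, so an orthogonal $\mathcal{GS}$ matrix with $\det A=-1$ must have some block of determinant $-1$ in every representation.
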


A certain choice of the permutation matrices $P_L, P, P_R$ depends on the application. 
For example, we set \( P_R = I \) and \( P_L = P^\top \), which ensures that the identity matrix \( P_L I P I P_R = I \) belongs to our class --- a property vital for initialization.
In this paper $P$ is chosen to be the so-called \emph{perfect shuffle}~\citep{GL,dao2022monarch}, which maximizes the number of non-zero elements in the matrix~\citep{gorbunov2024group}. 
Following \citep{dao2022monarch}, we denote perfect shuffle permutations as $P_{(b, n)}$. Applying permutation $P_{(b, n)}$ can be interpreted as the following procedure: firstly, it reshapes a length-$n$ vector into a $b \times \frac{n}{b}$ matrix in row-major order. Then, this matrix is transposed and flattened back into a vector (again in row-major order).
For simplicity of notation, we replace \lpr$(P_{(b, n)}^\top, P_{(b, n)}, I)$ with \lpr~where it is unambiguous. 
We also note that this representation resembles Monarch matrices~\citep{dao2022monarch}, which, however, did not consider orthogonality constraints.

In this paper, we maintain block orthogonality during fine-tuning via the Cayley transform: for every block $B$ of block-diagonal $L$ or $R$ the operation
\[
    B = (I-K)^{-1} (I+K), \quad K^\top = - K.
\]
yields an orthogonal matrix from the special orthogonal group $\mathrm{SO}(N)$ of orthogonal matrices with the determinant equal to $1$.

\subsection{Riemannian geometry}

An \(n\)-dimensional topological manifold \(\mathcal{M}\) is a space that is locally homeomorphic to \(\mathbb{R}^n\). It means that for every point \(p \in \mathcal{M}\), there exists an open neighborhood \(U \ni p\) and a homeomorphism \(\varphi: U \to V\), where \(V \subset \mathbb{R}^n\) is open; the pair \((U, \varphi)\) is called a chart.

A smooth manifold is a topological manifold equipped with an atlas (a collection of charts covering \(\mathcal{M}\)) such that all transition maps \(\psi \circ \varphi^{-1}\) between overlapping charts are smooth. This additional structure allows for defining smooth functions, curves, and maps on the manifold.

A Riemannian manifold \((\mathcal{M}, g)\) is a smooth manifold endowed with a Riemannian metric \(g\), which is a smoothly varying family of inner products \(g_p\) on each tangent space (denoted \(T_p \mathcal{M}\)). This metric makes it possible to measure lengths of tangent vectors, angles between tangent vectors, and lengths of curves. By integrating the length of curves, one obtains an intrinsic distance function \(d_g: \mathcal{M} \times \mathcal{M} \to \mathbb{R}\), turning \(\mathcal{M}\) into a metric space.

On a Riemannian manifold, geodesics generalize the notion of straight lines from Euclidean geometry. A geodesic is a curve \(\gamma: (-\varepsilon, \varepsilon) \to \mathcal{M}\) that is locally length-minimizing: for any two sufficiently close points on \(\gamma\), the curve realizes the shortest path between them, as measured by \(d_g\).

A Lie group is a smooth manifold that is also a group, in which the group operations of multiplication and inversion are smooth maps.
For more details see \cite{Lee, AbsMahSep2008, Vinberg_eng}.

It is also a well-known fact (see, e.g., \cite{Lee}) that some matrix classes, like orthogonal group \( \mathrm{O}(N) \), special orthogonal group 
\( \mathrm{SO}(N) = \{ A \in \mathrm{O}(N) \mid \det(A) = 1 \} \subseteq \mathbb{R}^{N\times N}\) and the set of fixed-rank matrices are smooth manifolds. This allows for leveraging their geometric properties in optimization tasks. For instance, one can interpret neural network weights as points on a manifold and apply Riemannian optimization methods~\cite{AbsMahSep2008} for efficient training.

%% file: sec/4_method.tex
\section{Method}
\label{sec:method}
\begin{figure*}[!h]
  \centering    \includegraphics[width=0.75\linewidth]{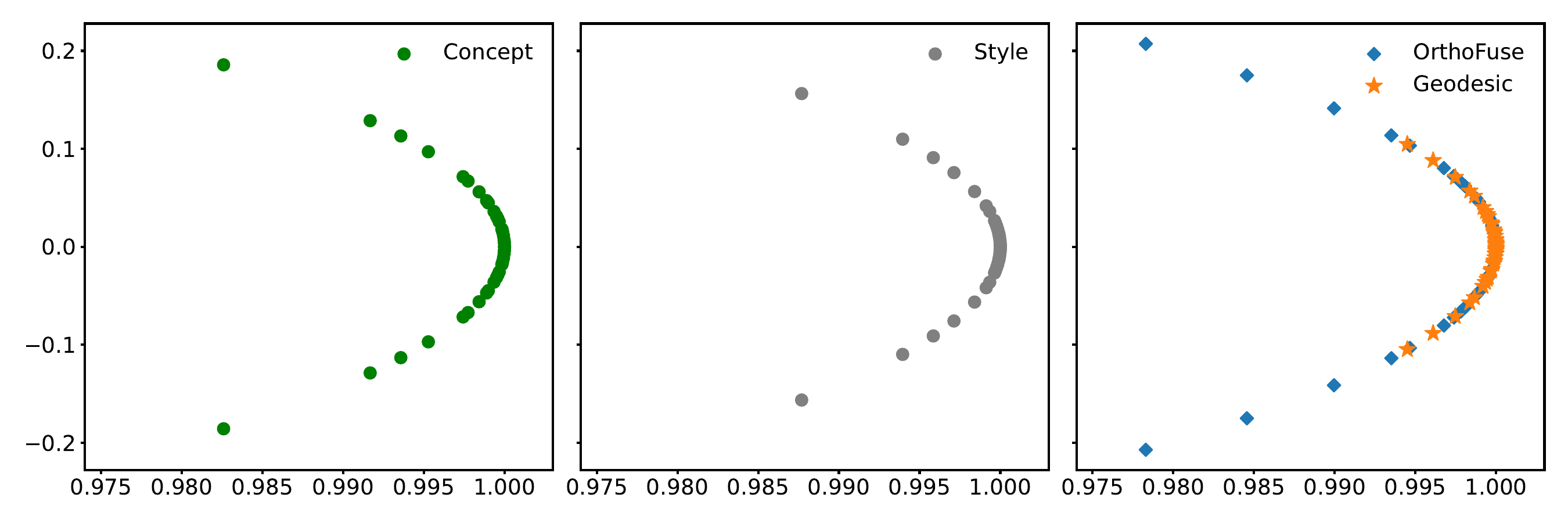}
    \caption{Distribution of the eigenvalues of orthogonal fine-tuning adapters before and after merging. (Left): eigenvalues of $L_C^1$. (Center): eigenvalues of $L_S^1$. (Right): eigenvalues of $B(0.5)$ (orange) and of $B_{\text{OrthoFuse}}(0.5)$ (blue). Eigenvalues are calculated for the query \lpr-orthogonal adapter of StableDiffusion-XL model.}
\label{fig:spectrum_compression}
\end{figure*}

In this work, we aim to merge two \lpr-orthogonal adapters so that we obtain an adapter from the same structured class.
To achieve this without additional training, we establish the geometry of  \lpr-orthogonal matrices class.
The following theorem allows us to treat $\mathcal{GS}$-orthogonal matrices as manifold elements, which is our key theoretical result.

\begin{theorem}\label{th:gs_smooth_manifold}
     The set of \lpr\((P_L, P, P_R)\)-orthogonal matrices forms a smooth manifold.
\end{theorem}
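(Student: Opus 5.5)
The plan is to realize the set of \lpr$(P_L,P,P_R)$-orthogonal matrices as a single orbit of a compact Lie group acting smoothly on $\mathrm{O}(n)$. Then I will invoke the standard fact that orbits of proper (in particular, compact) smooth Lie group actions are properly embedded submanifolds (see, e.g., \cite{Lee}).

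First, by \cref{th:gs_ort}, every orthogonal $A\in$ \lpr$(P_L,P,P_R)$ can be written as $A=P_L L P R P_R$ with $L=\diag(L_1,\dots,L_{n/b})$ and $R=\diag(R_1,\dots,R_{n/b})$, where every block $L_i,R_i\in\mathrm{O}(b)$. Conversely, any such product is orthogonal and lies in the class. Hence the set in question equals the image of the map
\[
\Phi:\ \mathrm{O}(b)^{n/b}\times\mathrm{O}(b)^{n/b}\to\mathrm{O}(n),\qquad (L,R)\mapsto P_L L P R P_R .
\]
Next, I will rewrite $\Phi$ as an orbit map. Set $X_0=P_LPP_R$, so that
\[
P_L L P R P_R=(P_L L P_L^\top)\,X_0\,(P_R^\top R P_R).
\]
Let $G_L=\{P_L L P_L^\top\}$ and $G_R=\{P_R^\top R P_R\}$, where $L,R$ range over block-diagonal matrices with $\mathrm{O}(b)$ blocks. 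Each of these is a closed subgroup of $\mathrm{O}(n)$, since it is conjugate to $\mathrm{O}(b)^{n/b}$. Hence $G=G_L\times G_R$ is a compact Lie group, and it acts smoothly on $\mathrm{O}(n)$ via $(g,h)\cdot X=gXh^{-1}$. Because $R\mapsto R^{-1}=R^\top$ is a bijection of the block-diagonal orthogonal group, the orbit $G\cdot X_0$ is exactly the image of $\Phi$, i.e., our set.

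Finally, I will apply the orbit theorem. The action is proper because $G$ is compact. The stabilizer $G_{X_0}$ is therefore a closed subgroup, so $G/G_{X_0}$ is a smooth manifold, and the orbit map induces an injective immersion $G/G_{X_0}\to\mathrm{O}(n)\subset\mathbb{R}^{n\times n}$. Since $G/G_{X_0}$ is compact, this injective immersion is an embedding. So the set is a compact embedded submanifold of $\mathbb{R}^{n\times n}$, in particular a smooth manifold, and it inherits a Riemannian metric from the Frobenius inner product.

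I expect the main obstacle to be the bookkeeping of the second step rather than any deep difficulty. One must make sure the orbit description captures exactly the set in the statement: this relies on \cref{th:gs_ort} to discard non-orthogonal blocks, and on $\mathrm{O}(b)$ (not merely $\mathrm{SO}(b)$) being used, so that the set is one orbit rather than a union of several. As a cross-check, I would verify the constant-rank property directly: the differential of $\Phi$ at $(L,R)$ is
\[
(K_L,K_R)\mapsto P_L(LK_L)PRP_R+P_LLP(RK_R)P_R
\]
for block-skew $K_L,K_R$, and by left/right translation its rank is independent of the base point, consistent with the equivariant rank theorem.
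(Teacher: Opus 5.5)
Your proposal is correct and follows essentially the paper's argument. The paper also writes the set as an orbit of the compact group $\mathcal{B}_2\times\mathcal{B}_1$ of block-diagonal orthogonal matrices acting by $x\mapsto B_2xB_1^\top$, and invokes the orbit theorem for compact Lie group actions. The only difference is bookkeeping: the paper takes the orbit of $P$ under the unconjugated groups and then multiplies by the fixed permutations $P_L,P_R$ (a diffeomorphism), whereas you absorb $P_L,P_R$ into conjugated groups and take the orbit of $P_LPP_R$, stating explicitly the use of Theorem~\ref{th:gs_ort} and the compactness argument for embeddedness, which the paper leaves implicit or cites.
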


\begin{proof}
   See Appendix \ref{sec:smoothness_proof}.
\end{proof}

\cref{th:gs_smooth_manifold} provides a way to connect manifold elements via interpretable curves. 
We show that, with the right choice of the curve, we can obtain a gradual transfer between adapters, allowing us to mix concept and style in the desired proportion. This curve will serve as the reasonable approximation of the local minimizing geodesics between two points.
Now, assume that we have two \lpr-orthogonal matrices:
\begin{equation} \label{eq:adapters_general_form}
A_C = P^\top L_C P R_C, \quad A_S = P^\top L_S P R_S,
\end{equation}
where $A_C$ and $A_S$ are weight update matrices trained on a certain concept and style respectively.

One may assume that the local minimizing geodesics between two $\mathcal{GS}$-orthogonal matrices is just a block-wise interpolation between the corresponding diagonal blocks of $(L_C, L_S)$ and $(R_C, R_S)$. However, the $\mathcal{GS}$-orthogonal manifold exhibits a more complicated structure, and the local minimizing geodesics between two manifold points is resource-intensive to calculate.
Fortunately, orthogonal fine-tuning yields matrices whose diagonal blocks lie close to the identity matrix. 
This empirical observation, which was also reported in \citep{qiu2023controlling}, allows us to conclude that block-wise geodesics interpolation precisely approximates the exact local minimizing geodesics on the \lpr-orthogonal manifold. See Appendix~\ref{sec:merging_proof} for more details.

Now, let us see the procedure of connecting blocks in more detail. On $\mathrm{SO}(n)$, a geodesic starting at $B_C$ is $B(t) = B_C \exp(t\Omega)$ with $\Omega$ skew-symmetric (see \cite{boumal2023optimization, AbsMahSep2008, EAS98, Lee_math_stack_exchange}). To make the geodesic reach $B_S$ at $t=1$ for an arbitrary pair of corresponding blocks $B_C, B_S \in \mathrm{SO}(n)$, we set $\Omega = -\log(B_S^{\top}B_C)$. Thus we obtain the well‑known formula
\begin{equation}\label{eq:lmg}
    B(t) = B_C\exp(-t \cdot \log(B_S^{\top}B_C)),
\end{equation}
where $t \in [0, 1]$ and $\exp$ and $\log$ denote matrix exponent and matrix logarithm functions respectively.

In practice, $B(t)$ can be computed using the eigendecomposition $B_S^{\top}B_C = U\Lambda U^*$ (since orthogonal matrix is always diagonalizable):
\begin{equation}
\label{eq:geo}
       B(t) = B_C U \exp(-t \cdot \log(\Lambda)) U^*, 
\end{equation}
which reduces to matrix functions that are separately applied to eigenvalues, and matrix multiplications that are highly efficient on GPUs.

The fact that we apply our fusing operation for $A_C$ and $A_S$ block-wise plays a vital role in the resulting algorithm efficiency due to cubical time scaling of eigendecomposition.

We empirically observe that merging orthogonal adapters draws the eigenvalues of the resulting matrix closer to $1$ compared to the original components (see Figure~\ref{fig:spectrum_compression}). Since these eigenvalues control the orthogonal adapter’s rotation power, their convergence toward unity weakens the intended stylistic modifications, making it closer to an identity layer transform. To address this attenuation of style and to proactively enhance the adapter's effect, we propose a \emph{spectra restoration} procedure.

A candidate for \emph{spectra restoration} is a rotation of the eigenvalues on the complex unit circle. This operation corresponds to multiplying the complex phase of each eigenvalue by a scalar factor. For eigenvalues lying on a unit sphere near the point $1$, the phase can be extracted by taking the logarithm, yielding a value in a subsegment  $i \cdot (-\alpha, \alpha) \subset i \cdot (-\pi, \pi)$ providing that initial orthogonal matrix is close to $I$. 

Formally, we propose the following approach:
\begin{equation} \label{eq:rotation}
    B_{\text{rotated}}(t) = \exp(\eta(t) \log(B(t))), 
\end{equation}
where $\eta(t)$ is smooth phase multiplier satisfying
\begin{equation} \label{eq:etafunc}
    \eta(0) = \eta(1) = 1, \quad\text{and}\quad \eta(1/2) = \eta_0,
\end{equation}
with $\eta_0$ being a hyperparameter. The condition $\eta(0) = \eta(1) = 1$ is to ensure that we restore initial adapters on the boarder. Based on our ablation studies (see Appendix~\ref{sec:app_abl}), we empirically found that a suitable solution consists of $\eta_0 = 2$ and a second-order polynomial $\eta(t)$ satisfying property~\eqref{eq:etafunc}:
\begin{equation}\label{eq:final_etafunc}
    \eta(t) = 1 + 4 t(1-t).
\end{equation}

\begin{figure*}[t!]
  \centering
    \includegraphics[width=0.95\linewidth]{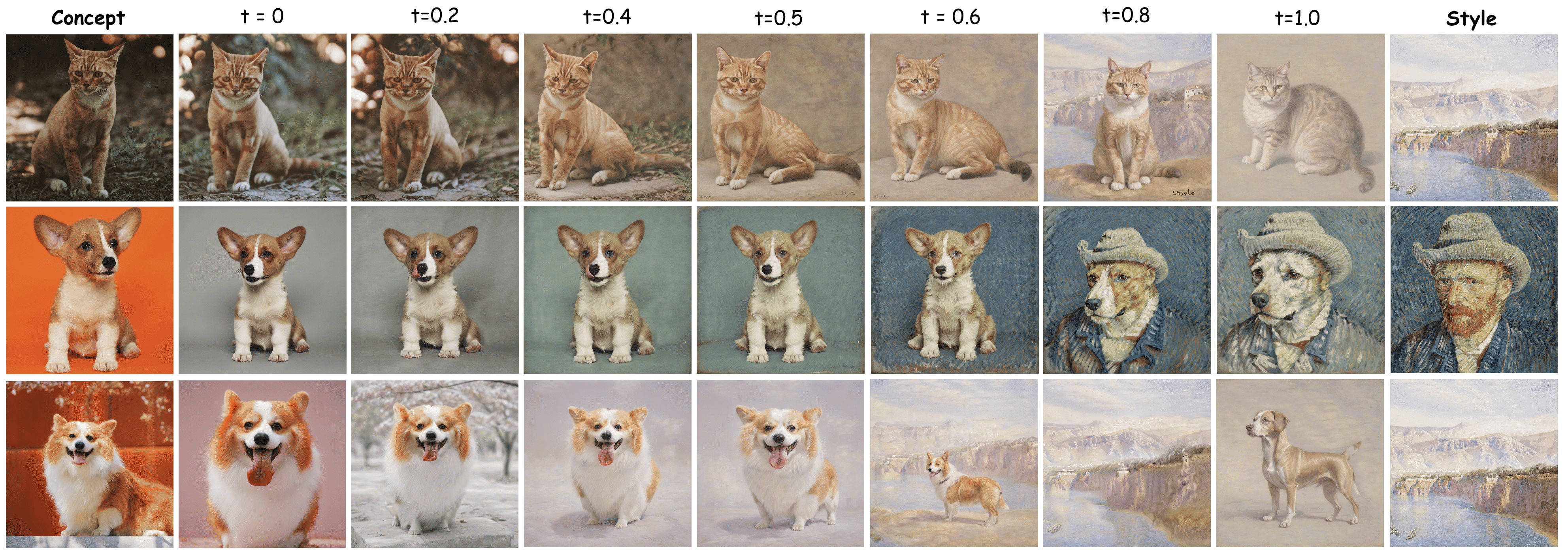}
    \caption{Ablation of the fusion parameter $t$ from ~\eqref{eq:etafunc}. When $t = 0$, the merged adapter reduces to a pure concept adapter, preserving identity with no stylization. When $t = 1$, the merged 
weights correspond to a pure style adapter. 
Intermediate values produce a continuous fusion curve between concept preservation and 
style strength, with $t = 0.6$ yielding the most balanced trade-off.}
\label{fig:t_ablation}
\end{figure*}

The only problem with such an approach is that it is computationally demanding and requires matrices to be diagonalized for every $t$ by contrast to~\eqref{eq:lmg}. 
To improve computational efficiency, while staying close to~\eqref{eq:rotation}, we do two approximation steps. First of all, we use the following straightforward propositions.
\begin{prop}\label{prop:log_orth} For a matrix $B(t) \in \mathrm{SO}(N)$  the following equality holds:
\[
\log(B(t)) = \frac{B(t) - B(t)^\top}{2} + \mathcal{O}\left(\|B(t) - I\|_2^3\right),
\]
as $B(t)$ tends to $I$.
\end{prop}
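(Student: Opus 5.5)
Write $B = B(t)$ and $E = B - I$, so that $\|E\|_2 \to 0$. I would prove the claim with the Mercator series for the principal logarithm and the orthogonality relation. Once $\|E\|_2 < 1$, the principal logarithm is given by the convergent series
\[
\log(B) = \sum_{k\ge 1} \frac{(-1)^{k+1}}{k} E^k = E - \tfrac12 E^2 + \tfrac13 E^3 - \dots ,
\]
and it is real and skew-symmetric for $B\in\mathrm{SO}(N)$ near $I$. The plan has two steps. First, use $B^\top B = I$ to show that the symmetric part of $E$ is second order. Second, show that the symmetric part is exactly what cancels the quadratic term, which leaves a cubic remainder.

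\textbf{Step 1 (orthogonality relation).} Expanding $(I+E)^\top(I+E)=I$ gives $E + E^\top = -E^\top E$. Writing $E = S + K$ with $S = \frac{E+E^\top}{2}$ and $K = \frac{E - E^\top}{2} = \frac{B-B^\top}{2}$, we get
\[
S = -\tfrac12 E^\top E .
\]
Since $E^\top = -E + \mathcal{O}(\|E\|^2)$, this becomes $S = \tfrac12 E^2 + \mathcal{O}(\|E\|_2^3)$. To be careful, $E^\top E = (-E - E^\top E)E = -E^2 - E^\top E^2$, so $S = \tfrac12 E^2 + \tfrac12 E^\top E^2$. The correction term is bounded by $\tfrac12\|E\|_2^3$, so it is cubic.

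\textbf{Step 2 (cancellation).} Substitute into the series:
\[
\log B = K + S - \tfrac12 E^2 + R, \qquad \|R\|_2 \le \sum_{k\ge3}\frac{\|E\|_2^k}{k} \le \frac{\|E\|_2^3}{3(1-\|E\|_2)} .
\]
By Step 1, $S - \tfrac12 E^2 = \tfrac12 E^\top E^2 = \mathcal{O}(\|E\|_2^3)$. Hence $\log B = \frac{B-B^\top}{2} + \mathcal{O}(\|B-I\|_2^3)$, which is the claim.

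An alternative route is to diagonalize $B = Q\,\mathrm{blockdiag}(R_{\theta_j})\,Q^\top$. This gives $\log B$ with blocks $\theta_j J$, while $\frac{B-B^\top}{2}$ has blocks $\sin\theta_j J$. Then $\theta_j - \sin\theta_j = \mathcal{O}(\theta_j^3)$, and $|\theta_j| \asymp |2\sin(\theta_j/2)| = \|R_{\theta_j}-I\|$, which again yields the cubic bound in the spectral norm. The only delicate point is justifying that the principal $\log$ used in the paper coincides with the series branch near $I$. This holds because the two agree on the ball $\|B-I\|_2<1$, where the eigenvalues avoid $-1$. I expect this branch issue to be the main, though minor, obstacle.
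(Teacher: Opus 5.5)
Your proof is correct, and your main argument takes a different route from the paper's. The paper's proof is essentially your ``alternative route''. It diagonalizes $B(t)=U\Lambda U^*$ with eigenvalues $e^{i\phi_j}$, observes that $\log B(t)$ and $\frac{B(t)-B(t)^\top}{2}$ are simultaneously diagonalized with entries $i\phi_j$ and $i\sin\phi_j$, uses $\phi_j-\sin\phi_j=\mathcal{O}(y_j^3)$ with $y_j=\sin\phi_j$, and finishes by bounding $\max_j|y_j|$ by a constant times $\|B(t)-I\|_2$.

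Your primary argument never diagonalizes. With $E=B-I$, the orthogonality identity $E+E^\top=-E^\top E$ gives the exact formula $S=\tfrac12E^2+\tfrac12E^\top E^2$ for the symmetric part. This cancels the quadratic term of the Mercator series exactly, leaving the explicit, non-asymptotic bound
$\|\log B-\tfrac{B-B^\top}{2}\|_2\le \tfrac12\|E\|_2^3+\tfrac{\|E\|_2^3}{3(1-\|E\|_2)}$ for all $\|E\|_2<1$.

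Your approach gives three things the paper's does not:
\begin{itemize}
\item an explicit constant and radius of validity;
\item no conversion between phases $\phi_j$, sines $y_j$ and $\|B-I\|_2$;
\item the observation that only $B^\top B=I$ is used.
\end{itemize}
The paper's approach makes the structure of the error transparent. The discrepancy is exactly the per-eigenvalue gap $\phi_j-\sin\phi_j$, so $\frac{B-B^\top}{2}$ systematically shrinks each rotation angle. This is the same eigenvalue-phase picture the paper relies on for spectra restoration.

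Your treatment of the branch issue is adequate. When $\|E\|_2<1$ the eigenvalues of $B$ lie in $|z-1|<1$, so the series equals the principal logarithm. Skew-symmetry of $\log B$ is never actually needed in your argument.
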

\begin{proof}
See Appendix \ref{sec:proof_prop1}.
\end{proof}

\begin{prop}[] \label{prop:exp_pade}
\[
\exp(tK) = \left(I - \frac{t}{2}K\right)^{-1} \left(I + \frac{t}{2}K\right) + \mathcal{O}(t^3),
\]    
as $t\to 0$. 
\end{prop}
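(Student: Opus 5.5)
The plan is to expand both sides as power series in $t$ and check that they agree up to the linear-in-$t^2$ term. Here $K$ is a fixed square matrix; the skew-symmetry used elsewhere in the paper plays no role.

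\textbf{Step 1: expand the inverse.} For $|t|\,\|K\|_2<2$, the Neumann series gives
\[
\left(I-\tfrac{t}{2}K\right)^{-1}=\sum_{j\ge 0}\left(\tfrac{t}{2}K\right)^j .
\]
It converges absolutely, and its tail after the quadratic term is bounded by $C|t|^3$ for $|t|\le t_0$, with $t_0$ fixed, say $t_0\|K\|_2\le 1$. So
\[
\left(I-\tfrac{t}{2}K\right)^{-1}=I+\tfrac{t}{2}K+\tfrac{t^2}{4}K^2+\mathcal{O}(t^3).
\]

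\textbf{Step 2: multiply out.} Because $K$ commutes with itself, the order of the factors does not matter. Multiplying by $I+\tfrac t2 K$ gives
\[
I+\left(\tfrac t2+\tfrac t2\right)K+\left(\tfrac{t^2}{4}+\tfrac{t^2}{4}\right)K^2+\mathcal{O}(t^3)=I+tK+\tfrac{t^2}{2}K^2+\mathcal{O}(t^3).
\]
The factor $I+\tfrac t2 K$ is bounded for bounded $t$, so multiplying it by the $\mathcal{O}(t^3)$ remainder leaves an $\mathcal{O}(t^3)$ term.

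\textbf{Step 3: compare with the exponential.} The exponential series is $\exp(tK)=I+tK+\tfrac{t^2}{2}K^2+\sum_{j\ge3}\tfrac{t^j}{j!}K^j$. Its tail is bounded by $|t|^3\|K\|^3 e^{|t|\|K\|}=\mathcal{O}(t^3)$. The first three terms match Step 2 exactly, so the difference between the two sides is $\mathcal{O}(t^3)$, which proves the statement.

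\textbf{Remark on constants.} An alternative route uses that both sides are analytic in $t$ near $0$ and compares Taylor coefficients. The cubic coefficients differ: $\tfrac14K^3$ for the Cayley form versus $\tfrac16K^3$ for the exponential. So the error is exactly third order, consistent with this being the $(1,1)$ Padé approximant.

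There is no real obstacle here. The only point needing care is Step 1: the neighbourhood must be fixed so that $I-\tfrac t2K$ is invertible and the remainder constants are uniform. This is what justifies the $\mathcal{O}(t^3)$ notation as $t\to0$.
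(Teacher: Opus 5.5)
Your proof is correct, but it takes a different route from the paper, which gives no derivation of its own. The paper simply cites the Padé approximation method for the matrix exponential in Golub--Van Loan with $p=q=1$. That is the general fact that the $(p,q)$ Padé approximant $D_{pq}(F)^{-1}N_{pq}(F)$ agrees with the Taylor series of $e^{F}$ through order $p+q$.

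You verify the $p=q=1$ case directly via the Neumann series, which is more elementary and self-contained. The citation buys generality (arbitrary $(p,q)$, with the error analysis developed there); your route buys explicit constants.

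The explicit constants matter here. All your remainder bounds depend on $t$ and $K$ only through $|t|\,\|K\|_2$. For example, from $\frac{1+y}{1-y}=1+2\sum_{j\ge1}y^j$ one gets $\bigl\|\exp(X)-(I-\tfrac12X)^{-1}(I+\tfrac12X)\bigr\|_2\le\|X\|_2^3$ whenever $\|X\|_2\le1$. This uniform form is exactly what the paper uses in the proof of Proposition~\ref{prop:orthofuse}. There the proposition is applied to the varying matrix $\eta(t)K(t)$, with remainder $\mathcal{O}(\|\eta(t)K(t)\|_2^3)$. The statement as literally posed (fixed $K$, $t\to0$) does not by itself provide that.

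One minor point: your closing claim that the error is exactly cubic needs $K^3\neq0$. For the skew-symmetric (hence normal) $K$ relevant here, that just means $K\neq0$.
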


\begin{proof}
    See~\citep[Chapter 11.3.1]{GL} (Padé Approximation Method for Exponent for \(p=q=1\)).
\end{proof}

As a result, we arrive at the following proposition yielding a hardware-efficient way to apply \emph{spectra restoration}.

\begin{prop}\label{prop:orthofuse}
    Let $\eta(t) = 1 + 4t(1-t)$ and $\mathrm{det}(B(t) + I) \not= 0$. Then for 
    \begin{equation}\label{eq:final_merge}
\begin{split}
    B_{\text{OrthoFuse}}(t) & = \left(I - \frac{\eta(t)}{4}(B(t) - B(t)^{\top})\right)^{-1} \cdot \\ \cdot & \left(I + \frac{\eta(t)}{4}(B(t) - B(t)^{\top})\right),
\end{split}
\end{equation}
it holds that
\[
    B_{\text{OrthoFuse}}(t) = B_{\text{rotated}}(t) + \mathcal{O}\left(\|B(t) - I\|_2^3\right),
\]
as $B(t)\to I$ and where $B_{\text{rotated}}$ is defined in \eqref{eq:rotation}
\end{prop}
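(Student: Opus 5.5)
The plan is to chain Proposition~\ref{prop:log_orth} and Proposition~\ref{prop:exp_pade}, tracking all errors in terms of $\varepsilon := \|B(t)-I\|_2$. Since $\eta(t)=1+4t(1-t)\in[1,2]$ for $t\in[0,1]$, every constant involving $\eta(t)$ is uniform in $t$.

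\emph{Step 1: replace the logarithm by its skew-symmetric part.} Set $K := \log(B(t))$, which is skew-symmetric and satisfies $\|K\|_2 = \mathcal{O}(\varepsilon)$; near $I$ the principal logarithm is well defined and Lipschitz. By Proposition~\ref{prop:log_orth} we can write
\[
K = S + E, \qquad S := \tfrac{1}{2}\bigl(B(t)-B(t)^\top\bigr), \qquad \|E\|_2=\mathcal{O}(\varepsilon^3),
\]
with $\|S\|_2=\mathcal{O}(\varepsilon)$.

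\emph{Step 2: Padé approximation of the exponential.} Apply Proposition~\ref{prop:exp_pade} to the matrix $\eta K$, reading the $\mathcal{O}(t^3)$ remainder as $\mathcal{O}(\|\eta K\|^3)$. This follows from the scalar identity $e^{z} - \frac{1+z/2}{1-z/2} = \mathcal{O}(z^3)$, which is analytic near $0$ and passes to matrices through the holomorphic functional calculus, or directly from the series expansions. It gives
\[
B_{\text{rotated}}(t)=\exp(\eta K) = \Bigl(I-\tfrac{\eta}{2}K\Bigr)^{-1}\Bigl(I+\tfrac{\eta}{2}K\Bigr)+\mathcal{O}(\varepsilon^3).
\]
The inverse exists because $\tfrac{\eta}{2}K$ is skew-symmetric, so $I-\tfrac{\eta}{2}K$ has eigenvalues $1-i\theta$ and is invertible. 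The same argument shows that $I-\tfrac{\eta}{2}S$ is invertible, so the OrthoFuse formula is well defined. I expect the hypothesis $\det(B(t)+I)\neq 0$ to be the natural nondegeneracy condition for this Cayley form; near $I$ it holds automatically.

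\emph{Step 3: substitute $S$ for $K$.} Consider the Cayley map $c(X)=(I-X)^{-1}(I+X)$ on skew-symmetric $X$. Because $\|(I-X)^{-1}\|_2=1/\sqrt{1+\theta_{\min}^2}\le 1$, the map is Lipschitz there. Explicitly,
\[
c(X)-c(Y)=2(I-X)^{-1}(X-Y)(I-Y)^{-1},
\]
so $\|c(X)-c(Y)\|_2\le 2\|X-Y\|_2$. Taking $X=\tfrac{\eta}{2}K$ and $Y=\tfrac{\eta}{2}S$ gives an error of at most $\eta\|E\|_2=\mathcal{O}(\varepsilon^3)$. Finally, $\tfrac{\eta}{2}S=\tfrac{\eta}{4}(B(t)-B(t)^\top)$, which is exactly the matrix appearing in \eqref{eq:final_merge}. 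Combining the two $\mathcal{O}(\varepsilon^3)$ errors proves the claim.

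\emph{Main obstacle.} The delicate point is Step 2. Proposition~\ref{prop:exp_pade} is stated with a scalar parameter $t\to0$ for a fixed $K$, but here the matrix itself varies with $\varepsilon$. I will handle this by writing $\eta K=\|\eta K\|\,\hat K$ with $\|\hat K\|_2=1$ and applying the proposition with parameter $s=\|\eta K\|_2$. The $\mathcal{O}(s^3)$ constant must then be uniform over the compact set of unit-norm skew matrices. This uniformity follows from the explicit Taylor remainder: both sides expand as $I+sK+\tfrac{s^2}{2}K^2+\dots$, and the remainder is bounded by a convergent series in $s\|\hat K\|$.
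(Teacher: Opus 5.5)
Your argument is correct and is essentially the paper's proof: it uses the same two ingredients (Proposition~\ref{prop:log_orth} and the $(1,1)$ Pad\'e approximant), just in the opposite order. The paper first replaces $\log B(t)$ by $\tfrac12(B(t)-B(t)^\top)$ inside the exponential using smoothness of $\exp$ and then applies Pad\'e, whereas you apply Pad\'e at the exact logarithm and then make the swap inside the Cayley map via the resolvent identity, which gives an explicit Lipschitz constant $2$ and a uniform Pad\'e remainder. One small misreading concerns the hypothesis $\det(B(t)+I)\neq 0$: the Cayley expression is always defined for skew-symmetric arguments, and the condition instead ensures that $B(t)$ has no eigenvalue $-1$, so that the principal logarithm in $B_{\text{rotated}}(t)$ is well defined.
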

\begin{proof}
    See Appendix \ref{sec:proof_prop3}.
\end{proof}

To sum up, in our merging procedure we add a specific transformation for each diagonal block $B(t)$ merged via local minimizing geodesics from~\eqref{eq:lmg}. 
As a result, the spectrum of the resulting blocks is modified using \eqref{eq:final_merge} with $\eta(t)$ defined in \eqref{eq:final_etafunc}.

\subsection{Practical Implementation}

Our method operates on top of any diffusion model whose layers can be fine-tuned with orthogonal adapters.

\paragraph{Source of adapters.}

Both style and concept adapters are trained using the orthogonal parametrization described in 
Section~\ref{sec:gs}: using the same notation as in \eqref{eq:gs_definition}, we express concept $A_C$ and style $A_S$ adapters as follows:
\begin{equation}
\begin{split}
& A_C = P_{(b, n)}^\top L_C P_{(b, n)} R_C, \\ 
& A_S = P_{(b, n)}^\top L_SP_{(b, n)} \, R_S,
\end{split}
\end{equation}
where 
\begin{equation}
\begin{split}
L_C = \mathrm{diag}(L_C^{(1)}, \dots, L_C^{(\frac{n}{b})}),\   & R_C = \mathrm{diag}(R_C^{(1)}, \dots, R_C^{(\frac{n}{b})}), \\
L_S = \mathrm{diag}(L_S^{(1)}, \dots, L_S^{(\frac{n}{b})}),\  & R_S = \mathrm{diag}(R_S^{(1)}, \dots, R_S^{(\frac{n}{b})}).
\end{split}
\end{equation}

\paragraph{Goal.} Given $A_C$ and $A_S$, the task is to construct a fused adapter $A(t)$ controlled by the fusion parameter $t \in [0,1]$, such that $A(t)$ belongs to \lpr-orthogonal matrices class and contains a mixture of features extracted from $A_C$ and $A_S$ in particular proportion.
\paragraph{Algorithm.} The merge operation is performed 
independently for each pair of blocks that are either $(L_C^{(i)}, L_C^{(i)})$ or $(R_C^{(i)}, R_S^{(i)})$.  
The merging procedure consists of two steps:
\begin{enumerate}
    \item \textbf{Block-wise geodesic interpolation.}
Since the adapters consist of independent orthogonal blocks, the merging operation 
is performed block by block. For each pair of blocks $(B_C^{(i)}, B_S^{(i)})$ we 
compute their fused version using the block-wise geodesic interpolation defined in~\eqref{eq:geo}:
\begin{equation}
\widetilde{B}^{(i)}(t) = B_C^{(i)} U \exp(-t \cdot \log(\Lambda)) U^\top, 
\end{equation}
where $B_S^{\top}B_C = U\Lambda U^\top$.
This produces an intermediate block corresponding to the fusion level $t$ along 
the geodesic between the concept and style transformations.

\item \textbf{Eigenvalue rotation.}
After going along the geodesic, we apply the eigenvalue rotation
operation used during fine-tuning, described in \eqref{eq:final_merge}:
\begin{equation}
\begin{split}
B^{(i)}(t) & = \left(I - \frac{\eta(t)}{4}(\widetilde{B}^{(i)}(t) - \widetilde{B}^{(i)}(t)^{\top})\right)^{-1} \cdot \\ \cdot & \left(I + \frac{\eta(t)}{4}(\widetilde{B}^{(i)}(t) - \widetilde{B}^{(i)}(t)^{\top})\right).
\end{split}
\end{equation}
As a result, $B^{(i)}(t)$ serves as a final block of the merged style and concept adapter.

\end{enumerate}
Additionally, we implement an accelerated version of this algorithm that merges two \lpr-orthogonal adapters in under one second. Details are provided in Appendix \ref{sec:app_code}, where we also include pseudocode for both the OrthoFuse algorithm and its accelerated variant.
\begin{table*}[t]
  \caption{Quantitative comparison of OrthoFuse and baseline methods. 
Style Sim measures style fidelity using CLIP similarity with the reference style image. 
CLIP and DINO concept metrics evaluate semantic consistency with the original concept. 
The geometric mean combines style and concept metrics to summarize overall trade-off between style fidelity and concept preservation.}
  \label{tab:quant_anal}
  \centering
  \begin{tabular}{@{}lcccccc@{}}
    \toprule
    Method & Style Sim & CLIP & DINO &Geo. Mean (Style, DINO) & Geo. Mean (Style, CLIP) & Merging time\\
    \midrule
    {$\quad$\textit{Training-based}} \\
    \midrule
    Joint training &  0.48 & \textbf{0.79} & \textbf{0.67} & \textbf{0.57} & 0.62 & 1.5 hours \\
    ZipLoRA $r=8$ & 0.49 & 0.74 & 0.55 & 0.52 & 0.6  & \multirow{2}{*}{4 minutes}\\
    ZipLoRA $r=64$ & 0.49 & 0.76 & 0.64 & 0.56 & 0.61 & \\
    \midrule
    {$\quad$\textit{Training-free}} \\
    \midrule
     K-LoRA $r=8$ &  0.46 & 0.76& 0.55 & 0.5 & 0.59 & \multirow{2}{*}{$<1$ sec}\\
    K-LoRA $r=64$  & 0.49 & 0.76 & 0.56 & 0.52 & 0.61 & \\
    \textbf{OrthoFuse} & \textbf{0.61} & 0.68 & 0.51 & 0.56 & \textbf{0.64} & $<1$ sec\\
    
    \bottomrule
  \end{tabular}

\end{table*}

%% file: sec/5_experiments.tex
\section{Experiments}

\begin{figure*}[t!]
  \centering
    \resizebox{0.80\textwidth}{!}{\includegraphics[width=0.95\linewidth]{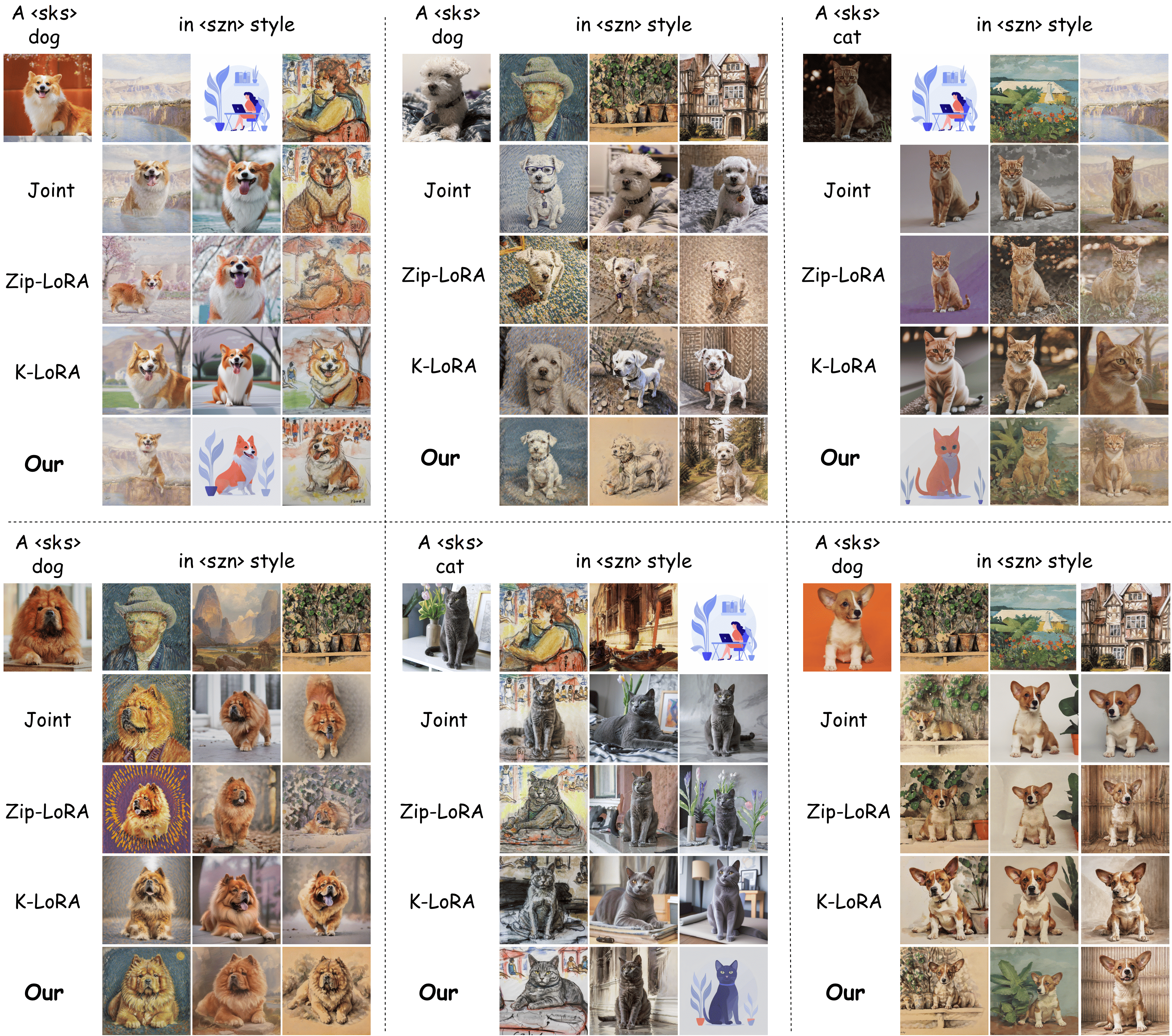}}
    \caption{\textbf{Qualitative comparisons.} We present images generated by OrthoFuse alongside those created using the compared baselines. OrthoFuse strikes an ideal balance between concept and style, preserving both the concept and style.}
\label{fig:qualitative}
\end{figure*}

\subsection{Datasets}

To evaluate the effectiveness of OrthoFuse, we used a diverse set of styles and object concepts.
The style adapters were trained on styles from StyleDrop~\cite{styledrop} and from the artistic style collection used in K-LoRA~\cite{klora}, covering both classical and contemporary artworks.
The concept adapters were trained on object concepts sampled from the DreamBooth dataset. This setup allows us to assess OrthoFuse on a wide variety of visual domains and to validate its ability to preserve both stylistic and semantic attributes during fusion.

\subsection{Experimental Details}

All experiments were conducted using the SDXL~\citep{sdxl} model as the base model. Additional results for FLUX~\citep{flux} are reported in Appendix \ref{sec:flux_results}.

Each adapter was trained separately. Concept adapters were fine-tuned using $4$-$5$ images per subject, while style adapters were trained on a single style reference image. All adapters were trained with the number of blocks set to $32$.

Our ablation study in Figure~\ref{fig:t_ablation} demonstrates that the value $t = 0.6$ achieves 
the most robust fusion behavior across a wide range of style-concept combinations, 
providing stable style transfer while maintaining strong identity consistency. Unless stated otherwise, we report results obtained with $t=0.6$. Additional insights regarding the impact of varying $t$ can be found in the Appendix~\ref{sec:app_abl}, where we provide a more detailed examination of the effects of this parameter.

Additionally, a comparison of generations obtained after block-wise geodesic interpolation alone versus after the subsequent eigenvalue rotation step is provided in Appendix~\ref{sec:blockwise_vs_rotation}, illustrating the effect of the rotation on stylistic fidelity and concept preservation.

\subsection{Evaluation Metrics}

We evaluate OrthoFuse using both semantic and stylistic similarity metrics. To assess concept preservation, we compute the CLIP similarity and DINO similarity between original and generated images. To evaluate style fidelity, we calculate the CLIP similarity between the generated images and the reference style image. This metric quantifies how well the artistic characteristics of the target style are transferred during the fusion process.

\subsection{Quantitative Comparisons}

We compare OrthoFuse with three representative baselines: K-LoRA \citep{klora}, ZipLoRA \citep{ziplora} and Joint Orthogonal Adapters Training (Joint). For a fair comparison, we evaluated these methods using two different LoRA ranks: (1) Rank $r=8$, as in the K-LoRA~\citep{klora} original paper; (2) Rank $r=64$, chosen as the rank for original ZipLoRA \citep{ziplora} method and to match the number of parameters used in our orthogonal adapters. Note that a LoRA of rank $32$ roughly corresponds to an orthogonal adapter with the number of blocks set to $64$ in terms of parameter count.

For the quantitative analysis, we used $6$ concepts from the DreamBooth dataset and $12$ styles from StyleDrop and K-LoRA. This resulted in 72 concept-style combinations; for each, we generated 10 images per pair for evaluation.

Table~\ref{tab:quant_anal} reports the quantitative results averaged over all concept-style pairs. 
OrthoFuse achieves the highest CLIP style similarity, demonstrating the strongest style transfer among all compared methods. While its concept preservation metrics (CLIP and DINO similarity) are slightly lower than those of the best-performing baseline, this behavior is expected because strong stylization always moves the generation away from the original concept images. Additionally, the geometric mean of style and concept similarity is highest for OrthoFuse, indicating the best overall balance between style fidelity and concept retention.

The \textbf{Joint} baseline, which trains an orthogonal adapter for both concept and style at the same time, achieves the highest concept-preservation scores (CLIP and DINO); however, we found out that it can favor the concept and ignore the style in several cases, as reflected in its low style similarity. Moreover, this method is training-based and requires additional fine-tuning for each concept-style pair, whereas \textbf{OrthoFuse} is entirely training-free.

Compared to \textbf{K-LoRA} and \textbf{ZipLoRA}, OrthoFuse provides significantly stronger style transfer while maintaining competitive concept consistency, offering a more stable and reliable fusion across diverse concept–style combinations.

\subsection{Qualitative Results}

Figure~\ref{fig:qualitative} presents qualitative comparisons between OrthoFuse and the baseline fusion methods. 
OrthoFuse consistently finds a balance between style transfer and concept preservation, maintaining the semantic identity of the concept while accurately reflecting the target artistic style.

In contrast, baseline methods exhibit clear limitations. 
ZipLoRA and K-LoRA often produce artifacts or fail to transfer stylistic features faithfully, particularly for challenging styles, and their results depend heavily on the specific concept–style pair, making them unstable across different combinations. 
Joint preserves the concept very well but struggles to apply the target style effectively, resulting in weaker stylization.

Overall, OrthoFuse generates visually coherent compositions with consistent textures, lighting, and stylistic patterns, even for difficult style-concept combinations. 
These qualitative observations align with the quantitative findings, confirming that OrthoFuse achieves the most balanced integration of concept and style, producing images that are both semantically faithful and aesthetically rich. Additional qualitative results are provided in the Appendix \ref{sec:qual_app}.

\subsection{User Study}

\begin{table}

  \caption{
\textbf{User study}. Comparison of our method with K-LoRA and ZipLoRA on all images used in the quantitative evaluation. 65 participants, 1460 pairwise comparisons.
}
  \label{tab:user_study}
  \centering
  \begin{tabular}{lcc}
    \toprule
    \textbf{Question} & \textbf{Our vs K-L.} & \textbf{Our vs Z.} \\
    \midrule
    Concept Preserv. (Q1) &  48\% \text{vs} 52\% & 54\% \text{vs} 46\%\\
    Style Transfer (Q2) & 77\% \text{vs} 23\% & 83\% \text{vs} 17\%\\
    \midrule[0.1pt]
    Overall Preference (Q3) & \textbf{67\%} \text{vs} 33\% & \textbf{76\%} \text{vs} 24\% \\
    \bottomrule
  \end{tabular}

\end{table}

To complement automatic metrics and account for their known limitations in evaluating concept–style trade-offs, we conducted a user study comparing our method with K-LoRA and ZipLoRA. Participants were asked three questions: Q1 evaluated \textit{Concept Preservation}, Q2 assessed \textit{Style Transfer}, and Q3 measured \textit{Overall Preference}. Full protocol details and question wording are provided in the Appendix~\ref{sec:app_user_study}.

We collected responses from 65 participants, resulting in 1,460 pairwise comparisons across all images used in the quantitative evaluation, with half of the comparisons performed against K-LoRA and the other half against ZipLoRA. In each trial, participants compared the results of two methods applied to the same concept–style pair and selected the preferred image according to the given criterion.

The results are summarized in Table~\ref{tab:user_study}, where each value denotes the percentage of participants preferring our method over the baseline. The study shows that while K-LoRA achieves slightly better concept preservation, our method is strongly preferred in terms of style transfer. ZipLoRA, in contrast, is outperformed by our method in both concept preservation and style transfer.

Q3 further shows that participants favor our results by a substantial margin over both K-LoRA and ZipLoRA.  Overall, the user study confirms that our approach produces images that better satisfy perceptual expectations of stylized concept generation.

%% file: sec/6_conclusion.tex
\section{Conclusion}
We introduce OrthoFuse, the first training-free method for orthogonal adapter merging. Our approach substantially improves style transfer fidelity while maintaining highly competitive concept preservation, striking a robust balance between the two.
By leveraging structured orthogonal parametrization and manifold-based geodesic approximations, our framework unites adapters tuned for different tasks into a single fused adapter without additional training. 
Extensive experiments in subject-driven generation tasks demonstrate that OrthoFuse outperforms existing fusion techniques, achieving superior style transfer while maintaining semantic consistency of the concept. 
Although some trade-offs between concept preservation and style fidelity remain, OrthoFuse establishes a robust, efficient, and principled foundation for multi-adapter fusion in diffusion models, enabling high-quality generation across diverse concept–style combinations.

\section*{Acknowledgments}

The work was supported by the grant for research centers in the field of AI provided by the Ministry of Economic Development of the Russian Federation in accordance with the agreement 000000C313925P4E0002 and the agreement with HSE University № 139-15-2025-009. 
The calculations were performed in part
through the computational resources of HPC facilities at HSE University~\citep{kostenetskiy2021hpc}.

%% file: sec/X_suppl.tex
\clearpage
\setcounter{page}{1}
\maketitlesupplementary
\appendix

\section{Smoothness}
\label{sec:smoothness_proof}

In this section, we prove that the set of \lpr~orthogonal matrices form a smooth manifold using a slightly different notation which is more convenient for these purposes. 
To do this, let us define additional objects we are going to use further.

For $i=1, \dots, m$ let \(B_i\) be a block-diagonal matrix with \(k_i\) orthogonal blocks of size \(b_i \times b_i \) and let \(P_i\) for $i=1, \dots, m-1$ be an arbitrary fixed permutation matrix.
Based on these notation, we define a set of orthogonal matrices $\mathcal{A}_m^{\text{orth}}$:
\begin{equation*}
\begin{split}
& \mathcal{A}^{\text{orth}}_m = \left\{ A \;\middle|\; A = B_{m} P_{m-1} \dots B_1 \right\}.
\end{split}
\end{equation*}
These set of matrices is a significantly more general set than \lpr~orthogonal matrices, considered in Section~\ref{sec:gs} due to the arbitrariness of $m$, the choice of every permutation matrix and the choice of the number of blocks in each $B_i$. 
For this set we provide our main technical contribution.

\begin{theorem} \label{theorem_smoothness}
    \( \mathcal{A}^{\text{orth}}_2 \) is a submanifold in \( \mathrm{O}(N) \) and in \( \mathrm{GL}_N(\mathbb{R}) \).
\end{theorem}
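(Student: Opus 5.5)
The plan is to realize $\mathcal{A}^{\text{orth}}_2$ as an orbit of a compact Lie group acting smoothly on $\mathrm{GL}_N(\mathbb{R})$, and then invoke the standard fact that orbits of proper (in particular compact) smooth actions are closed embedded submanifolds. Concretely, let $G_i \subset \mathrm{O}(N)$ be the group of block-diagonal matrices with $k_i$ orthogonal blocks of size $b_i\times b_i$. Then $G_i \cong \mathrm{O}(b_i)^{k_i}$ is a compact Lie group and a closed subgroup of $\mathrm{GL}_N(\mathbb{R})$. Define the action of $G = G_2 \times G_1$ on $\mathrm{GL}_N(\mathbb{R})$ by
\[
(X, Y)\cdot M = X M Y^{-1}.
\]
This action is smooth because it is the restriction of matrix multiplication and inversion. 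Since $Y \mapsto Y^{-1}$ is a bijection of $G_1$, the orbit of the fixed permutation $P_1$ is exactly $\{B_2 P_1 B_1\} = \mathcal{A}^{\text{orth}}_2$.

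The key steps, in order, are as follows.
\begin{enumerate}
\item Check that the action is smooth and that $G$ is compact, hence the action is proper. Properness follows immediately because $G$ is compact and acts continuously.
\item Consider the stabilizer $H = \{(X,Y) : X P_1 = P_1 Y\}$. It is a closed subgroup of $G$, so $G/H$ is a smooth manifold by the closed subgroup theorem.
\item The orbit map $\theta: G \to \mathrm{GL}_N(\mathbb{R})$, $\theta(g) = g\cdot P_1$, is $G$-equivariant, so it has constant rank. It therefore descends to an injective immersion $\tilde\theta : G/H \to \mathrm{GL}_N(\mathbb{R})$ (the equivariant rank theorem, e.g.\ \cite{Lee}).
\item Since $G/H$ is compact, $\tilde\theta$ is a closed map and hence a homeomorphism onto its image. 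An injective immersion that is a topological embedding is a smooth embedding, so $\mathcal{A}^{\text{orth}}_2$ is a closed embedded submanifold of $\mathrm{GL}_N(\mathbb{R})$.
\end{enumerate}

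For the statement about $\mathrm{O}(N)$, observe that $\mathcal{A}^{\text{orth}}_2 \subset \mathrm{O}(N)$, which is itself an embedded submanifold of $\mathrm{GL}_N(\mathbb{R})$. An embedded submanifold of the ambient space that is contained in another embedded submanifold is embedded in the latter. Indeed, the inclusion into $\mathrm{O}(N)$ is smooth because $\mathrm{O}(N)$ is embedded, it remains an immersion, and it is a homeomorphism onto its image because the subspace topologies coincide. Alternatively, one can run the same orbit argument directly with the action restricted to $\mathrm{O}(N)$, which is invariant under $G$.

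The main obstacle I expect is step 3, the passage from ``image of a smooth map'' to ``submanifold.'' A naive approach, such as computing the differential of $(B_2,B_1)\mapsto B_2P_1B_1$, fails because this map is not injective and its rank is not obvious: the overlap between $G_2$ and $P_1 G_1 P_1^\top$ produces a nontrivial stabilizer whose dimension depends on $P_1$ and the block sizes. The orbit viewpoint sidesteps this, because equivariance forces constant rank without computing it. I would therefore cite the equivariant rank theorem together with the proper-orbit theorem (Lee, Prop.~21.7), rather than attempt an explicit rank computation. If an explicit dimension is wanted, I would compute it afterwards as $\dim G - \dim H$, where $\mathrm{Lie}(H) = \{(\Omega_2,\Omega_1) : \Omega_2 = P_1\Omega_1P_1^\top\}$.
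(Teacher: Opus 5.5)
Your proposal is correct and takes the same route as the paper: both realize $\mathcal{A}^{\text{orth}}_2$ as the orbit of $P_1$ under the compact group $G_2\times G_1$ acting by $(B_2,B_1)\cdot x = B_2 x B_1^{\top}$ (your $XMY^{-1}$ is the same action for orthogonal $Y$), and both use the fact that orbits of compact, hence proper, actions are embedded submanifolds. The only difference is that you prove this orbit fact yourself (equivariant rank theorem, the injective immersion $G/H\to M$, and compactness of $G/H$ to get an embedding) and obtain the $\mathrm{O}(N)$ case from nested embedded submanifolds, whereas the paper cites Lee (Corollary~21.6, Problem~21-17) and Vinberg directly for both ambient manifolds.
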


\begin{proof}
    To prove this, let us firstly define groups $\mathcal{B}_i$:
    \begin{equation}
    \mathcal{B}_i = \mathrm{O}(n_1) \times \dots \times \mathrm{O}(n_{k_i})    
    \end{equation}
    and their Cartesian product \( G = \mathcal{B}_2 \times \mathcal{B}_1 \). Note that \( G\) is a Lie group as it is a Cartesian product of two Lie groups. Additionally, \( G \) is compact because it is closed (being defined by the system of closed polynomial equations \( A A^\top = I \)) and bounded (each element of the Cartesian product is bounded by the square root of the matrix size in the Frobenius norm). 
    
    Note that the orthogonality condition implies that the transpose operation preserves the block-diagonal structure and orthogonality.
    It also gives us an opportunity to define Lie group action:
    \(G = \mathcal{B}_2 \times \mathcal{B}_1 \curvearrowright M\) where \(M\) is a manifold. 
    In our case we can choose \(M = \mathrm{O}(N)\) or \(M =  \mathrm{GL}_N(\mathbb{R})\) and take \(x = P_1 \in M\). The rule of action is defined as follows: 
    \begin{equation}
    g \cdot x: \mathcal{B}_2 \times \mathcal{B}_1 \times M \rightarrow M: B_2 x B_1^\top.
    \end{equation}
    Let us prove that it is really action by definition:
    \begin{itemize}
        \item \( g \cdot x \in M\) because orthogonal matrices are closed under multiplication.
        \item \( (g'g) \cdot x = B'_2 B_2 x (B'_1 B_1)^\top  = B'_2 B_2 x B^\top_1 (B'_1)^\top = g' \cdot (g \cdot x) .\)
        \item \( e \cdot x = I_2 \cdot x \cdot I_1^\top = x \cdot I_1^\top = x. \)
        \item The action is smooth as the composition of polynomial (hence smooth) matrix operations.
    \end{itemize}
    Thus, by definition (\cite{Vinberg_eng, Lee}), this is indeed a Lie group action.
    
   Using the fact that the orbit of a compact Lie group action on a manifold (either \(  \mathrm{GL}_N(\mathbb{R})\) or \( \mathrm{O}(N) \)) yields submanifold (see \citep[Theorem 2.3]{Vinberg_eng} or \citep[Corollary 21.6 \& Problem 21-17]{Lee}), we obtain the desired result.
    In addition to that, using the relation concerning group orbits we get: \( \text{Orb}_G(x) \cong G/ \text{Stab}_G (x), \) where
\begin{equation}
\begin{split}
\text{Orb}_G(x) = \{ g \cdot P_1 = B_2 P_1 B_1^\top \mid (B_2, B_1) \in \mathcal{B}_2 \times \mathcal{B}_1\}, 
\end{split}
\end{equation}
\begin{equation}
\begin{split}
\text{Stab}_G (P_1)& = \left\{(B_2, B_1) \right.  \left. \in \mathcal{B}_2 \times \mathcal{B}_1 \,\right|\, \\ 
& B_2 P_1 B_1^\top = P_1 \iff 
  \left. B_2 = P_1 B_1 P_1^\top \right\}
\end{split}
\end{equation}
and \(\cong\) denotes diffeomorphism (see the proof in, e.g., \cite{Lee}).
\end{proof}
\begin{remark}
We note that \cref{theorem_smoothness} is more general than our particular use case: diagonal blocks of \(B_i\) can be of different size and belong to \(\mathrm{U}\) (unitary matrices), \(\mathrm{SO}\) (special orthogonal matrices), or \(\mathrm{SU}\) (special unitary matrices). Moreover, the theorem statement is true for any matrix \(P_1\) taken from the manifold \(M\) (not necessarily a permutation matrix).
\end{remark}

Since we have shown that \(\mathcal{A}^{\text{orth}}_2\) is a submanifold, multiplying on the left (or right) by a fixed permutation matrix is a smooth diffeomorphism, and therefore the image \(\{\,P_L A P_R \mid A\in\mathcal{A}^{\text{orth}}_2\,\}\) is also a submanifold for any fixed permutation matrices \(P_L,P_R\).

\section{\texorpdfstring{\(\mathcal{G}\mathcal{S}\)}{GS}-orthogonal matrices merging}
\label{sec:merging_proof}

The main problem in \lpr-orthogonal matrices merging is that the orbit of the action is diffeomorphic to a homogeneous space. 
It means that the same matrix in manifold \(M\) can be obtained in several ways.
Here Perfect Shuffle permutation plays an important role: on the one hand, it provides good mixing, and on the other hand, its similarity transformation allows us to rather easily compute the stabilizer, i.e., obtain a description of the orbit. 
For the latter, we recall the result from \citep{dao2022monarch, GL}.
\begin{lemma}[\citep{dao2022monarch, GL}] \label{lem_congruent_PS}
Let \(P\) be a Perfect Shuffle permutation matrix. For any diagonal matrix \(D\) of size \( k_1 \times k_1\) and any matrix \(M\) of size \( b_1 \times b_1\), the following equation holds:
\begin{equation*}
    P (D \otimes M) P^\top = M \otimes D.
\end{equation*}      
\end{lemma}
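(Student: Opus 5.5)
We will prove the identity by checking how both sides act on elementary tensors. Diagonality of \(D\) is not needed, so we argue for an arbitrary \(k_1\times k_1\) matrix \(D\). Set \(n = k_1 b_1\) and write vectors of \(\mathbb{R}^n\) as \(x\otimes y\) with \(x\in\mathbb{R}^{k_1}\) and \(y\in\mathbb{R}^{b_1}\). Such vectors span \(\mathbb{R}^n\), since the standard basis vectors are exactly \(e_i\otimes f_j\). Two linear maps agreeing on them are therefore equal. The whole argument reduces to identifying the action of the perfect shuffle on such products. We then use the mixed-product rule \((D\otimes M)(x\otimes y)=Dx\otimes My\).

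\textbf{Step 1 (the shuffle swaps tensor factors).} By the description in Section~\ref{sec:gs}, \(P\) reshapes a length-\(n\) vector row-major into a matrix, transposes it, and flattens it again row-major. The entry of \(x\otimes y\) at position \(i b_1 + j\) (0-indexed) equals \(x_i y_j\). Reshaping it row-major into a \(k_1\times b_1\) matrix therefore gives \(x y^\top\). Transposing gives \(y x^\top\), and flattening row-major yields \(y\otimes x\). Hence, with \(P=P_{(k_1,n)}\) (the shuffle whose first reshape has \(k_1\) rows), we get
\[
P(x\otimes y)=y\otimes x .
\]
Applying the same computation with the roles of \(k_1\) and \(b_1\) exchanged gives the inverse permutation, so \(P^\top(y\otimes x)=x\otimes y\). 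The index bookkeeping here, i.e.\ matching the convention \(P_{(b,n)}\) and which dimension is used in the reshape, is the only delicate point. I expect it to be the main (mild) obstacle. If the paper's convention is the opposite one, the same argument goes through with \(P\) and \(P^\top\) interchanged; the statement is then read with the corresponding shuffle.

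\textbf{Step 2 (conclude).} For any \(y\in\mathbb{R}^{b_1}\) and \(x\in\mathbb{R}^{k_1}\),
\[
P(D\otimes M)P^\top(y\otimes x)=P(D\otimes M)(x\otimes y)=P(Dx\otimes My)=My\otimes Dx=(M\otimes D)(y\otimes x).
\]
The vectors \(y\otimes x\) span \(\mathbb{R}^n\), so \(P(D\otimes M)P^\top=M\otimes D\). As a sanity check, the same statement in matrix-vectorization language is the classical commutation-matrix identity \(K(A\otimes B)K^\top=B\otimes A\) (cf.\ \citep{GL}). The argument above is just a self-contained proof of that identity for square factors.
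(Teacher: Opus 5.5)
Your proof is correct. The paper does not prove this lemma; it only cites Dao et al.\ and Golub--Van Loan, where it is the standard shuffle--Kronecker (commutation-matrix) identity. So your elementary-tensor computation is a self-contained replacement for that citation, not a variant of an argument given in the paper.

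Your route makes explicit two points that the citation hides. First, diagonality of $D$ plays no role in the identity itself. The paper needs it only downstream, where a block-diagonal $B_1=\sum_l E_{ll}\otimes M_l$ (with $E_{ll}$ the diagonal matrix units) is conjugated term by term, which your version covers by linearity. There the diagonal factor is what produces the sparsity pattern $i\equiv j \pmod{k_1}$.

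Second, the orientation issue you flagged is real, not cosmetic. Take the convention of Section~\ref{sec:gs} with $b=b_1$ the block size. Then $P_{(b_1,n)}$ reshapes into a $b_1\times k_1$ array, hence $P_{(b_1,n)}=P_{(k_1,n)}^\top$, and for this $P$ the correct identity is $P^\top(D\otimes M)P=M\otimes D$. The literal form $P(D\otimes M)P^\top=M\otimes D$ then fails as soon as $k_1\neq b_1$. For example, take $k_1=3$, $b_1=2$ with $0$-based indices. The $(0,3)$ entry of $P_{(2,6)}(D\otimes M)P_{(2,6)}^\top$ is $(D\otimes M)_{0,4}=0$, while $(M\otimes D)_{0,3}=M_{01}D_{00}$.

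Your choice $P=P_{(k_1,n)}$ is exactly the one that makes the lemma true as written. The two conventions agree only in the square case $k_1=b_1$, where the shuffle is an involution.
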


From \cref{lem_congruent_PS}, it follows that when \( k_1 \geq b_2 \) the stabilizer is discrete. Consequently, by \citep[Theorem 21.17, Theorem 21.18]{Lee}, the discrete stabilizer implies that the orbit attains its maximal possible dimension:
\begin{equation}
\begin{split}
& \dim \text{Orb}_G(P_1) =  \dim \frac{G}{\text{Stab}_G (P_1)} = \\ ~&
= \dim G - \dim \text{Stab}_G (P_1) \leq \dim G - 0 = \dim G.
\end{split}
\end{equation}

The following lemma formally clarifies the desired result.

\begin{lemma}
    Stabilizer is discrete when \( k_1 \geq b_2\).
\end{lemma}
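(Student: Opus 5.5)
The plan is to reduce discreteness of the stabilizer to triviality of its Lie algebra and then use \cref{lem_congruent_PS} to show that this Lie algebra is zero. Work in the setting of \cref{theorem_smoothness} with $m=2$, $x=P_1=P$ a perfect shuffle, $B_1$ having $k_1$ blocks of size $b_1$, $B_2$ having $k_2$ blocks of size $b_2$, and $N=k_1b_1=k_2b_2$. By the computation in the proof of \cref{theorem_smoothness},
\[
\mathrm{Stab}_G(P)=\{(B_2,B_1)\in G : B_2 = P B_1 P^\top\}.
\]
This is a closed subgroup of the compact Lie group $G$, hence an embedded Lie subgroup. It is therefore discrete, and in fact finite, if and only if its Lie algebra is $\{0\}$. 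Differentiating the defining relation along a curve through the identity, this Lie algebra consists of pairs $(X_2,X_1)$ with the following properties:
\begin{itemize}
\item $X_1$ is block-diagonal with $k_1$ skew-symmetric $b_1\times b_1$ blocks;
\item $X_2$ is block-diagonal with $k_2$ skew-symmetric $b_2\times b_2$ blocks;
\item $X_2=PX_1P^\top$.
\end{itemize}
So it suffices to show that $X_1$ block-diagonal in this sense and $PX_1P^\top$ block-diagonal with $b_2\times b_2$ blocks force $X_1=0$.

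Write $X_1=\sum_{i=1}^{k_1} E_{ii}\otimes M_i$, where $E_{ii}$ is the $k_1\times k_1$ matrix unit and $M_i$ is the $i$-th skew-symmetric block. Applying \cref{lem_congruent_PS} with $D=E_{ii}$ and using linearity gives
\[
PX_1P^\top=\sum_{i=1}^{k_1} M_i\otimes E_{ii}.
\]
The entry $(M_i)_{pq}$ then sits at row $(p-1)k_1+i$ and column $(q-1)k_1+i$. For $p\neq q$ these two indices differ by $|p-q|\,k_1\ge k_1\ge b_2$. Two indices at distance at least $b_2$ cannot lie in the same diagonal block of size $b_2$, so block-diagonality of $X_2=PX_1P^\top$ forces $(M_i)_{pq}=0$ for all $p\neq q$. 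Skew-symmetry gives $(M_i)_{pp}=0$. Hence every $M_i=0$, so $X_1=0$ and $X_2=PX_1P^\top=0$. The Lie algebra is trivial, the stabilizer is discrete (finite, by compactness), and the dimension count preceding the lemma follows.

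The main point to check carefully is the indexing convention of the perfect shuffle. One must make sure that \cref{lem_congruent_PS} applies with $P$ rather than $P^\top$ for the conjugation $P B_1 P^\top$ appearing in the stabilizer. One must also confirm that the images of the blocks of $X_1$ are spread with stride exactly $k_1$, rather than stride $b_1$.
\begin{itemize}
\item If the convention is the transposed one, I would instead run the same argument with $B_1=P^\top B_2P$. In that case the roles of $(k_1,b_1)$ and $(k_2,b_2)$ swap, and I would verify that the hypothesis $k_1\ge b_2$ is still what is needed.
\item The extension of \cref{lem_congruent_PS} from diagonal $D$ to the rank-one matrices $E_{ii}$ is immediate, since each $E_{ii}$ is itself diagonal.
\end{itemize}
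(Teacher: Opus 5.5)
Your proof is correct, and its combinatorial core is the same as the paper's. Conjugation by the perfect shuffle turns $\diag(M_1,\dots,M_{k_1})$ into $\sum_i M_i\otimes E_{ii}$, whose nonzero entries satisfy row $\equiv$ column $\pmod{k_1}$. The stride $k_1\ge b_2$ then prevents any off-diagonal entry from landing inside a $b_2\times b_2$ diagonal block.

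The difference is the level at which you apply this argument. The paper works directly in the group. It runs the index argument on $B_2=P_1B_1P_1^\top$ and concludes that every block $B_2^{l}$ is a diagonal orthogonal matrix, so its entries are $\pm1$. Hence the stabilizer lies in the explicit finite set $\{(D,P_1^\top D P_1) : D=\diag(\pm1,\dots,\pm1)\}$. That route is more elementary, needing neither the closed subgroup theorem nor compactness, and it gives finiteness together with an explicit description.

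Your route linearizes first, so skew-symmetry kills the diagonal and the Lie algebra is exactly zero. This costs you the closed subgroup theorem, plus compactness to get finiteness. In return it produces exactly the statement $\dim\mathrm{Stab}_G(P_1)=0$ that the surrounding dimension count uses.

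Your worry about conventions is harmless. The paper states the shuffle lemma for exactly the conjugation $P(\cdot)P^\top$, with $D$ of size $k_1\times k_1$, that appears in the stabilizer. If you instead ran the argument from the $B_2$ side, the needed condition $k_2\ge b_1$ is equivalent to $k_1\ge b_2$: since $N=k_1b_1=k_2b_2$, we have $k_1\ge b_2\iff k_1k_2\ge N\iff k_2\ge b_1$.
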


\begin{proof}
To prove that stabilizer is discrete, we need to solve 
\begin{equation}
B_2 = P_1 B_1 P_1^\top,
\end{equation}
where
\begin{equation}
    B_2 = \operatorname{diag}(B_2^{1}, B_2^{2}, \dots, B_2^{k_2}), \quad B_2^{i} \in \mathrm{O}(b_2)
\end{equation}
Let us denote \(C = P_1 B_1 P_1^\top.\) 

In terms of index notation, block-diagonal property for $B_2$ means that 
\[
(B_2)_{ij} \neq 0 \implies \left\lfloor \frac{i-1}{b_2} \right\rfloor = \left\lfloor \frac{j-1}{b_2} \right\rfloor
\]
Now consider an element $(B_2^l)_{i'j'}$ within a block of $B_2$, with local indices $i', j' \in \{1, \dots, b_2\}$ and global indices $i=(l-1)b_2+i'$, $j=(l-1)b_2+j'$. 
According to \cref{lem_congruent_PS}, to be non-zero, $(i, j)$-th of $B_2$ must satisfy $i \equiv j \pmod{k_1}$, and thus $i' \equiv j' \pmod{k_1}$.
In the case \( k_1 \ge b_2 \) the condition \( i' \equiv j' \pmod{k_1} \) for \( i', j' \in \{1, \dots, b_2\} \) holds only when \( i' = j' \) since $|i'-j'| < b_2 \le k_1$.
Thus, the matrix \( B_2^l \) can have nonzero elements only on the diagonal. Since \( B_2^l \) is orthogonal, it must be a diagonal orthogonal matrix. The diagonal elements of such a matrix are \( \pm 1 \). This set is finite and, as any stabilizer, forms a subgroup of \(G\) \cite{Lee}. The same result can be obtained for the case of blocks from \(\mathrm{SO}\).
\end{proof}
For the cases where \( k_1 \geq b_2 \) does not hold, the stabilizer (in general) is non-discrete.
\begin{remark}
   Note that the \textit{Perfect Shuffle} permutation is not the only matrix yielding maximum dimension under the condition \( k_1 \geq b_2 \). However, \textit{Perfect Shuffle} is optimal from the dense matrix formation perspective, as it minimizes the number of nonzero elements in the resulting matrix \citep{gorbunov2024group}. For these practical reasons, we employ \textit{Perfect Shuffle} in our structured representation.
\end{remark}

Now, knowing the structure of the manifold, we can find the geodesics. To do this, we need some additional theory from \cite{GHL, Lee}.

\begin{definition}[\cite{GHL, Lee}]
Let \( M \) and \( \tilde{M} \) be two manifolds. A map \( p : \tilde{M} \to M \) is a smooth covering map if:
\begin{itemize}
\item \( p \) is smooth and surjective,

\item for every point \( x \in M \), there exists a neighborhood \( U \) of \( x \) in \( M \) such that \( p^{-1}(U) \) is a disjoint union \( \bigcup_{i \in I} U_i \) of open subsets of \( \tilde{M} \), and for each \( i \in I \), the restriction \( p: U_i \to U \) is a diffeomorphism.
\end{itemize}
\end{definition}

\begin{definition}[\citep{GHL}]
Let \( (M, g) \) and \( (\tilde{M}, h) \) be two Riemannian manifolds. A map \( p: \tilde{M} \to M \) is a Riemannian covering map if:
\begin{itemize}
\item \( p \) is a smooth covering map,
\item \( p \) is a local isometry.
\end{itemize}
\end{definition}

While the formal definitions of isometry and local isometry are available in \cite{GHL}, they will not be essential for our subsequent development.

\begin{prop}[\citep{Lee}, Proposition 21.28] \label{prop_closed_subgroup}
    Every discrete subgroup of a Lie group is a closed Lie subgroup of dimension zero.
\end{prop}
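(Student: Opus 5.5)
The plan is to split the statement into two parts. First we show that a discrete subgroup $H$ of a Lie group $G$ is closed. Then we show that a closed subgroup which is discrete is an embedded submanifold of dimension zero, hence a Lie subgroup. The closedness step carries the real content; the dimension-zero conclusion is then essentially formal.

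\textbf{Step 1 (isolating the identity).} Since $H$ is discrete, there is an open neighborhood $U$ of the identity $e$ in $G$ with $U\cap H=\{e\}$. By continuity of $(g,h)\mapsto g^{-1}h$ at $(e,e)$, we can choose a smaller open neighborhood $V\ni e$ with $V^{-1}V\subset U$.

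\textbf{Step 2 (closedness).} Suppose $g\in\overline{H}$. Then $gV$ is an open neighborhood of $g$, so it meets $H$. Assume, for contradiction, that $g\notin H$. Because $G$ is a manifold, hence Hausdorff and first countable, we may pick a sequence $h_n\to g$ with $h_n\in H$. After discarding finitely many terms, all $h_n$ lie in $gV$, and infinitely many of them are distinct, since otherwise $g$ would equal some $h_n$. For $h_n,h_m\in gV$ we have $h_n^{-1}h_m\in V^{-1}g^{-1}gV=V^{-1}V\subset U$. Since also $h_n^{-1}h_m\in H$, Step 1 forces $h_n^{-1}h_m=e$, i.e.\ $h_n=h_m$. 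This contradicts the existence of distinct terms. Hence $g\in H$ and $H$ is closed. I expect this step to be the main obstacle: the translate-and-shrink argument must be set up so that two points of $H$ close to $g$ have a quotient lying in $U$.

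\textbf{Step 3 (Lie subgroup of dimension zero).} A discrete subset of a manifold is an embedded $0$-dimensional submanifold. The charts are the singletons $\{h\}=H\cap W_h$ for suitable open sets $W_h$, and every map out of a discrete space is smooth. It is second countable because $G$ is second countable and $H$ is a discrete closed subset, hence countable. The group operations restricted to $H$ are trivially smooth, so $H$ is an embedded Lie subgroup of dimension $0$. Alternatively, once closedness is established, one may invoke the closed subgroup theorem, and discreteness forces its Lie algebra $\{X: \exp(tX)\in H\ \forall t\}$ to be $0$, since $t\mapsto\exp(tX)$ is a continuous curve into a discrete set and hence constant.

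In the paper this proposition will be applied to $\text{Stab}_G(P_1)$. By the preceding lemma, for $k_1\ge b_2$ it consists of diagonal $\pm1$ matrices and is therefore discrete. The proposition then guarantees that $G\to G/\text{Stab}_G(P_1)$ is a smooth covering and that the orbit has dimension $\dim G$.
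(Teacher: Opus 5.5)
Your proof is correct. The paper gives no argument of its own for this statement: it imports it from Lee and uses it as a black box.

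The usual textbook route runs in the opposite order from yours. One first notes that a discrete subgroup is a $0$-dimensional embedded submanifold (your Step 3), hence an embedded Lie subgroup. Closedness then follows from the general fact that an embedded (indeed any locally closed) subgroup $H$ is closed: $H$ is an open subgroup of the group $\overline{H}$, and an open subgroup is also closed.

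Your Step 2 is this same argument specialized to the discrete case and done by hand, using the neighborhood $V$ with $V^{-1}V\subset U$. Your version is self-contained and shows that closedness is purely a topological-group fact; with nets in place of sequences it holds in any Hausdorff topological group. The textbook route is shorter but relies on the general lemma about embedded subgroups.

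One small caveat about your last paragraph. This proposition by itself only gives that $\text{Stab}_G(P_1)$ is a closed $0$-dimensional subgroup. The covering property of $G\to G/\text{Stab}_G(P_1)$ comes from Proposition~\ref{prop_covering_map}, which also requires $G$ to be connected. That is why the paper has to invoke connectedness of $\mathrm{SO}$ blocks rather than work with $\mathrm{O}$.
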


\begin{prop}[\citep{Lee}, Proposition 21.34] \label{prop_connectness}
For each \(n \geq 1\), the Lie groups \(\mathrm{SO}(n),\mathrm{U}(n),\mathrm{SU}(n)\) are connected.
\end{prop}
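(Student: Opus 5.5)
The statement is Proposition~\ref{prop_connectness}: for every $n \geq 1$ the Lie groups $\mathrm{SO}(n)$, $\mathrm{U}(n)$ and $\mathrm{SU}(n)$ are connected. The plan is to prove path-connectedness directly, by joining an arbitrary element to the identity $I$ with an explicit continuous path inside the group. For a topological manifold, path-connectedness implies connectedness.

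\textbf{Unitary groups.} Let $A \in \mathrm{U}(n)$. By the spectral theorem for normal matrices, $A = U \diag(e^{i\theta_1},\dots,e^{i\theta_n}) U^*$ with $U$ unitary and $\theta_j \in \mathbb{R}$.

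For $\mathrm{U}(n)$, take the path
\[
\gamma(s) = U \diag(e^{is\theta_1},\dots,e^{is\theta_n}) U^*, \qquad s \in [0,1].
\]
Each $\gamma(s)$ is unitary, the path is continuous, $\gamma(0) = I$ and $\gamma(1) = A$.

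For $\mathrm{SU}(n)$, we have $\det A = 1$, so $\sum_j \theta_j \in 2\pi\mathbb{Z}$. Shifting one $\theta_j$ by a multiple of $2\pi$ does not change $A$, so we may assume $\sum_j \theta_j = 0$. Then $\det \gamma(s) = e^{is\sum_j\theta_j} = 1$ for every $s$, and the path stays in $\mathrm{SU}(n)$.

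\textbf{Special orthogonal group.} This is the main obstacle, because a real orthogonal matrix need not be diagonalizable over $\mathbb{R}$. I would use the real normal form: $A \in \mathrm{SO}(n)$ can be written as $A = Q D Q^\top$ with $Q \in \mathrm{O}(n)$. Here $D$ is block-diagonal, consisting of $2\times 2$ rotation blocks
\[
R(\theta_k) = \begin{pmatrix}\cos\theta_k & -\sin\theta_k\\ \sin\theta_k & \cos\theta_k\end{pmatrix}
\]
together with diagonal entries $\pm 1$.

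I would derive this normal form from the complex diagonalization. Non-real eigenvalues of a real matrix come in conjugate pairs $e^{\pm i\theta}$, and the real and imaginary parts of their eigenvectors span mutually orthogonal invariant $2$-planes, since normality makes distinct eigenspaces orthogonal. The real eigenvalues $\pm1$ have real orthonormal eigenvectors.

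Since $\det A = 1$, the number of $-1$ entries is even. Pair them up: each pair $\diag(-1,-1)$ equals $R(\pi)$, so $D$ consists only of rotation blocks $R(\theta_k)$ and entries $+1$.

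Now take $\gamma(s) = Q D(s) Q^\top$, where $D(s)$ replaces each $\theta_k$ by $s\theta_k$. Every $D(s)$ is orthogonal with determinant $1$, so $\gamma(s) \in \mathrm{SO}(n)$. The path is continuous, $\gamma(0) = I$ and $\gamma(1) = A$. Hence $\mathrm{SO}(n)$ is path-connected.

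\textbf{Case $n=1$.} Here $\mathrm{SO}(1) = \mathrm{SU}(1) = \{1\}$ are single points, and $\mathrm{U}(1)$ is the unit circle. All three are connected, and the general argument above already covers them.
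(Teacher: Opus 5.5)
Your proof is correct, but it follows a different route from the cited result. The paper gives no argument of its own and simply invokes Lee. There, connectedness is proved by induction on $n$, using the homogeneous spaces $\mathrm{SO}(n)/\mathrm{SO}(n-1)\cong S^{n-1}$ and $\mathrm{U}(n)/\mathrm{U}(n-1)\cong\mathrm{SU}(n)/\mathrm{SU}(n-1)\cong S^{2n-1}$ for $n\ge 2$. These are combined with the lemma that a Lie group $G$ is connected whenever a closed subgroup $H$ and the quotient $G/H$ are both connected; the base cases $\mathrm{SO}(1)=\mathrm{SU}(1)=\{1\}$ and $\mathrm{U}(1)=S^1$ are immediate.

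You instead use only linear algebra. Your path is exactly a one-parameter subgroup $\gamma(s)=\exp(s\Omega)$, with $\Omega=U\diag(i\theta_1,\dots,i\theta_n)U^*$, resp.\ $\Omega=Q\Theta Q^\top$ where $\Theta$ is the real skew-symmetric block-diagonal matrix carrying the angles $\theta_k$. So you actually prove the stronger fact that the exponential map of each of these groups is surjective, and the proof is self-contained. This also fits the paper's geodesic formula $B(t)=B_C\exp(t\Omega)$. It shows that a real skew-symmetric $\Omega$ with $\exp(\Omega)=B_S^\top B_C$ always exists, even when $-1$ is an eigenvalue (blocks with $\theta=\pi$), a case the paper's principal logarithm excludes. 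Lee's route avoids the real canonical form entirely and reuses the orbit--stabilizer machinery the paper already relies on for its smoothness theorem.

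One detail in your $\mathrm{SO}(n)$ step should be made explicit: a repeated non-real eigenvalue $\lambda$. Pick a Hermitian-orthonormal basis $v_1,\dots,v_m$ of $E_\lambda$. Then $E_\lambda\perp E_{\bar\lambda}$ gives $v_k^*v_j=v_k^\top v_j=0$ for $j\neq k$, and also $v_j^\top v_j=0$. These identities make the planes $\operatorname{span}\{\operatorname{Re}v_j,\operatorname{Im}v_j\}$ mutually orthogonal, each spanned by an orthogonal pair of equal-length vectors. Orthogonality of distinct eigenspaces alone does not give this within a single eigenspace.
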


\begin{prop}[\citep{Lee}, Theorem 21.29] \label{prop_covering_map}
     If \( G \) is a connected Lie group and \( \Gamma \subseteq G \) is a discrete subgroup, then \( G/\Gamma \) is a smooth manifold and the quotient map \( \pi: G \to G/\Gamma \) is a smooth normal covering map.
\end{prop}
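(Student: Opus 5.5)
The plan is to realize $G/\Gamma$ as the orbit space of the right action of $\Gamma$ on $G$, $x\cdot\gamma = x\gamma$, and then check the covering property by hand using discreteness. By \cref{prop_closed_subgroup}, $\Gamma$ is a closed Lie subgroup of dimension zero. So the standard homogeneous-space construction (the quotient of a Lie group by a closed subgroup) applies. It gives that $G/\Gamma$ has a unique smooth structure of dimension $\dim G - 0 = \dim G$ for which $\pi: G\to G/\Gamma$ is a smooth submersion. Since $\pi$ is a submersion between manifolds of equal dimension, it is also a local diffeomorphism. Surjectivity is immediate, and $G/\Gamma$ is connected as the continuous image of the connected group $G$.

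The key step is to produce evenly covered neighborhoods. The map $(x,y)\mapsto x^{-1}y$ is continuous and $\Gamma$ is discrete, so there is an open neighborhood $V$ of $e$ with $V\cap\Gamma=\{e\}$. Continuity of this map at $(e,e)$ then gives an open $W\ni e$ with $W^{-1}W\subseteq V$, hence $W^{-1}W\cap\Gamma=\{e\}$.

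For arbitrary $g\in G$, set $U=\pi(gW)$. It is open because $\pi$ is an open map, and
\[
\pi^{-1}(U)=\bigcup_{\gamma\in\Gamma} gW\gamma .
\]
These sheets are pairwise disjoint. Indeed, $gw\gamma = gw'\gamma'$ gives $w'^{-1}w=\gamma'\gamma^{-1}\in W^{-1}W\cap\Gamma=\{e\}$, which forces $\gamma=\gamma'$ because $\Gamma$ is a group.

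The same computation shows that $\pi|_{gW\gamma}$ is injective. It is surjective onto $U$ because $\pi(gw\gamma)=\pi(gw)$. A bijective local diffeomorphism is a diffeomorphism, so each sheet maps diffeomorphically onto $U$. Together with surjectivity and smoothness, this shows that $\pi$ is a smooth covering map.

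For normality, note that each right translation $R_\gamma: x\mapsto x\gamma$ is a diffeomorphism of $G$ satisfying $\pi\circ R_\gamma=\pi$. Hence every $R_\gamma$ is a deck transformation. Moreover, the fibre $\pi^{-1}(\pi(x))=x\Gamma$ is a single orbit of $\{R_\gamma\}$. The deck group therefore acts transitively on each fibre, which is the definition of a normal covering.

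I expect the main obstacle to be the neighborhood choice $W^{-1}W\cap\Gamma=\{e\}$, i.e., turning discreteness into disjointness of the sheets. One must also be careful that the quotient smooth structure used is the one making $\pi$ a submersion, so that local diffeomorphism, rather than mere continuity, can be invoked.
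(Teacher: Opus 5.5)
Your proof is correct, but it follows a different and more hands-on route than the source the paper relies on. The paper gives no argument of its own; it cites Lee, where the result comes from general facts about proper actions:
\begin{itemize}
\item by Proposition~\ref{prop_closed_subgroup}, $\Gamma$ is closed, so its right-translation action on $G$ is smooth, free and \emph{proper};
\item a smooth, free, proper action of a discrete group is a covering space action, because properness lets only finitely many $\gamma$ move a precompact neighbourhood of $g$ into itself, and Hausdorffness plus freeness let one shrink the neighbourhood to exclude the non-identity elements;
\item the quotient map of a covering space action on a connected manifold is a normal covering.
\end{itemize}

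Your way of getting the smooth structure is standard: the homogeneous-space theorem, followed by the fact that a submersion between equidimensional manifolds is a local diffeomorphism. The real difference is how the evenly covered neighbourhoods are produced. You skip properness altogether. The neighbourhood $W$ with $W^{-1}W\cap\Gamma=\{e\}$ is exactly the covering-space-action condition $gW\cap gW\gamma=\emptyset$ for $\gamma\neq e$. You get it directly from continuity of $(x,y)\mapsto x^{-1}y$, and you then check normality by hand through the right translations $R_\gamma$.

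Your version is shorter and self-contained, and it shows where each hypothesis enters: discreteness isolates $e$ in $\Gamma$, and connectedness is used only to make $G/\Gamma$ connected. Lee's version is more general. It applies to any smooth, free, proper action of a discrete group on a manifold, where the total space has no group structure to support a $W^{-1}W$ argument.

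One small point concerns Lee's definition of a covering, in which each \emph{component} of $\pi^{-1}(U)$, for connected $U$, maps diffeomorphically onto $U$. Under that definition, choose $W$ connected, which is possible because $G$ is locally connected. The sheets $gW\gamma$ are then exactly those components. With the paper's own definition (a disjoint union of open sets, each mapped diffeomorphically onto $U$), your argument applies as written.
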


\begin{prop}[\citep{GHL}, Proposition 2.18] \label{prop_riemannian_covering_map}
    Let \( p: N \to M \) be a smooth covering map. For any Riemannian metric \( g \) on \( M \), there exists a unique Riemannian metric \( h \) on \( N \) such that \( p \) is a Riemannian covering map.
\end{prop}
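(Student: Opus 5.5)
The natural candidate is the pullback metric $h = p^{*}g$, defined at each $x \in N$ by
\[
h_x(v,w) = g_{p(x)}\bigl(dp_x v,\, dp_x w\bigr), \qquad v,w \in T_x N .
\]
The proof then has two parts. First, check that this $h$ is a Riemannian metric for which $p$ is a local isometry. Second, show that any metric with this property must equal $h$.

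\textbf{Existence.}
\begin{enumerate}
\item Use the covering property to get local diffeomorphisms. Fix $x \in N$ and let $U$ be an evenly covered neighbourhood of $p(x)$. By the definition of a smooth covering map, $x$ lies in one of the sheets $U_i$, and $p|_{U_i}\colon U_i \to U$ is a diffeomorphism. Hence $dp_x\colon T_xN \to T_{p(x)}M$ is a linear isomorphism.
\item Check that $h_x$ is an inner product. Bilinearity and symmetry of $h_x$ are inherited from $g_{p(x)}$. For positive definiteness, take $v \neq 0$; then $dp_x v \neq 0$ because $dp_x$ is injective, so $h_x(v,v) = g_{p(x)}(dp_x v, dp_x v) > 0$.
\item Check smoothness. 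In local coordinates on $U_i$ and the corresponding coordinates on $U$, the components of $h$ are $h_{kl} = \sum_{a,b} (g_{ab}\circ p)\,\partial_k p^a\,\partial_l p^b$. This is a smooth expression, since $g$ and $p$ are smooth.
\item Conclude. Restricted to each sheet $U_i$, the map $p$ is a diffeomorphism that preserves the metrics by construction. So $p$ is a local isometry, and being a smooth covering map, it is a Riemannian covering map.
\end{enumerate}

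\textbf{Uniqueness.} Suppose $h'$ is another metric on $N$ such that $p\colon (N,h') \to (M,g)$ is a local isometry. Then every differential $dp_x$ is a linear isometry $(T_xN, h'_x) \to (T_{p(x)}M, g_{p(x)})$. This gives, for all $v,w \in T_xN$,
\[
h'_x(v,w) = g_{p(x)}(dp_x v, dp_x w) = h_x(v,w),
\]
so $h' = h$.

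The one step needing care is the first: the covering condition must be used to obtain nondegeneracy of $dp_x$. Without it, the pullback $p^{*}g$ would in general be only a positive semidefinite symmetric tensor, not a Riemannian metric.
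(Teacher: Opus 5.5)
Your proof is correct. You take $h = p^{*}g$ and use the sheets of an evenly covered neighbourhood to see that each $dp_x$ is an isomorphism, so $h$ is positive definite and $p$ is a local isometry; uniqueness follows because any metric making $p$ a local isometry must satisfy $h'_x(v,w) = g_{p(x)}(dp_x v, dp_x w)$. The paper gives no proof of its own and simply quotes the statement from Gallot--Hulin--Lafontaine, and this pullback construction is the standard argument behind that result.
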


\begin{prop}[\citep{GHL}, Proposition 2.81] \label{geodesics_projection}
Let \( p: (N, h) \to (M, g) \) be a Riemannian covering map. The geodesics of \( (M, g) \) are the projections of the geodesics of \( (N, h) \), and the geodesics of \( (N, h) \) are the liftings of those of \( (M, g) \).
\end{prop}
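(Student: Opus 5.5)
The plan is to reduce everything to the fact that a Riemannian covering map is, on each sheet, an isometry onto its image. Geodesics are defined by a local condition: either the geodesic equation $\nabla_{\dot\gamma}\dot\gamma=0$, or the local length-minimizing property used in our preliminaries. Both directions of the statement should therefore follow by restricting to evenly covered neighbourhoods and invoking the path-lifting property of coverings.

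\textbf{Projection direction.} Let $\tilde\gamma$ be a geodesic of $(N,h)$ and set $\gamma=p\circ\tilde\gamma$. Fix a parameter $s_0$ and an evenly covered neighbourhood $U\ni\gamma(s_0)$, with $p^{-1}(U)=\bigsqcup_i U_i$. For parameters near $s_0$, $\tilde\gamma$ stays in a single sheet $U_i$. Since $p$ is a local isometry, $p|_{U_i}:(U_i,h)\to(U,g)$ is a diffeomorphism satisfying $p^*g=h$, so it is an isometry. An isometry preserves the Levi-Civita connection by the naturality of the Koszul formula, or equivalently it preserves lengths of curves and hence local minimizers. Consequently $p|_{U_i}$ maps geodesics to geodesics. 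Thus $\gamma$ is a geodesic near every parameter value, and since being a geodesic is a local property, $\gamma$ is a geodesic.

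\textbf{Lifting direction.} Let $\gamma:I\to M$ be a geodesic and pick $\tilde x\in p^{-1}(\gamma(s_0))$. By the path-lifting property of the smooth covering $p$, there is a unique continuous lift $\tilde\gamma$ with $p\circ\tilde\gamma=\gamma$ and $\tilde\gamma(s_0)=\tilde x$. Locally $\tilde\gamma=(p|_{U_i})^{-1}\circ\gamma$, so $\tilde\gamma$ is smooth. By the same isometry argument applied to $(p|_{U_i})^{-1}$, it is locally a geodesic, hence a geodesic of $(N,h)$.

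\textbf{Correspondence of maximal geodesics.} It remains to check that the two operations match maximal geodesics. Take the geodesic $\tilde\sigma$ of $N$ with initial data $(\tilde x,(dp_{\tilde x})^{-1}v)$. Its projection is a geodesic of $M$ with initial data $(p(\tilde x),v)$, so by uniqueness of geodesics it agrees with $\gamma$ on their common domain. Comparing domains in both directions then shows that the projection of a maximal geodesic is maximal, and the lift of a maximal geodesic is maximal.

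The only step with real content is that isometries preserve the Levi-Civita connection, and hence geodesics. I would cite this from \cite{GHL, Lee} rather than re-derive it. If a self-contained argument is needed, I would use the length characterization instead: an isometry preserves curve lengths and hence the local distance, so it maps locally length-minimizing curves to locally length-minimizing curves.
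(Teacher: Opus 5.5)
Your argument is correct. The paper gives no proof of its own and only cites \cite{GHL}; your route is the standard proof behind that citation: on each sheet the local isometry restricts to a genuine isometry, which preserves the Levi-Civita connection and hence geodesics, and the converse follows from unique path lifting with local inverses. The paragraph on maximal geodesics is not needed for the statement as the paper uses it, but it is also valid, because lifts of $\gamma$ exist on its whole domain.
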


\begin{prop}[\citep{Vinberg_eng, Lee}] \label{prop_commutative}
The following diagram is commutative:
\[
\begin{tikzcd}[row sep=large, column sep=large]
& G \arrow[dl, "\pi"'] \arrow[dr, "\varphi_{P_1}"] & \\
G/\text{Stab}_G(P_1) \arrow[rr, dashed, "f"'] & & \text{Orb}_G(P_1) \subseteq \mathrm{SO}(N)
\end{tikzcd}
\]
where \( \pi: G \to G/H \) is surjective map that sends each element \( g \in G \) to its corresponding coset \( gH \) (\(H\) denotes \(\text{Stab}_G(P_1) \)): \( \pi(g) = gH \). \( \varphi_{P_1}: G \to \text{Orb}_G(P_1) \, \) is the map defined by action \( \varphi_{P_1}(g) = g \cdot P_1 = B_2 P_1 B_1^\top \). Finally, \( f: G/H \to \text{Orb}_G(P_1) \, \) is the diffeomorphism defined by the rule: \( f(gH) := g \cdot P_1 \).
\end{prop}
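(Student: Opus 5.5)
The plan is to verify the statement in three layers: that $f$ is well defined and makes the diagram commute, that $f$ is a smooth bijection, and finally that it is a diffeomorphism onto $\text{Orb}_G(P_1)$, viewed as an embedded submanifold as in \cref{theorem_smoothness}.

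\emph{Well-definedness and commutativity.} Write $H=\text{Stab}_G(P_1)$. If $gH=g'H$, then $g'=gh$ with $h\cdot P_1=P_1$. By the action axioms verified in the proof of \cref{theorem_smoothness},
\[
g'\cdot P_1=g\cdot(h\cdot P_1)=g\cdot P_1 ,
\]
so $f(gH):=g\cdot P_1$ does not depend on the representative. The identity $f\circ\pi=\varphi_{P_1}$ then holds by construction, which is exactly commutativity of the diagram.

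\emph{Bijectivity.} Surjectivity is the definition of the orbit. For injectivity, suppose $g\cdot P_1=g'\cdot P_1$. Acting by $g^{-1}$ gives $(g^{-1}g')\cdot P_1=P_1$, so $g^{-1}g'\in H$ and hence $gH=g'H$.

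\emph{Smooth structure on $G/H$.} The stabilizer $H$ is the preimage of the point $P_1$ under the continuous map $\varphi_{P_1}$, so it is a closed subgroup and therefore an embedded Lie subgroup. (In the discrete case this is \cref{prop_closed_subgroup}.) The quotient manifold theorem then makes $G/H$ a smooth manifold with $\pi$ a smooth surjective submersion; in the discrete case, \cref{prop_covering_map} even gives a covering map. Since $\varphi_{P_1}=B_2P_1B_1^\top$ is polynomial, hence smooth, and $\pi$ is a surjective submersion, the characteristic property of submersions shows that $f$ is smooth.

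\emph{Diffeomorphism: the main obstacle.} The delicate step is to show that $f$ is an immersion and that its inverse is smooth. For this I would use equivariance. We have $\varphi_{P_1}(g'g)=g'\cdot\varphi_{P_1}(g)$, and left translation by $g'$ acts on $M$ by the diffeomorphism $x\mapsto B_2'xB_1'^\top$, so $\varphi_{P_1}$ has constant rank. The same then holds for $f$, which is equivariant with respect to the transitive $G$-action on $G/H$. By the equivariant rank theorem (cf.\ \citep[Theorem 21.17--21.18]{Lee}), a $G$-equivariant smooth bijection between homogeneous $G$-manifolds has constant rank and is therefore a diffeomorphism. In particular it is an injective immersion from $G/H$ into $M$.

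It remains to check that its image is the embedded submanifold $\text{Orb}_G(P_1)$. Since $G$ is compact (shown in the proof of \cref{theorem_smoothness}), $G/H$ is compact, so the injective immersion $f$ is proper and hence a smooth embedding. Consequently $f\colon G/H\to\text{Orb}_G(P_1)$ is a diffeomorphism.
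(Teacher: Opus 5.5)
Your proof is correct, and it does more than the paper's. The paper's proof of this proposition consists only of the commutativity check you also make, $f(\pi(g))=f(gH)=g\cdot P_1=\varphi_{P_1}(g)$. It then simply cites \citep{Lee} for the facts that $f$ is well defined and is a diffeomorphism.

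You actually prove those facts:
\begin{itemize}
\item well-definedness and bijectivity from the action axioms;
\item smoothness of $f$ from the closed subgroup theorem, the smooth structure on $G/H$, and the characteristic property of the surjective submersion $\pi$;
\item the immersion property from equivariance (constant rank);
\item the embedding property from compactness of $G/H$.
\end{itemize}

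The citation route, which is essentially Lee's homogeneous space characterization theorem, is shorter. However, it presupposes that $\text{Orb}_G(P_1)$ is already a manifold on which $G$ acts smoothly and transitively. In other words, it leans on \cref{theorem_smoothness}, which is itself proved by citation. Your injective-immersion-plus-compactness argument needs none of this. It in fact re-derives \cref{theorem_smoothness}, since the image of a smooth embedding of the compact manifold $G/H$ is an embedded submanifold.

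Two minor points. First, the constant-rank result you invoke is the equivariant rank theorem from Lee's Chapter~7 on Lie groups; Theorems 21.17--21.18 are the homogeneous space construction and characterization theorems. Second, your phrase about a bijection ``between homogeneous $G$-manifolds'' assumes a manifold structure on the orbit before you have established it. The clean order is:
\begin{itemize}
\item apply the equivariant rank theorem to $f\colon G/H\to M$ itself (transitive action on the source, smooth action on $M$);
\item deduce that the injective map $f$ is an immersion;
\item conclude by compactness.
\end{itemize}
Your final paragraph does exactly this anyway.

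Working in $M=\mathrm{O}(N)$ or $\mathrm{GL}_N(\mathbb{R})$, as you do, is also the right choice. With $\mathrm{O}$-blocks, or when $\det P_1=-1$, the orbit is not contained in $\mathrm{SO}(N)$, contrary to what the displayed diagram suggests.
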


\begin{proof}

Commutativity of the diagram means that for every \(g \in G\), the following identity holds:
\[
\varphi_{P_1}(g) = f(\pi(g)).
\]
Verification:
By the definition of \(\pi\): \(\pi(g) = gH\). By the definition of \(f\): \(f(\pi(g)) = f(gH) = g \cdot P_1\). By the definition of \(\varphi_{P_1}\): \(\varphi_{P_1}(g) = g \cdot P_1\).

Thus, \(\varphi_{P_1}(g) = f(\pi(g))\) for all \(g \in G\), which proves the commutativity of the diagram. The map \(f\) is a diffeomorphism and is well‑defined; see, for example, \cite{Lee}.
\end{proof}

Now let us combine these statements for the case \( k_1 \geq b_2 \). In this case, the stabilizer forms a subgroup. 

Therefore, using Propositions \ref{prop_closed_subgroup}, \ref{prop_connectness}, the fact that the Cartesian product of connected Lie groups is connected, and Proposition \ref{prop_covering_map}, we conclude that \( \pi: G \to G/H \) is a smooth covering map (since smooth normal covering map satisfies a stronger condition than a smooth covering map (see \cite{Lee})), where \(H = \text{Stab}_G(P_1)\).

Next, we show that the action map $\varphi_{P_1}$ is also a smooth covering map. To do this, it is sufficient to show, by Proposition \ref{prop_commutative}, that the composition of the diffeomorphism $f$ and the smooth covering map $\pi$ is a smooth covering map. Since $\pi$ is a smooth covering, for any $\bar{x} \in G/H$ there exists an evenly covered open neighborhood $U \subset G/H$ such that $\pi^{-1}(U) = \bigsqcup_{i \in I} V_i$, where each $V_i \subset G$ is open and the restriction $\pi|_{V_i}: V_i \to U$ is a diffeomorphism. Because $f$ is a diffeomorphism, $W = f(U)$ is an open neighborhood of $f(\bar{x})$ in $\text{Orb}_G(P_1)$. Then:
\[
\varphi_{P_1}^{-1}(W) = \pi^{-1}(f^{-1}(W)) = \pi^{-1}(U) = \bigsqcup_{i \in I} V_i.
\]
For each $i$, the restriction $\varphi_{P_1}|_{V_i} = f \circ \pi|_{V_i}$ is a diffeomorphism $V_i \to W$, as it is a composition of two diffeomorphisms. Thus, $\varphi_{P_1}$ satisfies the exact definition of a smooth covering map.

Finally, let \( g \) be the standard Riemannian metric on \( \mathrm{SO}(N) \) induced by the Frobenius inner product. As \( \text{Orb}_G(P_1) \) is an embedded submanifold, it inherits a Riemannian metric \( g_{\text{orb}} \).
According to Proposition~\ref{prop_riemannian_covering_map}, there exists a unique Riemannian metric \( h\) on \( G \) such that \( \varphi_{P_1} : (G, h) \to (\text{Orb}_G(P_1), g_{\text{orb}}) \) is a Riemannian covering map. By Proposition~\ref{geodesics_projection}, exact geodesics on the orbit are projections of exact geodesics in \( (G, h) \). Such metric need not necessarily be identical across blocks, nor necessarily coincide with the Frobenius norm. Nevertheless, we can perform a block-wise connection (uniform across blocks) using formula \eqref{eq:lmg} and empirically verify that the resulting curve exhibits nearly constant velocity in the Frobenius norm, consistent with the constant‑speed property of geodesics (see, \citep[Definition 2.77]{GHL}), thereby supporting its interpretation as a meaningful geodesic approximation with respect to the Frobenius norm. These findings are visually confirmed in Figure~\ref{fig:velocity_blockwise}.

Note that another natural approach is to follow a geodesic in the ambient space \(\mathrm{SO}(N)\); however, this is computationally expensive, inefficient, and generally does not stay within the \(\mathcal{GS}\) manifold. Nevertheless, we empirically demonstrate that the resulting block-wise curve closely approximates a geodesic in the ambient space \(\mathrm{SO}(N)\) (see Figure \ref{fig:two_approaches}).

\begin{remark}
Here and below we use the matrix logarithm, which is not always defined. In practice, however, we work with matrices whose diagonal blocks are sufficiently close to the identity, for which the logarithm is well defined (see Lemma~\ref{lem:log_existence} for a proof).
\end{remark}

\begin{figure*}[t!]
  \centering
    \includegraphics[scale=0.5]{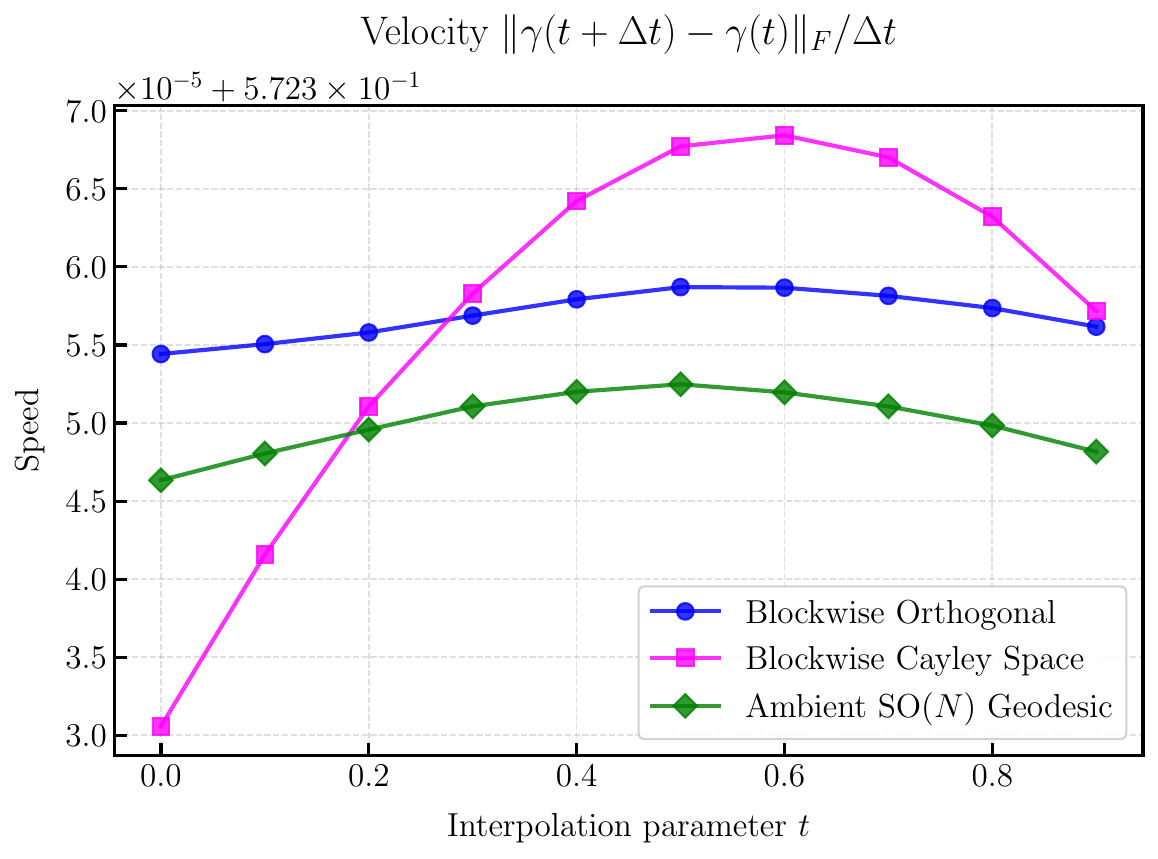}
    \caption{Velocity of geodesics measured in the Frobenius norm: comparison of the exact $\mathrm{SO}(N)$ geodesic, the block‑wise orthogonal approximation, and the block‑wise Cayley space approximation.}
\label{fig:velocity_blockwise}
\end{figure*}

\begin{figure*}[t!]
  \centering
    \includegraphics[width=1.0\linewidth]{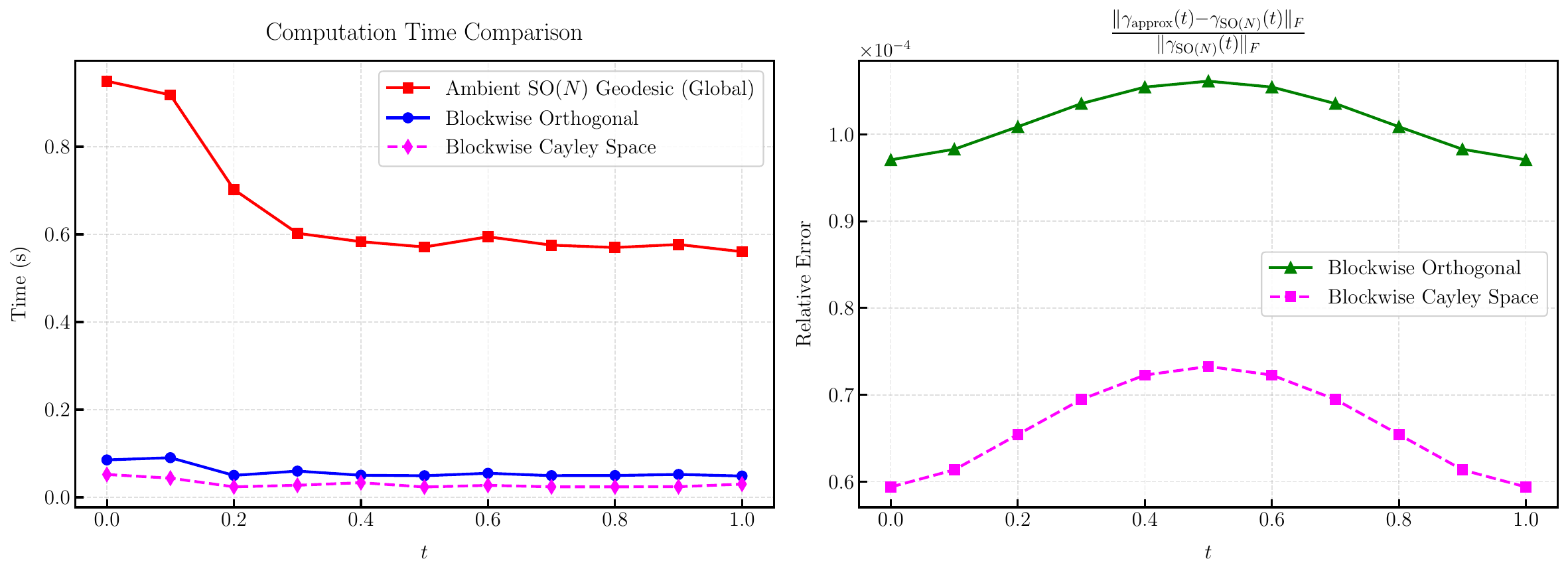}
    \caption{Relative error (in Frobenius norm) between the blockwise geodesic (geodesic projection from \(G\)) and the geodesic in \(M\), as well as the computation time of the two methods.}
\label{fig:two_approaches}
\end{figure*}

Observe that also that to obtain the group action we added transposition on the right, but we are given matrices in the form \(L_1 P R_1 = L_1 P (R_1^\top)^\top\) and \(L_2 P R_2 = L_2 P (R_2^\top)^\top\), and that's why we need to make additional transposition after connecting \(R_1^\top\) and \(R_2^\top\). 
Below we prove that this approach is equivalent to directly combining the matrix blocks without introducing additional transposition.

\begin{prop}
For \(A_1,A_2\in \mathrm{SO}(n)\) let  
\[
\gamma_{_{A_1,A_2}}(t)=A_1\exp\left(t\log(A_1^{\!\top\!}A_2)\right),\qquad t\in[0,1],
\]  
be the standard geodesic joining \(A_1\) to \(A_2\).  
Construct the analogous geodesic between the transposed matrices,  
\[
\gamma_{_{A_1^{\!\top\!},A_2^{\!\top\!}}}(t)=A_1^{\!\top\!}\exp\left(t\log(A_1A_2^{\!\top\!})\right),
\]  
and transpose the result. Then  
\[
\gamma_{_{A_1,A_2}}(t)=\left[\gamma_{_{A_1^{\!\top\!},A_2^{\!\top\!}}}(t)\right]^{\!\top\!}.
\]
\end{prop}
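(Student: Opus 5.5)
We compute the transpose of the second curve directly and rewrite it in the form of the first. Using $(\exp X)^\top=\exp(X^\top)$ and $(\log Y)^\top=\log(Y^\top)$, we get
\[
\left[\gamma_{_{A_1^{\top},A_2^{\top}}}(t)\right]^{\top}
=\exp\left(t\log(A_1A_2^{\top})\right)^{\top}A_1
=\exp\left(t\log(A_2A_1^{\top})\right)A_1 .
\]
Both identities hold because $\exp$ and the principal $\log$ are power series, or are defined through the eigenvalues, so they commute with transposition. For $\log$ we also use that $Y$ and $Y^\top$ share a spectrum, so the branch choice is the same. As in the Remark preceding the statement, we assume the relevant logarithms exist, i.e.\ the blocks are close enough to $I$ (Lemma~\ref{lem:log_existence}).

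It remains to move $A_1$ from the right of the exponential to the left. The key fact is similarity invariance: for orthogonal $A_1$ and any matrix function $F\in\{\exp,\log\}$, $A_1^{\top}F(X)A_1=F(A_1^{\top}XA_1)$. Applying this with $X=A_2A_1^{\top}$ gives $A_1^{\top}XA_1=A_1^{\top}A_2$, hence $A_1^{\top}\log(A_2A_1^{\top})A_1=\log(A_1^{\top}A_2)$. Then
\[
\exp\left(t\log(A_2A_1^{\top})\right)A_1
=A_1\,A_1^{\top}\exp\left(t\log(A_2A_1^{\top})\right)A_1
=A_1\exp\left(t\,A_1^{\top}\log(A_2A_1^{\top})A_1\right)
=A_1\exp\left(t\log(A_1^{\top}A_2)\right),
\]
which is $\gamma_{_{A_1,A_2}}(t)$.

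The only delicate point is justifying $\log(A_1^\top X A_1)=A_1^\top\log(X)A_1$ for the principal logarithm. We plan to derive it from the eigendecomposition $X=U\Lambda U^*$, exactly as used in \eqref{eq:geo}. Since $A_1^\top X A_1=(A_1^\top U)\Lambda(A_1^\top U)^*$ has the same eigenvalues, the same scalar branch is applied to each eigenvalue, and the identity follows. The analogous transposition identity $\log(Y^\top)=\log(Y)^\top$ follows the same way, since $Y^\top=\bar U\Lambda U^\top$.
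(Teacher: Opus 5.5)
Your proof is correct and follows the paper's own argument. Like the paper, you first transpose the second curve using $(\exp M)^\top=\exp(M^\top)$ and $(\log M)^\top=\log(M^\top)$, and then use invariance of $\exp$ and $\log$ under conjugation by the orthogonal matrix $A_1$ to move $A_1$ to the left; the paper merely conjugates by $A_1$ rather than $A_1^\top$ and names $\Omega=\log(A_1^\top A_2)$. Your eigendecomposition justification of the two logarithm identities (principal branch, preserved spectrum) is, if anything, more careful than the paper's appeal to Taylor series, since that series for $\log$ converges only near the identity.
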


\begin{proof} 
\[
\left[\gamma_{_{A_1^{\!\top\!},A_2^{\!\top\!}}}(t)\right]^{\!\top\!}= \exp\left(t\log(A_1A_2^{\!\top\!})\right)^{\!\top\!}A_1.
\]
Now let's use \((\exp M)^{\!\top\!}=\exp(M^{\!\top\!})\) and \((\log M)^{\!\top\!}=\log(M^{\!\top\!})\) (this follows from the absolute convergence of the Taylor series):
\[
\left[\gamma_{_{A_1^{\!\top\!},A_2^{\!\top\!}}}(t)\right]^{\!\top\!} = \exp\!\left(t\log(A_2A_1^{\!\top\!})\right)A_1.
\]
Let \(\Omega=\log(A_1^{\!\top\!}A_2)\in \mathfrak{so}(n)\). Since
\[
\begin{split}
& A_1 \exp(\Omega) A_1^\top
= A_1 \left( \sum_{k=0}^{\infty} \frac{\Omega^k}{k!} \right) A_1^\top \\
= &\sum_{k=0}^{\infty} \frac{1}{k!} A_1 \Omega^k A_1^\top
= \sum_{k=0}^{\infty} \frac{1}{k!} (A_1 \Omega A_1^\top)^k \\
= &\exp(A_1 \Omega A_1^\top).
\end{split}
\]
A similar matrix equation is true for the logarithm function: for $A_1 \in \mathrm{SO}(N)$ we have
\begin{equation*}
    A_1\log(X)A_1^\top = \log(A_1 X A_1^\top).
\end{equation*}
Then
\[
A_2A_1^{\!\top\!}=A_1\exp(\Omega)A_1^{\!\top\!}=\exp(A_1\Omega A_1^{\!\top\!}).
\] 
Thus
\[
\log(A_2A_1^{\!\top\!})=A_1\Omega A_1^{\!\top\!}=A_1\log(A_1^{\!\top\!}A_2)A_1^{\!\top\!}.
\]
Finally, we obtain:
\begin{equation}
\begin{split}
\gamma_{A_1^{\top}, A_2^{\top}}(t) = & \exp\!\left(t\log(A_2A_1^{\!\top\!})\right)A_1 = \\ = & \exp\left(tA_1\log(A_1^{\!\top\!}A_2)A_1^{\!\top\!}\right)A_1 = \\ = & A_1\exp\left(t\log(A_1^{\!\top\!}A_2)\right)A_1^{\!\top\!}A_1 = \\ = & A_1\exp\!\left(t\log(A_1^{\!\top\!}A_2)\right)=\gamma_{_{A_1,A_2}}(t).
\end{split}
\end{equation}
\end{proof}

\section{Proof of Proposition~\ref{prop:log_orth}}\label{sec:proof_prop1}
\begin{proof}
    Utilizing eigendecomposition of $B(t)$ 
    \begin{equation}
        B(t) = U\Lambda U^*, \quad \Lambda = \diag(x_1 + iy_1, \dots, x_n + iy_n),
    \end{equation}
    satisfying $|x_i|^2 + |y_i|^2 = 1$. Since the eigenvalues of $B(t)$ lie on a unit sphere, we can express them as $e^{i\phi_i}$ for each $i = 1, \dots, n$. Then, for the principle branch of the logarithm, we have
    \begin{equation}\label{eq:prop3_lhs}
    \begin{split}
        \log(B(t)) = & ~U\log(\Lambda)U^* = \\ = & ~U \log(\diag(e^{i\phi_1}, \dots e^{i\phi_n}))U^* = \\ = & ~U \diag(i\phi_1, \dots i\phi_n)U^*.
    \end{split}
    \end{equation}
    Now let us consider $\frac{B(t) - B(t)^{\top}}{2}$. Using eigendecomposition, we have
    \begin{equation}\label{eq:prop3_rhs}
        \frac{B(t) - B(t)^{\top}}{2} = U \diag(iy_1, \dots iy_n) U^*.
    \end{equation}
    Subtracting from \eqref{eq:prop3_lhs} the final form of \eqref{eq:prop3_rhs}, we obtain
    \begin{equation}
    \begin{split}
        &\log(B(t)) - \frac{B(t) - B(t)^{\top}}{2} = \\ =&~ U\diag\left(i(\phi_1 - y_1), \dots i(\phi_n - y_n)\right)U^*
    \end{split}
    \end{equation}
    Using that $y_i = \sin \phi_i$, and assuming that $y_i$ is close to 0 (which is satisfied in practice), we can utilize Taylor expansion for $\sin$ function and finally obtain
    \begin{equation}
    \begin{split}
        &~U\diag\left(i(\phi_1 - y_1), \dots, i(\phi_n - y_n)\right)U^* = \\ 
        =&~ U\diag\left(\mathcal{O}(y_1^3), \dots, \mathcal{O}(y_n^3)\right)U^* = \\ =&~ \mathcal{O}\left(\|\diag(y_1, \dots, y_n)\|_2^3\right),
    \end{split}
    \end{equation}
    where the latter matrix belongs to $\mathcal{O}\left(\|B(t) - I\|_2^3\right)$. Indeed,
    
    \begin{equation}
        \begin{split}
            &\|B(t) - I\|_2 = \\ =&~ \|\diag(x_1 - 1 + iy_1, \dots, x_n - 1 + iy_n)\|_2^2 = \\ = &~ \|\diag(\sqrt{1-y_1^2}-1+iy_1, \dots, \\ &\ \sqrt{1-y_n^2}-1+iy_n)\|_2 \geqslant
            \\ \geqslant &~ 1/2\|\diag(y_1, \dots, y_n)\|_2
        \end{split}
    \end{equation}
    for small enough $\max_i|y_i|$, which completes the proof.
\end{proof}
\section{Proof of Proposition~\ref{prop:orthofuse}}\label{sec:proof_prop3}

\begin{proof}

Set skew-symmetric matrix $K(t) := \frac{B(t) - B(t)^\top}{2} $. From proposition \ref{prop:log_orth} we have:  
\[
\log B(t) = K(t) + \mathcal{O}\left(\|B(t) - I\|_2^3\right).
\]  
Hence,
\begin{equation}
\begin{split}
& B_{\text{Rotated}}(t) = \exp\left(\eta(t)\log B(t)\right) = \\ ~&
= \exp\left(\eta(t)K(t) + \mathcal{O}(\|B-I\|_2^3)\right).
\end{split}
\end{equation}

In last equation we used that \(\eta(t)\) has a closed form $1 + 4t(1-t)$, which is bounded on [0, 1].
Using smoothness of the matrix exponential we get:  
\[
B_{\text{Rotated}}(t) = \exp(\eta(t)K(t)) + \mathcal{O}\left(\|B(t) - I\|_2^3\right).
\]

From Proposition \ref{prop:exp_pade}, with $\eta(t)$ we obtain the following: 
\begin{equation}
\begin{split}
& \exp(\eta(t)K(t)) = \left(I - \frac{\eta(t)}{2}K(t)\right)^{-1}\left(I + \frac{\eta(t)}{2}K(t)\right) + \\ ~&
+ \mathcal{O}\left(\|\eta(t)K(t)\|_2^3\right).
\end{split}
\end{equation}

As $ \|\eta(t)K(t)\|_2 = \mathcal{O}(\|K(t)\|_2) = \mathcal{O}(\|B(t)-I\|_2) $, we obtain  
\[
\exp(\eta(t)K(t)) = B_{\text{OrthoFuse}}(t) + \mathcal{O}\left(\|B(t)-I\|_2^3\right).
\]
Combining all the results, we get
\[
B_{\text{OrthoFuse}}(t) = B_{\text{Rotated}}(t) + \mathcal{O}\left(\|B(t)-I\|_2^3\right).
\]

\end{proof}

\section{Additional Results on FLUX}
\label{sec:flux_results}

\begin{figure*}[t!]
  \centering
    \includegraphics[width=1.0\linewidth]
    {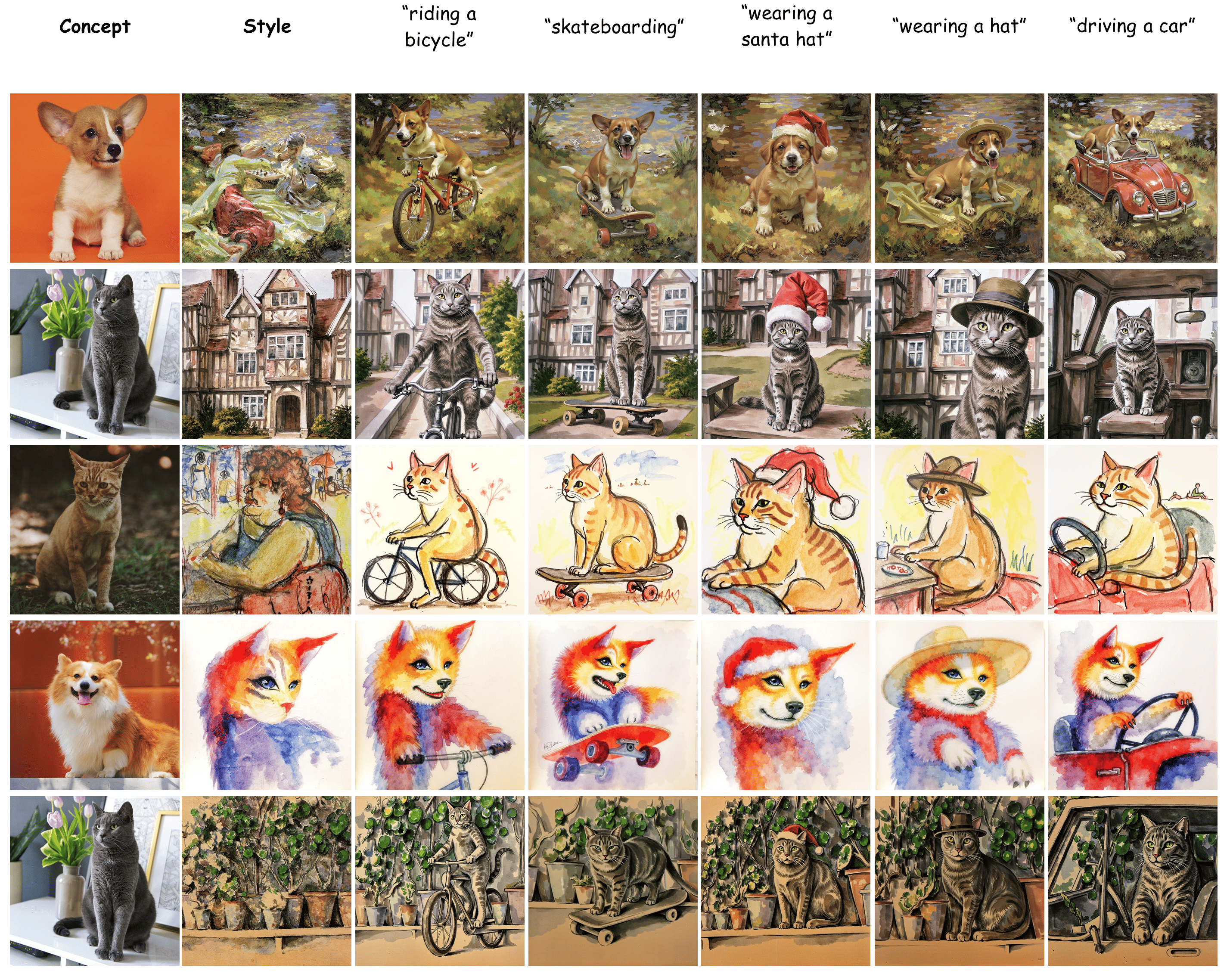}
    \caption{Qualitative results of \textbf{OrthoFuse} merging on the \textbf{FLUX} model. Each row shows generations produced from different prompts after merging a style adapter and a concept adapter. \textbf{OrthoFuse} maintains consistent concept preservation and style fidelity across prompts.}
\label{fig:flux_res}
\end{figure*}

We further evaluate OrthoFuse on the FLUX model to verify that our merging procedure can be applied to different model architectures. Figure \ref{fig:flux_res} shows qualitative generations obtained after merging a style adapter and a concept adapter in FLUX. Each row corresponds to a distinct style-concept pair, while each column shows outputs for different text prompts applied to the same merged adapters.

The results demonstrate that OrthoFuse produces stable and coherent merges across a diverse set of style–concept combinations. Even under varying prompts, the merged adapters consistently preserve the underlying concept while expressing the intended style.

\section{Necessity of Eigenvalue Rotation for High-Quality Merging}
\label{sec:blockwise_vs_rotation}

\begin{figure*}[t!]
  \centering
    \includegraphics[width=1.0\linewidth]
    {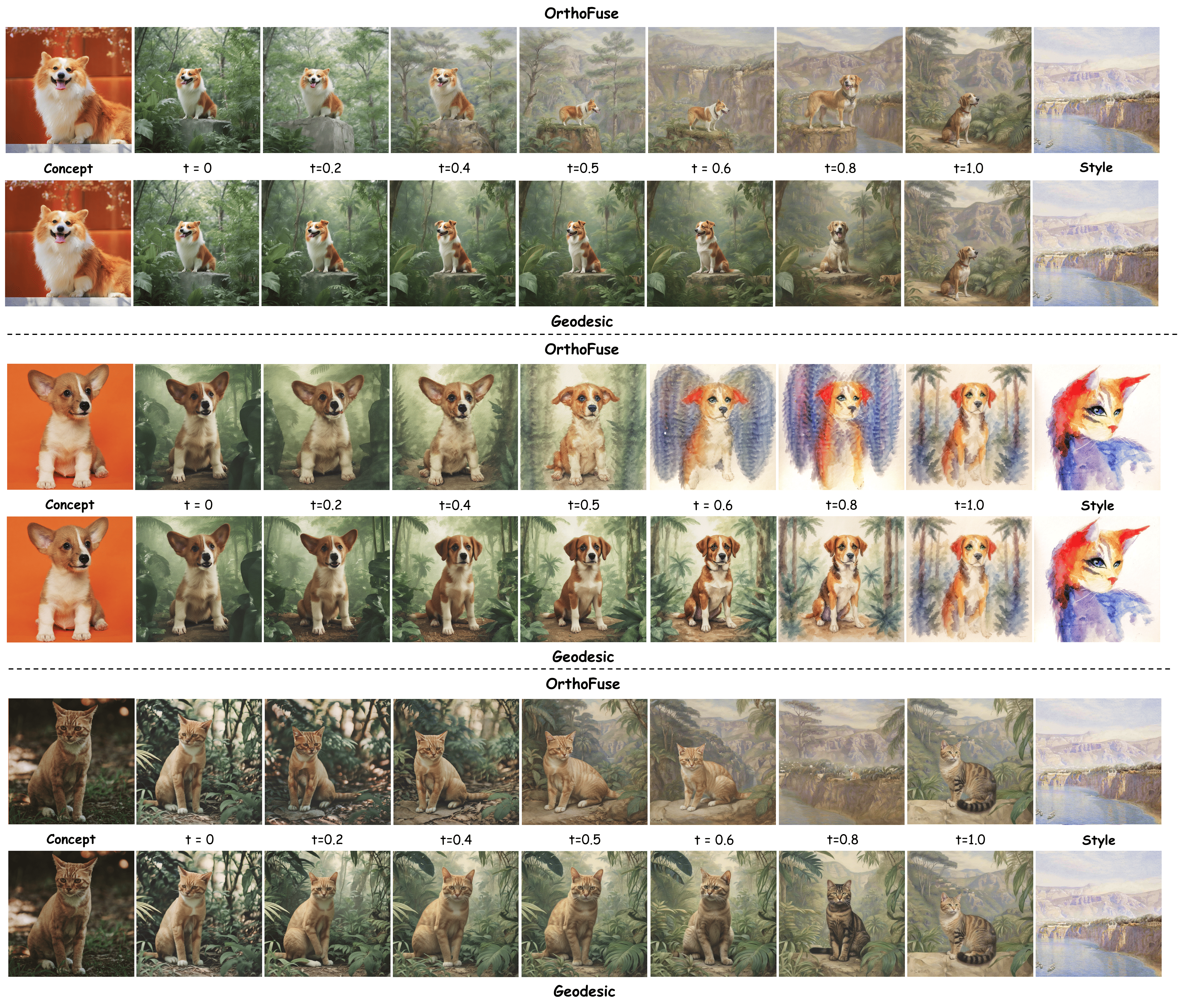}
    \caption{\textbf{Comparison of block-wise geodesic interpolation and OrthoFuse merging trajectories.} At $t=0.6$, OrthoFuse achieves near-ideal style transfer while preserving the target concept. All images were generated with the prompt: “A $<$concept$>$ $<$ superclass$>$ in jungle in $<$style$>$ style”.}
\label{fig:traj_comparison}
\end{figure*}

OrthoFuse method combines block-wise geodesic interpolation with \textit{spectra restoration}, which can be considered as a specific eigenvalue rotation along the unit sphere, preserving orthogonality (see Section \ref{sec:method} for more details). While block-wise geodesics provide a natural and accurate approximation of the real local minimizing geodesic in practice, we observe that fusing \lpr~orthogonal adapters with the help of block-wise geodesics only is insufficient for achieving high-quality semantic merging in diffusion models. Specifically, geodesic approximation via block-wise interpolation tends to drift away from the target concept and often fails to consistently align the style transformation across blocks.

Figure \ref{fig:traj_comparison} illustrates this effect by comparing the merging trajectories obtained with block-wise geodesic approximation and our full OrthoFuse procedure. At intermediate interpolation levels -- most clearly at $t=0.6$ -- the block-geodesic trajectory produces partially fused images where the transferred style is incomplete and the underlying concept begins to degrade. In contrast, OrthoFuse maintains both style fidelity and concept integrity, demonstrating that eigenvalue rotation plays a critical role in stabilizing the latent path and preventing semantic collapse.

All images in Figure~\ref{fig:traj_comparison} were generated with the prompt:
“A $<$concept$>$ $<$superclass$>$ in jungle in $<$style$>$ style.”

These results confirm that \textit{spectra restoration} operation is not merely an auxiliary refinement but an essential operation for producing coherent and high-quality merges.

\section{Additional Results on SDXL}
\label{sec:qual_app}
\begin{figure*}[t!]
  \centering
    \includegraphics[width=0.52\linewidth]
    {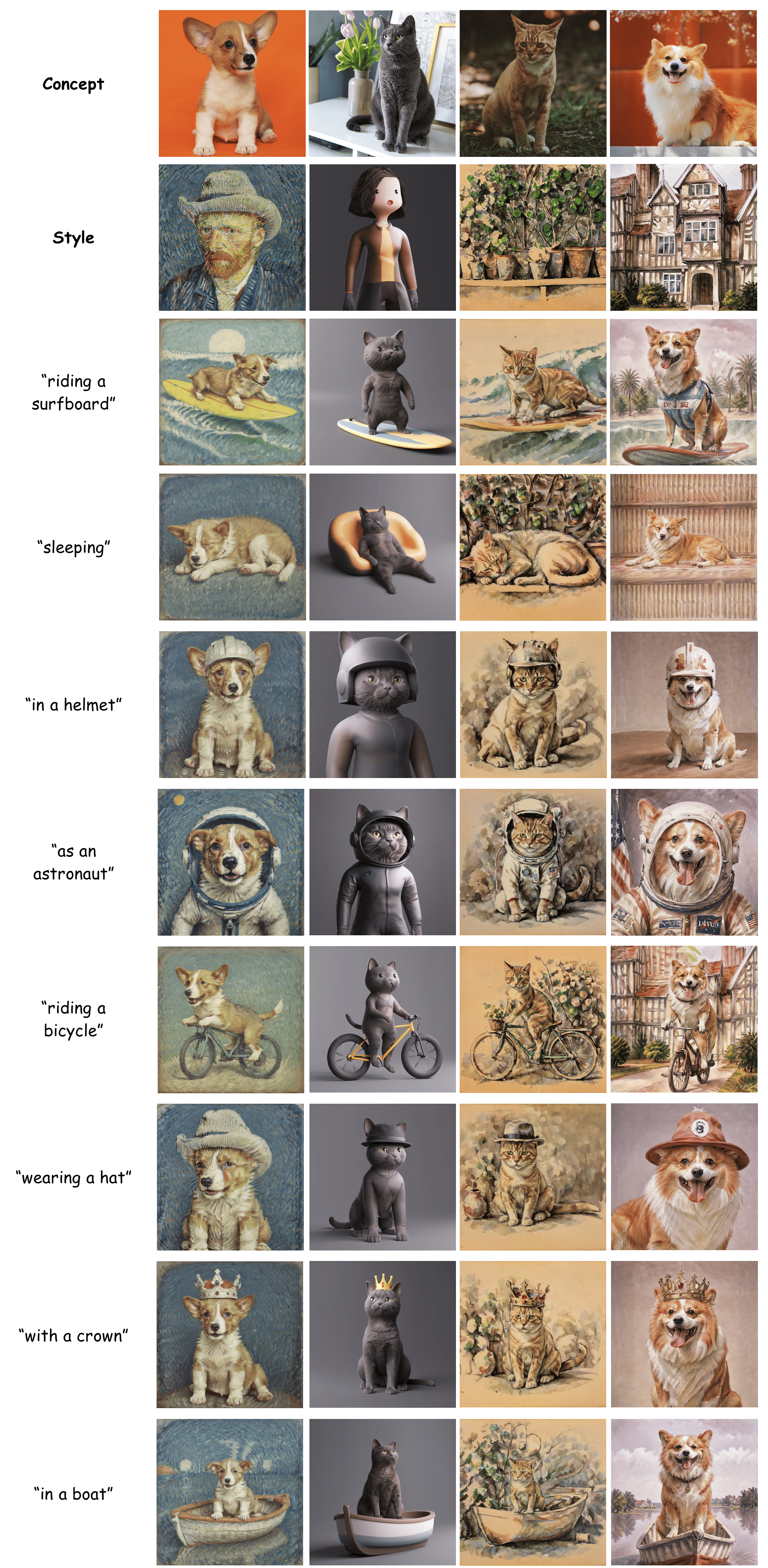}
    \caption{Qualitative results of \textbf{OrthoFuse} merging on \textbf{SDXL}. Rows correspond to different prompts; columns show generations obtained from different style--concept adapter pairs. OrthoFuse yields coherent style–concept merges across both prompts and adapter combinations.}
\label{fig:sdxl_prompts}
\end{figure*}
We additionally provide extended qualitative results on the SDXL backbone to complement the evaluations in the main paper. Figure \ref{fig:sdxl_prompts} presents generations obtained after merging a style adapter and a concept adapter within SDXL. In this visualization, each row corresponds to a different text prompt, while each column shows outputs for distinct style-concept adapter pairs applied to the same prompt.

Across all prompts and adapter configurations, OrthoFuse consistently achieves coherent style–concept integration, demonstrating strong concept preservation and stable expression of the intended style.

\section{Ablation study on other merging methods}
\subsection{Low-rank adapter merging}
\label{sec:low-rank_merge}

\begin{figure*}[t!]
  \centering
    \includegraphics[width=1.0\linewidth]
    {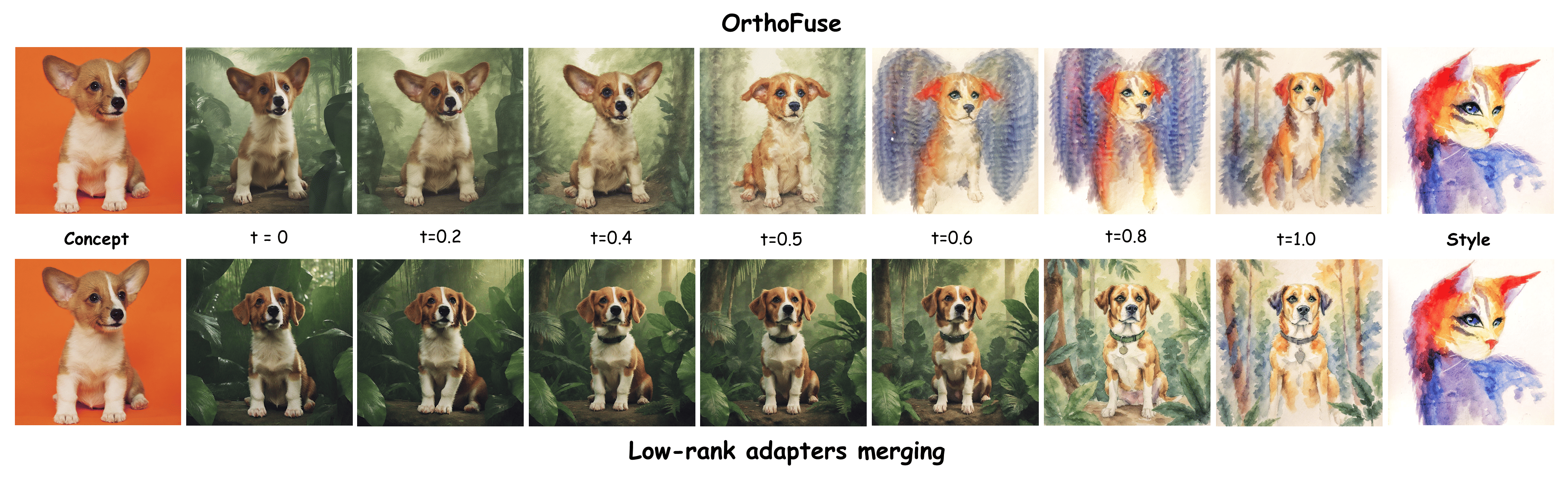}
    \caption{\textbf{Comparison of OrthoFuse (orthogonal adapters merging) with merging low-rank (LoRA) adapters.} Merging low-rank adapters results in noticeably weaker performance compared to OrthoFuse, even when both approaches are tuned to use approximately the same number of trainable parameters. We attribute this gap to the scale mismatch inherent to low-rank adapters, which makes their merging substantially more difficult. All images were generated with the prompt: “a $<$concept$>$ dog in the jungle in $<$style$>$ style”.}
\label{fig:lora_traj}
\end{figure*}

To highlight the applicability of orthogonal fine-tuning and extend the experimental scale, we provide a similar adapter merging experiment but for a fixed-rank manifold: we train low-rank adapters with LoRA for style and concept and try to merge them as fixed-rank manifold elements.
Assume that we aim to merge two low-rank matrices $X_{C}$ and $X_S$ which are low-rank weight updates for an arbitrary model layer:
\begin{equation}
    X_{C} = U_{C}V_{C}^{\top}, \quad X_{S} = U_{S}V_{S}^{\top},
\end{equation}
where $U_C, U_S \in \mathbb{R}^{n \times r}, V_C, V_S \in \mathbb{R}^{m \times r}$.
In the case of low-rank adapters, we aim to minimize the following objective: for $t \in [0, 1]$ we seek to optimize
\begin{equation}\label{eq:low_rank_manifold_merging}
t \cdot d^2_{\mathcal{M}_r}(X_{t}, X_S) + (1 - t) \cdot d^2_{\mathcal{M}_r}(X_{t}, X_C) \to \min_{\mathrm{rk}(X_{t}) = r},
\end{equation}
where $d_{\mathcal{M}_r}(\cdot, \cdot)$ denotes the distance along the manifold e.g. the shortest curve between two points along the manifold.
In our implementation, we replace the distance inside the manifold with the help of the Frobenius norm. 
Such a substitution is inspired by \citep{mataigne2025approximation}, which proposes certain theoretical guarantees on the closeness of such an approximation when the optimization is done with the change of the exact distance to its upper bound.
Having replaced the manifold distance with the Frobenius norm, we obtain the following minimization:
\begin{equation}
    t\|X_t - X_S\|_F^2 + (1 - t)\|X_t - X_C\|_F^2 \to \min_{\mathrm{rk}(X_{t}) = r}.
\end{equation}

This problem appears to be solved efficiently via ALS algorithm.
Indeed, for the current low-rank approximation of $X_t = U_t V_t^{\top}$ we are able to alternately update its skeleton factors with short recurrent formulas:
\begin{itemize}
    \item \textbf{V-step:} QR-decomposing $U = Q_UR$ and $X_{t} = Q_U\hat{V}$, we rewrite the task to the following one:
    \begin{equation}
        \|\hat{V}\|_F^2 - 2\langle \hat{V}^{\top}, Q_{U}(t X_{S} + (1 - t)X_{C})\rangle \to \min_{\hat{V}}
    \end{equation}
    Taking the gradient by $\hat{V}$, it gives us the update for $V$:
    \begin{equation}
    \begin{split}
        & 2\hat{V} - 2\left(Q_U^{\top}\left(tX_{C} + (1 - t)X_{S}\right)\right)^{\top} = 0 \Rightarrow \\ \Rightarrow & \hat{V} = (tX_{C} + (1 - t)X_{S})^{\top}Q_U
    \end{split}
    \end{equation}
    \item \textbf{U-step}: in a similar to $V$-step manner, one can obtain the following update rule for $U$: considering QR-decomposition of $V = Q_VR$ and $X_t = \hat{U}Q_V^{\top}$ we need to solve the same optimization problem
    \begin{equation}
        \|\hat{U}\|_F^2 - 2\langle \hat{U}^{\top}, Q_{V}(t X_{S} + (1 - t)X_{C})\rangle \to \min_{\hat{U}},
    \end{equation}
    from which we immediately obtain 
    \begin{equation}
        \hat{U} = (tX_{C} + (1 - t)X_{S})^{\top}Q_V.
    \end{equation}
\end{itemize}

It is worth mentioning that this problem can be easily generalized to the case of several low-rank adapters.
In this case, in $V$-step and $U$-step one need to replace the term $(tX_C - (1 - t)X_S)$ with the weighted sum of the corresponding low-rank adapters.

To validate this approach, in Figure \ref{fig:lora_traj} we report the empirical performance of the proposed merging method and compare it with OrthoFuse side by side. 
It can be observed that merging method applied to low-rank adapters performs worse than for \lpr~orthogonal adapters failing to preserve concept pattern and style fidelity.
To make the comparison fair, both methods were tuned using approximately the same number of parameters in corresponding parameter-efficient adapters.

\subsection{Orthogonal adapter merging via multiplication}

\begin{figure*}[t!]
  \centering
    \includegraphics[width=1.0\linewidth]
    {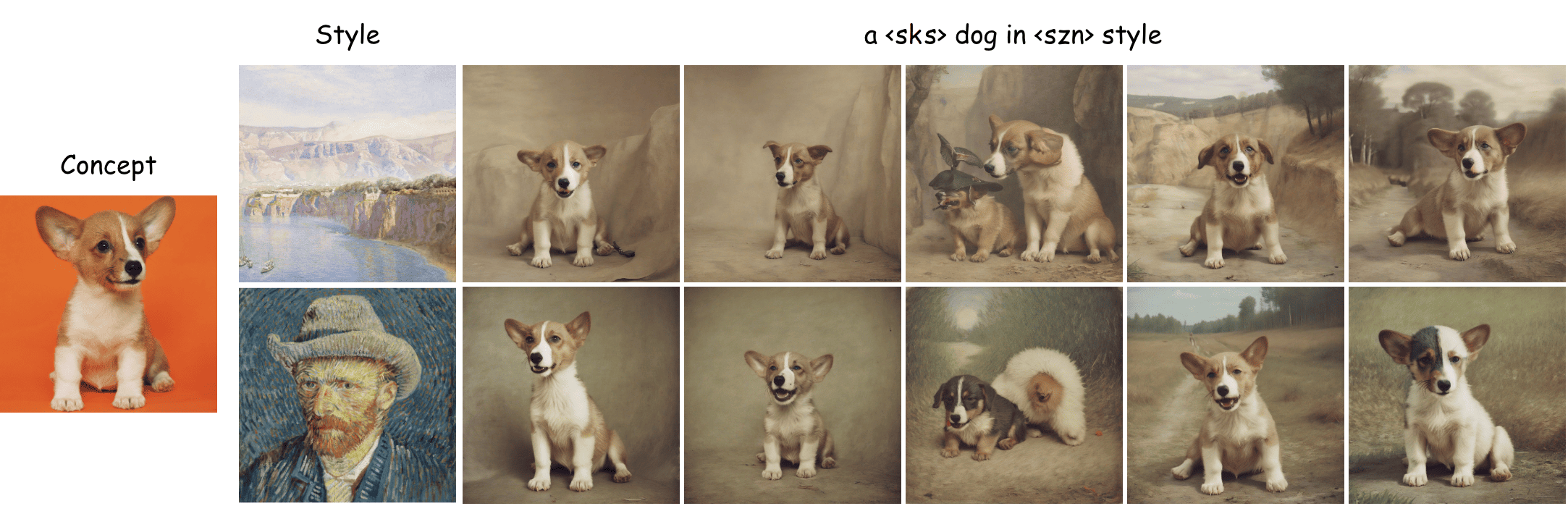}
    \caption{\textbf{Direct Merging via Multiplication.} The result of directly merging orthogonal adapters, accomplished through the multiplication of two \lpr~orthogonal matrices, exhibits limitations in style preservation, struggles to maintain color consistency, and has a negative impact on concept fidelity.}
\label{fig:direct_merging}
\end{figure*}

\begin{figure*}[ht!]
  \centering
   \includegraphics[width=1.0\linewidth]{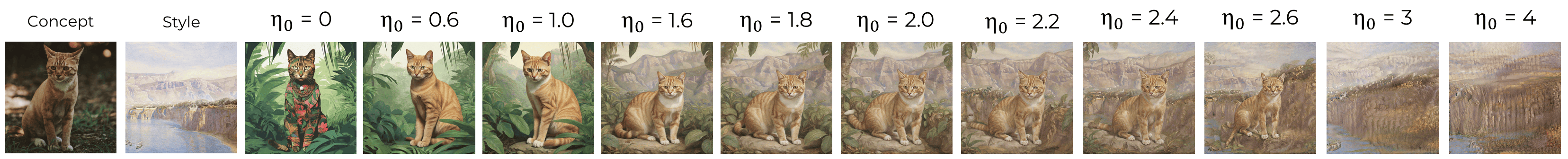}
   \caption{Ablation of the fusion parameter $\eta_0$.}
   \label{fig:eta_abl}
\end{figure*}

In order to additionally explain the motivation to take into account the geometry of the \lpr~orthogonal manifold, we try the evaluate the most trivial way to merge orthogonal adapters together by multiplying their weight updates. 
On Figure \ref{fig:direct_merging} we report images obtained by multiplying orthogonal weight updates of the concept and style respectively.
It can be seen that such an approach to fuse orthogonal adapters fails to preserve both style and concept patterns, which shows that for orthogonal adapters we need a more complicated approach which explicitly treats the structure of both orthogonal adapters.

\section{Ablation study of Fusion Parameters}
\label{sec:app_abl}

In this appendix, we present ablation studies analyzing the impact of two key fusion parameters, $\eta_0$ and $t$, on the performance of our proposed method.

\subsection{Ablation of $\eta_0$}

Figure~\ref{fig:eta_abl} illustrates the results of the ablation study for the fusion parameter $\eta_0$:
\begin{itemize}
    \item When $\eta_0 = 0$, the merged adapter collapses to the identity matrix, leading to no concept or style blending.
    \item At $\eta_0 = 1$, the model reduces to block-wise geodesic interpolation.
    \item The optimal performance is observed around ($\eta_0 \approx 2$), while larger values tend to degrade performance.
\end{itemize}

This analysis emphasizes the importance of appropriately selecting the fusion parameter $\eta_0$ to achieve the desired balance between concept and style.

\subsection{Ablation of $t$}

Figure~\ref{fig:sim_scores} presents the results of the ablation study for the fusion parameter $t$:
\begin{itemize}
    \item At $t = 0$, the method maximizes image similarity and minimizes style similarity.
    \item As $t$ increases, image similarity decreases while style similarity increases, with the maximum style similarity achieved at $t \approx 0.8$.
    \item Notably, the stylistic effects exhibited at \( t = 0.8 \) are stronger than those observed at \( t = 1 \). This is because, for \( t < 1 \), the eigenvalue transformation is applied, which can amplify the stylistic components. When \( t = 1 \), this transformation is disabled to recover the original style adapter from \eqref{eq:etafunc}, which can make the result at \( t = 0.8 \) appear stylistically stronger by comparison.
\end{itemize}

This behavior indicates the trade-off between image and style similarity, underscoring the significance of fine-tuning parameter $t$ for optimal performance.

\begin{figure}[h!]
  \centering
   \includegraphics[width=1.0\linewidth]{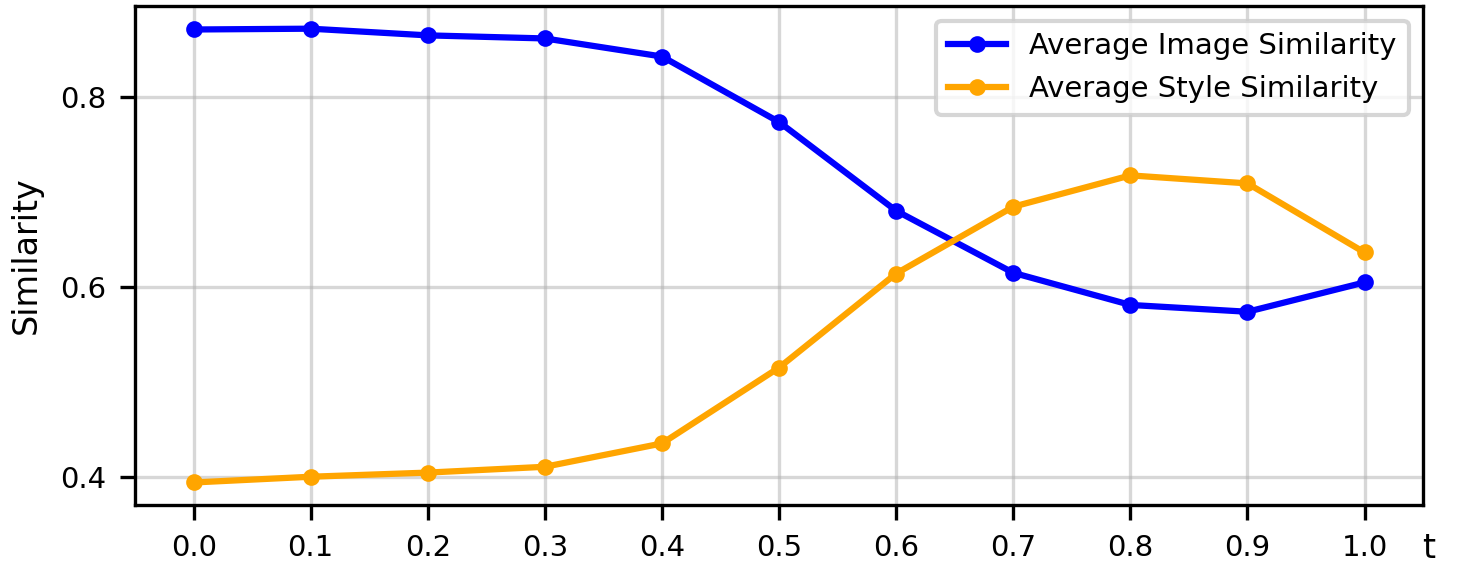}
   \caption{Ablation of the fusion parameter $t$.}
   \label{fig:sim_scores}
\end{figure}

\section{User Study}
\label{sec:app_user_study}

In this appendix, we provide further details about the user study conducted to evaluate the effectiveness of our proposed method compared to K-LoRA and ZipLoRA. To address the limitations of automatic metrics in assessing concept–style trade-offs, we designed a user study involving 65 participants. This study resulted in 1,460 pairwise comparisons across all images used in the evaluation.

Participants were asked to compare images generated by different methods applied to the same concept–style pair and respond to the following questions:

\textbf{Q1:} Which image better captures the features of the style?  
Evaluate whether the style is recognizable through visual characteristics (colors, textures, brush strokes, lines, etc.); can we say that the concept is genuinely represented in this style, rather than just slightly altered?

\textbf{Q2:} Which image better preserves the concept?  
Assess how well the original object (concept) is maintained; is it recognizable (shape, proportions, structure), and are important details retained? Please disregard any changes in pose.

\textbf{Q3:} Which method, in general, handled the task of style transfer to the concept better?    
Assess the overall result of the style transfer: 
\begin{itemize}
    \item Does it create the impression that the concept is naturally executed in the given style? 
    \item How well does the style harmonize with the object?  
\end{itemize}

If you do not see a difference regarding any question or are uncertain about your choice, please select "not sure."

\section{OrthoFuse Implementation Details}
\label{sec:app_code}

This section provides implementation details of the OrthoFuse algorithm used to construct the fused adapter $A(t)$ from independently trained concept and style adapters.

As described in Section~\ref{sec:gs}, both adapters are represented via block structures. 
In algorithms below we denote concept and style corresponding weight matrices $(D_C^{(i)}, D_S^{(i)})$. 
Importantly, $(D_C^{(i)}, D_S^{(i)})$ are weight matrices which are used to build skew-hermitian matrices for a subsequent Cayley transform application.

All merging operations are then performed independently on each corresponding pair of  blocks.

\subsection{Full OrthoFuse: Geodesic Block Interpolation}
The full OrthoFuse method performs interpolation along the geodesic in the orthogonal group for each block.

For every pair $\big(B_C^{(i)}, B_S^{(i)})$ we compute:

\begin{equation}
\widetilde{B}^{(i)}(t) = \text{Geodesic}\big(B_C^{(i)}, B_S^{(i)}, t\big).
\end{equation}

In practice, the geodesic is computed via:
\begin{enumerate}
    \item Conversion to skew-symmetric generators using the Cayley parameterization;
    \item Spectral decomposition of $B_S^\top B_C$;
    \item Logarithmic interpolation in the Lie algebra;
    \item Exponential map back to the orthogonal group
\end{enumerate}

The corresponding pseudocode for a single block is shown below.

\begin{algorithm}[h!tp]
	\caption{OrthoFuse merging}
\begin{algorithmic}[1]
	\Require $D_C, D_S$.
    \State $K_C = \frac{D_C-D_C^{\top}}{2};$ $K_S = \frac{D_S-D_S^{\top}}{2}$;
    \State $B_C = \texttt{torch.linalg.solve}((I-K_C)(I+K_C));$ $B_S = \texttt{torch.linalg.solve}((I-K_S)(I+K_S))$;
	\State $\Lambda, U = \texttt{torch.linalg.eig}(B_S^{\top}B_C)$;
	\State $\Lambda_{\text{log}} = \log(\Lambda).\texttt{imag} \cdot i$;
	\State $B_t = B_C\texttt{torch.linalg.matrix\_exp}(-t \cdot U \Lambda_{\text{log}} U^*).\texttt{real}$;\\
\Return $B_t$.
\end{algorithmic}
\end{algorithm}

The postprocessing step is defined as follows.

\begin{algorithm}[h!tp]
	\caption{OrthoFuse postprocess}
\begin{algorithmic}[1]
	\Require $B_t$.
    \State $ \eta = 1 + 4t(1 - t)$
    \State $Q = \eta B_t/2$;
    \State $Q^{skew} = \frac{Q-Q^{\top}}{2}$;
    \State $Q = \texttt{torch.linalg.solve}((I-Q^{skew})(I+Q^{skew}))$;\\
\Return $Q$.
\end{algorithmic}
\end{algorithm}

Overall, the full OrthoFuse procedure is defined as:
\texttt{OrthoFuse} =
\texttt{OrthoFuseMerging} + \texttt{OrthoFusePostprocess}.

\subsection{Accelerated OrthoFuse (Merge Inside Cayley Space)}

We also implement a computationally efficient approximation.

Instead of performing geodesic interpolation in 
$\mathrm{O}(k)$, we interpolate directly in the Cayley parameter space:

\begin{equation}
D^{(i)}_{merge} = tD_{C}^{(i)} + (1-t)D_{S}^{(i)}
\end{equation}

The merged skew-symmetric block is then mapped to the orthogonal group via the Cayley transform.

\begin{algorithm}[h!tp]
	\caption{OrthoFuse: merge inside Cayley space}
\begin{algorithmic}[1]
	\Require $D_C, D_S$.
    \State $D_{merge} = tD_C + (1-t)D_S$
    \State $K_{merge} = \frac{D_{merge}-D_{merge}^{\top}}{2}$;
    \State $B_t = \texttt{torch.linalg.solve}((I-K_{merge})(I+K_{merge}))$;\\
\Return $B_t$.
\end{algorithmic}
\end{algorithm}

\begin{figure*}[t!]
  \centering
    \includegraphics[width=1.0\linewidth]
    {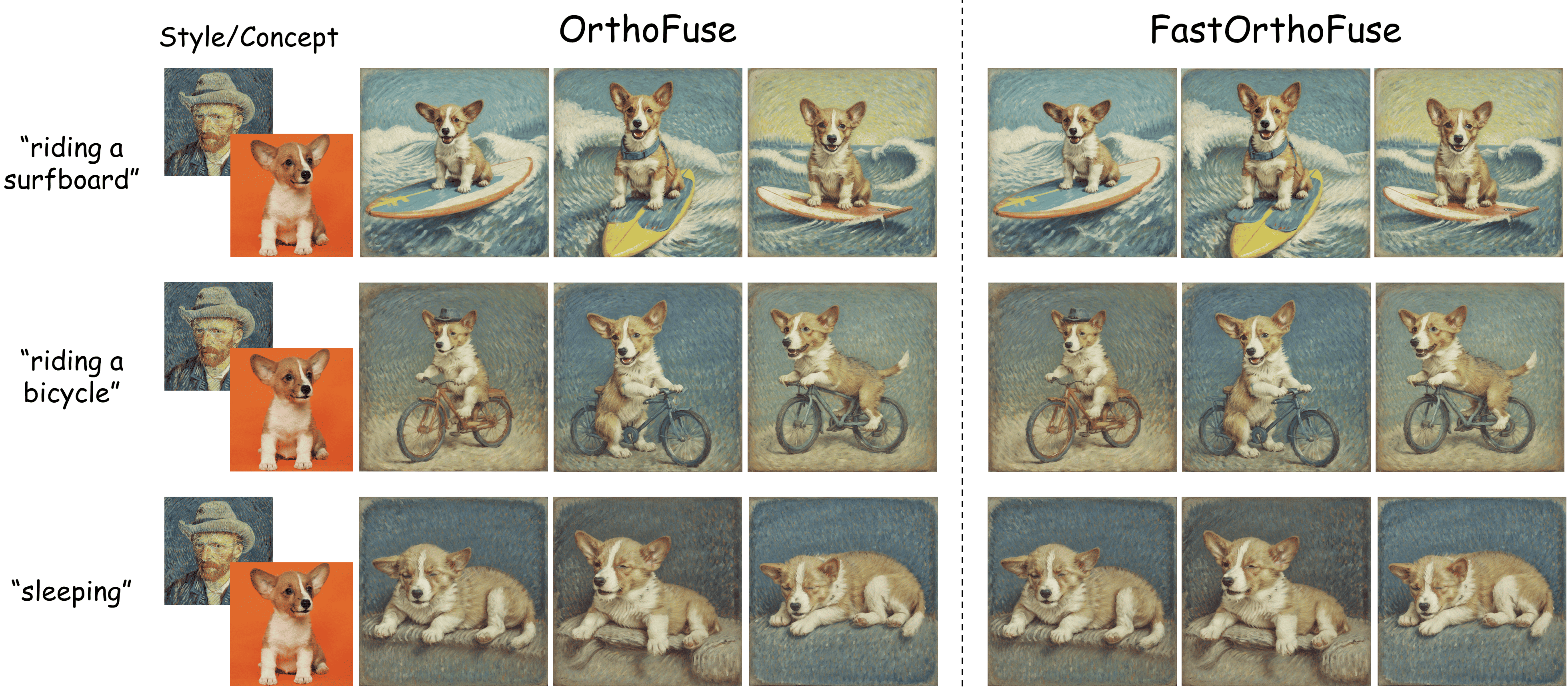}
    \caption{
    \textbf{Comparative analysis of OrthoFuse and its accelerated version}. Using identical concept and style adapters and the same fusion parameter $t$, both methods produce visually indistinguishable results. The accelerated version removes the eigendecomposition step while preserving identity and style fidelity. We note, however, that the two methods are not strictly identical; for example, in the first row, the dog's right paw in the OrthoFuse result is not fully placed on the surfboard, whereas in FastOrthoFuse it is.    
    }
\label{fig:fast_vs_full}
\end{figure*}

The full accelerated pipeline is therefore:

\texttt{FastOrthoFuse} =
\texttt{MergeInsideCayleySpace} + \texttt{OrthoFusePostprocess}.

\subsection{Theoretical Justification of Accelerated OrthoFuse}

First, we show that the matrix logarithm is correctly defined.

\begin{lemma} \label{lem:log_existence}
Assume orthogonal matrices $B_S, B_C \in \mathrm{SO}(n)$ satisfy $\|B_S-I\|_2 \le \varepsilon < 1$ and $\|B_C-I\|_2 \le \varepsilon < 1$. Then the matrix logarithm $\log(B_S^\top B_C)$ is well-defined.
\end{lemma}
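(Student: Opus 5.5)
The plan is to show that $M := B_S^\top B_C$ is itself an orthogonal matrix close enough to $I$ that none of its eigenvalues equals $-1$. The principal logarithm is then well defined and, because $M$ is real, it is also real and skew-symmetric.

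First, write $M - I = B_S^\top(B_C - I) + (B_S^\top - I) = B_S^\top(B_C - I) + (B_S - I)^\top$. Since $B_S^\top$ is orthogonal, it preserves the spectral norm, and $\|X^\top\|_2 = \|X\|_2$. The triangle inequality therefore gives
\[
\|M - I\|_2 \le \|B_C - I\|_2 + \|B_S - I\|_2 \le 2\varepsilon .
\]
Also, $M \in \mathrm{SO}(n)$, being a product of elements of $\mathrm{SO}(n)$.

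Next, use the normality of $M$. It is unitarily diagonalizable as $M = U\Lambda U^*$ with eigenvalues $\lambda_j = e^{i\phi_j}$ on the unit circle, and $\|M - I\|_2 = \max_j |\lambda_j - 1|$. If $\lambda_j = -1$ for some $j$, then $|\lambda_j - 1| = 2$, which would force $2 \le 2\varepsilon$. This contradicts $\varepsilon < 1$. So every $\lambda_j$ lies off the negative real axis: more precisely, $|e^{i\phi_j} - 1| = 2|\sin(\phi_j/2)| \le 2\varepsilon < 2$ gives $|\phi_j| \le 2\arcsin\varepsilon < \pi$.

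Finally, define $\log(M) = U \operatorname{diag}(i\phi_1, \dots, i\phi_n) U^*$ with $\phi_j \in (-\pi, \pi)$. This is the principal logarithm, which is well defined because no eigenvalue lies on $(-\infty, 0]$. Non-real eigenvalues of the real matrix $M$ come in conjugate pairs, and conjugate eigenvalues receive conjugate logarithms, so $\log(M)$ is real. Since $\log(M)$ is normal and has purely imaginary spectrum, it is also skew-symmetric. This matches the use of $\log$ in \eqref{eq:lmg} and in the proof of Proposition~\ref{prop:log_orth}.

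The only delicate point is the exclusion of the eigenvalue $-1$, which is exactly where the strict inequality $\varepsilon < 1$ enters. A weaker bound such as $\|B_C - I\|_2 \le 2$ would not suffice, since it is automatic for any orthogonal matrix.
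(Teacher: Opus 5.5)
Your proof is correct and follows essentially the paper's argument. You bound $\|B_S^\top B_C - I\|_2 \le \|B_C - I\|_2 + \|B_S - I\|_2 \le 2\varepsilon < 2$ by the triangle inequality and orthogonal invariance, which rules out the eigenvalue $-1$ of the orthogonal matrix $B_S^\top B_C$; your extra details (the phase bound $|\phi_j| \le 2\arcsin\varepsilon$ and the realness and skew-symmetry of the principal logarithm) go slightly beyond what the paper writes but are consistent with how $\log$ is used later.
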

\begin{proof}
Using the triangle inequality and submultiplicativity, we bound the spectral norm of the difference:
$$\begin{aligned} \|B_S^\top B_C-I\|_2 &\le \|B_S^\top-I\|_2+\|B_C-I\|_2  \le 2\varepsilon. \end{aligned}$$
Since $B_S^\top B_C$ is orthogonal, its eigenvalues lie on the unit circle. If $-1$ were an eigenvalue, the distance to the identity $I$ would be at least $|-1-1|=2$. Hence, whenever $2\varepsilon<2$, the matrix $B_S^\top B_C$ has no eigenvalues equal to $-1$.
\end{proof}

Now we make use of the following auxiliary lemma.
\begin{lemma}\label{thm:bch}
For sufficiently small matrices $X$ and $Y$ with $\|X\|_2, \|Y\|_2 = \mathcal{O}(\varepsilon)$, the matrix logarithm of their exponential product is given by:
$$\log(\exp(X)\exp(Y))=X+Y+\frac{1}{2}[X,Y]+\mathcal{O}(\varepsilon^3)$$
\end{lemma}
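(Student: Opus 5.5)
We plan to prove the statement (the second-order Baker--Campbell--Hausdorff formula) by expanding both exponentials in power series, multiplying, and then applying the series of the matrix logarithm near the identity. The only work is bookkeeping of the terms up to second order and bounding the remainders uniformly by $\mathcal{O}(\varepsilon^3)$.

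\textbf{Expanding the product.} First we write
\[
\exp(X) = I + X + \tfrac12 X^2 + R_X, \qquad \|R_X\|_2 \le \sum_{k\ge 3}\frac{\|X\|_2^k}{k!} = \mathcal{O}(\varepsilon^3),
\]
and similarly for $\exp(Y)$. Multiplying, we set
\[
\exp(X)\exp(Y) = I + Z, \qquad Z = X + Y + \tfrac12 X^2 + XY + \tfrac12 Y^2 + \mathcal{O}(\varepsilon^3).
\]
All cross terms of total degree $\ge 3$ are bounded by products of norms, hence $\mathcal{O}(\varepsilon^3)$. In particular $\|Z\|_2 = \mathcal{O}(\varepsilon)$, so for $\varepsilon$ small enough $\|Z\|_2 < 1$.

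\textbf{Applying the logarithm.} Since $\|Z\|_2<1$, the principal logarithm is given by the convergent series
\[
\log(I+Z) = Z - \tfrac12 Z^2 + \sum_{k\ge3} \frac{(-1)^{k+1}}{k} Z^k .
\]
The tail is bounded by $\sum_{k\ge 3}\|Z\|_2^k/k = \mathcal{O}(\varepsilon^3)$. For the quadratic term,
\[
Z^2 = (X+Y)^2 + \mathcal{O}(\varepsilon^3) = X^2 + XY + YX + Y^2 + \mathcal{O}(\varepsilon^3),
\]
so $-\tfrac12 Z^2$ contributes $-\tfrac12 X^2 - \tfrac12 XY - \tfrac12 YX - \tfrac12 Y^2$. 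Adding $Z$, the terms $\tfrac12 X^2$ and $\tfrac12 Y^2$ cancel, and what remains is
\[
X + Y + XY - \tfrac12 XY - \tfrac12 YX = X + Y + \tfrac12 [X,Y],
\]
which is the claim.

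\textbf{Main obstacle.} The one delicate point is justifying that the series logarithm agrees with the principal matrix logarithm used elsewhere in the paper. This holds because for $\|Z\|_2<1$ the series defines a matrix $L$ with $\exp(L)=I+Z$ whose eigenvalues have imaginary parts in $(-\pi,\pi)$, and that characterizes the principal branch. A second minor point is making the $\mathcal{O}$-constants uniform: all bounds depend only on $\|X\|_2$ and $\|Y\|_2$ through convergent scalar series, so the remainder constant can be taken absolute for $\varepsilon$ below a fixed threshold such as $\varepsilon\le 1/4$.
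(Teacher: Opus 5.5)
Your proof is correct, but it takes a different route from the paper. The paper's proof is a citation plus a heuristic. It invokes the full Baker--Campbell--Hausdorff series from Lee and notes that $\|[X,Y]\|_2 \le 2\|X\|_2\|Y\|_2 = \mathcal{O}(\varepsilon^2)$. It then argues by induction that each further level of commutator nesting gains another factor of $\varepsilon$, so everything beyond $\frac12[X,Y]$ is $\mathcal{O}(\varepsilon^3)$.

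You avoid BCH altogether. You truncate the exponential series of each factor, multiply, and feed $Z=\exp(X)\exp(Y)-I$ into the logarithm series. The check is then that $\frac12X^2$ and $\frac12Y^2$ cancel against $-\frac12Z^2$, and that $XY-\frac12(XY+YX)=\frac12[X,Y]$.

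Your version is more elementary and also more complete analytically. The paper's term-by-term observation does not by itself control the infinite sum of higher BCH terms. That would require convergence of the series, e.g.\ for $\|X\|_2+\|Y\|_2<\log 2$, together with a tail estimate. You bound every tail explicitly through convergent scalar series. You also justify that the series logarithm coincides with the principal branch used elsewhere in the paper, which the paper never addresses.

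What the paper's route buys is structure. It exhibits the remainder as a series of nested commutators, which makes the higher-order behaviour transparent. It also extends to any order without redoing the product-and-logarithm bookkeeping.
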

\begin{proof}
This follows from well-known (see, for example, \cite{Lee}) Baker-Campbell-Hausdorff formula. By submultiplicativity, the norm of the commutator satisfies $\|[X,Y]\|_2 \le 2\|X\|_2\|Y\|_2 = \mathcal{O}(\varepsilon^2)$. Consequently, nested commutators such as $[X,[X,Y]]$ inherently possess a higher order of smallness $\mathcal{O}(\varepsilon^3)$. Similarly arguing by induction, it is straightforward to establish that each subsequent nesting of the commutator increases the order of smallness.
\end{proof}

Finally, we prove that geodesics can be approximated by a connection in the space of skew‑symmetric matrices.
\begin{prop} \label{prop:cayley_geodesic_approx}
Let $B_C, B_S \in \mathrm{SO}(n)$ be orthogonal matrices parameterized by skew-symmetric matrices $D_C, D_S \in \mathfrak{so}(n)$ via the Cayley transform: $B_C=\mathrm{Cayley}(D_C)$ and $B_S=\mathrm{Cayley}(D_S)$, where $\|D_C\|_2, \|D_S\|_2 = \mathcal{O}(\varepsilon)$. The Cayley transform of their linearly interpolated generators approximates the exact Riemannian geodesic $B(t)=B_C\exp(-t\log(B_S^\top B_C))$ up to a third-order error:
$$B(t)=\mathrm{Cayley}((1-t)D_C+tD_S)+\mathcal{O}(\varepsilon^3)$$
\end{prop}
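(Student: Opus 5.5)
The plan is to pass everything to the Lie algebra $\mathfrak{so}(n)$ through exact logarithms, use Lemma~\ref{thm:bch} twice to linearise the geodesic, and convert back using Proposition~\ref{prop:exp_pade}. The key observation is that the Cayley transform is the $(1,1)$ Padé approximant of the exponential at a doubled argument. Applying Proposition~\ref{prop:exp_pade} with $K=2D$ and $t=1$ (or directly the odd series $\log\mathrm{Cayley}(D)=2(D+D^3/3+\dots)$, valid for $\|D\|_2<1$) gives
\[
\mathrm{Cayley}(D)=\exp(2D)+\mathcal{O}(\|D\|_2^3).
\]
Accordingly, I set $X:=\log B_C$ and $Y:=\log B_S$, which are well defined by Lemma~\ref{lem:log_existence} since $B_C,B_S$ are $\mathcal{O}(\varepsilon)$-close to $I$. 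Then $X=2D_C+\mathcal{O}(\varepsilon^3)$ and $Y=2D_S+\mathcal{O}(\varepsilon^3)$, both skew-symmetric, with $\|X\|_2,\|Y\|_2=\mathcal{O}(\varepsilon)$.

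Next I rewrite the geodesic as $B(t)=B_C\exp\bigl(t\log(B_C^\top B_S)\bigr)$. This uses $-\log(B_S^\top B_C)=\log\bigl((B_S^\top B_C)^{-1}\bigr)=\log(B_C^\top B_S)$ for orthogonal matrices near $I$. Applying Lemma~\ref{thm:bch} to $B_C^\top B_S=\exp(-X)\exp(Y)$ gives
\[
Z:=\log(B_C^\top B_S)=Y-X-\tfrac12[X,Y]+\mathcal{O}(\varepsilon^3),
\qquad \|Z\|_2=\mathcal{O}(\varepsilon).
\]
Then $B(t)=\exp(X)\exp(tZ)$, and a second application of Lemma~\ref{thm:bch} yields
\[
\log B(t)=X+tZ+\tfrac t2[X,Z]+\mathcal{O}(\varepsilon^3).
\]
Inside the commutator I may replace $Z$ by $Y-X$ at cost $\mathcal{O}(\varepsilon^3)$, and $[X,Y-X]=[X,Y]$. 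The two commutator terms $-\tfrac t2[X,Y]$ and $+\tfrac t2[X,Y]$ therefore cancel, leaving
\[
\log B(t)=(1-t)X+tY+\mathcal{O}(\varepsilon^3)=2\bigl((1-t)D_C+tD_S\bigr)+\mathcal{O}(\varepsilon^3),
\]
uniformly for $t\in[0,1]$.

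Finally, I exponentiate. Because $\exp$ is Lipschitz on a bounded neighbourhood of $0$, the $\mathcal{O}(\varepsilon^3)$ perturbation in the exponent survives as an $\mathcal{O}(\varepsilon^3)$ perturbation of the matrix. Hence $B(t)=\exp\bigl(2((1-t)D_C+tD_S)\bigr)+\mathcal{O}(\varepsilon^3)$. The Padé identity above, applied to $D=(1-t)D_C+tD_S$ with $\|D\|_2=\mathcal{O}(\varepsilon)$, turns this into $\mathrm{Cayley}\bigl((1-t)D_C+tD_S\bigr)+\mathcal{O}(\varepsilon^3)$.

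I expect the main obstacle to be the bookkeeping of the remainders rather than any single step. Each remainder must be uniform in $t\in[0,1]$, and Lemma~\ref{thm:bch} must be applied with arguments that are genuinely $\mathcal{O}(\varepsilon)$. This is why I bound $\|Z\|_2$ before the second BCH step. I will also be careful that the lower-order discrepancy $X-2D_C=\mathcal{O}(\varepsilon^3)$ only enters linearly, so it does not spoil the order.

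I also note that the interpolation convention in the statement is $(1-t)D_C+tD_S$, so that $t=0$ gives the concept adapter. This differs from the weighting written in the accelerated algorithm above, and the proof should follow the statement's convention.
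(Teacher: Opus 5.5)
Your proof is correct and follows essentially the same route as the paper: rewrite the Cayley endpoints as exponentials, apply the BCH lemma twice so that the two second-order commutator terms $\mp\tfrac{t}{2}[X,Y]$ cancel, and convert back with the Padé/Cayley relation (you use exact logarithms $X=\log B_C$, $Y=\log B_S$ where the paper substitutes approximate endpoints, which is a cosmetic difference). Your factor of two is right: with the paper's definition $(I-K)^{-1}(I+K)$ one has $\mathrm{Cayley}(D)=\exp(2D)+\mathcal{O}(\|D\|_2^3)$, whereas the paper's proof asserts $\mathrm{Cayley}(K)=\exp(K)+\mathcal{O}(\varepsilon^3)$ --- a slip that is harmless only because linear interpolation of the generators commutes with the rescaling $D\mapsto 2D$.
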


\begin{proof}
Recall that the Cayley transform matches the matrix exponential up to the second order: $\mathrm{Cayley}(K)=\exp(K)+\mathcal{O}(\varepsilon^3)$. We rewrite our endpoints as $B_C=\exp(D_C)+\mathcal{O}(\varepsilon^3)$ and $B_S^\top=\mathrm{Cayley}(-D_S)=\exp(-D_S)+\mathcal{O}(\varepsilon^3)$. Applying Lemma \ref{thm:bch}, we approximate the logarithm term:
$$\begin{aligned} \log(B_S^\top B_C) &= \log(\exp(-D_S)\exp(D_C)) \\ &= D_C-D_S-\frac{1}{2}[D_S,D_C]+\mathcal{O}(\varepsilon^3) \end{aligned}$$
Substituting this into the geodesic equation yields $B(t)=\exp(D_C)\exp(V)+\mathcal{O}(\varepsilon^3)$, where the exponent is defined as $V=-t(D_C-D_S)+\frac{t}{2}[D_S,D_C]$. We apply Lemma \ref{thm:bch} a second time to combine these into a single exponential $\exp(Z)$, where $Z=D_C+V+\frac{1}{2}[D_C,V]$. 

Given that $\frac{1}{2}[D_C, V] = \frac{1}{2}[D_C, -t(D_C - D_S)] + \mathcal{O}(\varepsilon^3)$, we expand $Z$:
$$\begin{aligned} Z &= D_C-t(D_C-D_S)+\frac{t}{2}[D_S,D_C] \\ &\quad +\frac{1}{2}[D_C,-t(D_C-D_S)] + \mathcal{O}(\varepsilon^3) \end{aligned}$$
Due to the anti-symmetry of the Lie bracket, the second-order commutators perfectly cancel each other out:
$$\frac{1}{2}[D_C,-t(D_C-D_S)]=\frac{t}{2}[D_C,D_S]=-\frac{t}{2}[D_S,D_C]$$
This exact cancellation reduces the exponent to $Z=(1-t)D_C+tD_S$. Therefore, the geodesic simplifies to $B(t)=\exp((1-t)D_C+tD_S)+\mathcal{O}(\varepsilon^3)$. Applying the Padé equivalence $\exp(Z)=\mathrm{Cayley}(Z)+\mathcal{O}(\varepsilon^3)$ once more concludes the proof.
\end{proof}

Another way to think about this is as follows. After training the adapters using skew‑symmetric matrices, we obtain their final representations and then apply the Cayley transform (a retraction) to obtain orthogonal matrices. Thus, performing linear interpolation in the skew‑symmetric parameter space corresponds to mixing weights --- a concept reminiscent of task arithmetic (see, e.g., \cite{ilharco2023editing}).

\subsection{Computational Considerations}

The computational bottleneck of the full OrthoFuse algorithm is the eigendecomposition step:

\begin{equation}
\Lambda, U = \texttt{torch.linalg.eigh}(Q_S^\top Q_C).
\end{equation}

For the matrix sizes used in our adapters, this operation accounts for approximately \textbf{90\% of the total merging time}. In PyTorch, this routine is not efficiently parallelized in our setting and dominates the wall-clock runtime.

The accelerated variant completely removes the eigendecomposition. All remaining operations (matrix multiplications, linear solves, and matrix exponentials) are efficiently parallelized, leading to a substantial speedup. In practice, the fast version performs adapter merging in under one second while nonaccelerated version works in 90 seconds. 

\subsection{Empirical Observation}

Despite the simplification, the accelerated version produces images that are visually almost indistinguishable from those obtained using the full geodesic interpolation (see Figure~\ref{fig:fast_vs_full}). In our experiments, we observe no meaningful degradation in identity preservation or style transfer quality, while achieving a significant reduction in computational cost.

\section{Limitations}

While OrthoFuse is training-free, it requires adapters to be in \( \mathcal{GS} \)-orthogonal form. Most community adapters are standard LoRAs; applying our method directly to them would need a projection, which risks losing the information encoded in the original LoRA weights. Extending our fusion to LoRA weights is a promising future direction.